\documentclass{article}

\usepackage{graphicx}
\usepackage{booktabs} % for professional tables
\usepackage{tabularx}
\usepackage{makecell}
\PassOptionsToPackage{hyphens}{url}\usepackage{hyperref}

\usepackage[preprint]{icml2026}
\usepackage{algorithm}
\usepackage{algorithmic}

\usepackage{amsmath}
\usepackage{amssymb}
\usepackage{mathtools}
\usepackage{amsthm}
\usepackage{thmtools} 
\usepackage{thm-restate}
\usepackage{fancybox,framed}
\usepackage{siunitx}
\usepackage{physics}
\usepackage{nicefrac}
\newcommand{\expect}[1]{\mathbb{E}\left[#1\right]}

\usepackage{nameref}
\renewcommand{\algorithmicrequire}{\textbf{Input:}}
\renewcommand{\algorithmicensure}{\textbf{Output:}}
\usepackage[capitalize,nameinlink,noabbrev]{cleveref}
\crefname{assumption}{Assumption}{Assumptions}
\crefname{question}{Question}{Questions}
\crefname{contribution}{Contribution}{Contributions}
\newcommand{\E}{\mathbb{E}}

\newcommand{\Id}{\mathbf{I}_d}

\newcommand{\bstar}{\beta^\star}
\newcommand{\Ball}[2]{B\!\left(#1,#2\right)}

\newcommand{\inner}[2]{\langle #1, #2 \rangle}
\theoremstyle{plain}
\newtheorem{theorem}{Theorem}[section]

\newtheorem{lemma}{Lemma}[section]

\newtheorem{corollary}{Corollary}[section]
\theoremstyle{definition}
\newtheorem{definition}{Definition}[section]
\newtheorem{contribution}{Contribution}
\newtheorem{question}{Question}
\newtheorem{assumption}{Assumption}[section]
\theoremstyle{remark}
\newtheorem{remark}{Remark}[section]



\usepackage[table]{xcolor}
\usepackage{array}
\usepackage{amssymb} % For nicer checkmarks/crosses if needed, or pifont

\usepackage{pifont}
\newcommand{\cmark}{\ding{51}} % Check
\newcommand{\xmark}{\ding{55}} % Cross
\newcommand{\pmark}{\ding{109}}

\usepackage{setspace}
\icmltitlerunning{Robust SIMs beyond Monotonicity}

\begin{document}

\onecolumn

  % \icmltitle{Robust Single Index Models beyond Monotonicity\\ under Strong Adversarial Corruption}
  \icmltitle{Convex Basins in Single-Index Model Loss Landscapes:\\Applications to Robust Recovery under Strong Adversarial Corruption}
  % It is OKAY to include author information, even for blind submissions: the
  % style file will automatically remove it for you unless you've provided
  % the [accepted] option to the icml2026 package.

  % List of affiliations: The first argument should be a (short) identifier you
  % will use later to specify author affiliations Academic affiliations
  % should list Department, University, City, Region, Country Industry
  % affiliations should list Company, City, Region, Country

  % You can specify symbols, otherwise they are numbered in order. Ideally, you
  % should not use this facility. Affiliations will be numbered in order of
  % appearance and this is the preferred way.
  \icmlsetsymbol{equal}{*}

  \begin{icmlauthorlist}
    \icmlauthor{Santanu Das}{yyy,equal}
    \icmlauthor{Sagnik Chatterjee}{yyy,equal}
    \icmlauthor{Jatin Batra}{yyy}
    
  \end{icmlauthorlist}

  \icmlaffiliation{yyy}{School of Technology and Computer Science, Tata Institute of Fundamental Research, Mumbai, India}

  \icmlcorrespondingauthor{Santanu Das}{dassantanu315@gmail.com}
  % \icmlcorrespondingauthor{Sagnik Chatterjee}{chatsagnik@gmail.com}

  % You may provide any keywords that you find helpful for describing your
  % paper; these are used to populate the "keywords" metadata in the PDF but
  % will not be shown in the document
  \icmlkeywords{Robust Estimation, Single Index Models, Heavy Tailed Noise, Phase Retrieval}

  \vskip 0.3in

% this must go after the closing bracket ] following \twocolumn[ ...

% This command actually creates the footnote in the first column listing the
% affiliations and the copyright notice. The command takes one argument, which
% is text to display at the start of the footnote. The \icmlEqualContribution
% command is standard text for equal contribution. Remove it (just {}) if you
% do not need this facility.

% Use ONE of the following lines. DO NOT remove the command.
% If you have no special notice, KEEP empty braces:
% \printAffiliationsAndNotice{}  % no special notice (required even if empty)
% Or, if applicable, use the standard equal contribution text:
\printAffiliationsAndNotice{\icmlEqualContribution}

\begin{abstract}
  We study the problem of robustly learning Gaussian Single Index Models (SIMs) in the presence of heavy-tailed noise and a constant fraction of adversarially corrupted covariates and responses. Prior work on robust recovery has considered settings such as linear regression (Pensia et al., JASA 2024), strictly monotonic link functions (Awasthi et al., NeurIPS 2022), and phase retrieval (Buna and Rebeschini, AISTATS 2025). However, these techniques do not extend to generic asymmetric non-monotonic link functions such as \textsc{GeLU} and \textsc{Swish}, which arise naturally as scalar primitives in modern gated neural architectures. We close this gap by giving the first robust recovery algorithm with near-linear sample and time complexity for generic non-monotonic link functions, thereby establishing the first robust recovery guarantees for a broad family of nonlinear SIMs for which \textit{no guarantees were previously known}. Our central contribution is a new structural understanding of the Gaussian squared-loss landscape under adversarial contamination. Crucially, we prove that for a broad class of nonlinear non-monotonic SIMs, a dimension-independent, constant-radius convex basin exists around the ground truth and is efficiently reachable via robust spectral initialization even under adversarial contamination. Prior works fail to establish both guarantees simultaneously, thereby either breaking down under adversarial contamination or failing to handle generic non-monotonic link functions. Together, these structural insights yield a principled warm start for robust gradient descent that provably converges to a final estimation error of $O(\sigma\sqrt{\epsilon})$ in $\tilde{O}(nd)$ time with $\tilde{O}(d)$ samples, where $\epsilon$ is the contamination fraction.
\end{abstract}

\section{Introduction}
Single-Index Models (SIMs) are a broad family of semi-parametric models subsuming linear regression, logistic regression, phase retrieval, and generalized linear models as special cases. They model the response variable $Y\in\mathbb{R}$ as a nonlinear function of a one-dimensional projection of the covariates $X\in\mathbb{R}^d$:
\begin{equation}\label{eq:SIM}
    Y = f(X^\top\beta^\star) + \zeta,
\end{equation}
where $f:\mathbb{R}\to\mathbb{R}$ is a known \textit{link} function, $\zeta$ is stochastic noise, and $\beta^\star\in\mathbb{R}^d$ is an unknown index vector to be recovered. The recovery of $\beta^\star$ is a fundamental problem in semi-parametric statistics~\cite{bc64,MC84,ICH93semiparametric,
CFGW997,HJS2001direct,dalalyan2008new} and machine learning~\cite{bruna2025survey}. In this paper, we focus on Gaussian designs $X\sim\mathcal{N}(0,\mathbf{I}_d)$, the canonical setting for studying the geometry of non-convex SIM loss landscapes~\cite{bruna2025survey,barbier2019optimal, 
mondelli2018fundamental,lu2020phase,arous2021online,damian2024smoothing,
damian2024computational,srebro25}.

While classical SIM recovery techniques relied on clean-data assumptions~\citep{hardle1989investigating,li1991sliced}, 
in practice, data is invariably subject to noise and corruption. 
The field of robust statistics~\citep{huber1992robust,tukey1975mathematics,hampel2011robust} developed estimators with well-understood robustness guarantees such as high breakdown points, primarily in low-dimensional settings. However, in high dimensions, achieving strong robustness often leads to estimators that are computationally intractable, with many classical formulations known to be NP-hard~\citep{johnson1978densest,bernholt2006robust}.
Recent breakthroughs in algorithmic robust statistics~\citep{diakonikolas2019robust,diakonikolas2019efficient} overcame this computational barrier by 
providing efficient high-dimensional subroutines for robust mean estimation and robust PCA that tolerate both 
heavy-tailed noise and strong adversarial corruption. These subroutines serve as algorithmic building blocks 
for efficient robust recovery in structured models such as linear 
regression~\citep{cherapanamjeri2020optimalrobustlinearregression,prasad2020robust}, logistic 
regression~\citep{Sever,pensiastream}, and phase 
retrieval~\citep{dong2025outlier,bunaR24_robust_phase,das2026}.

For monotonic link functions, \citet{kalai2009isotron} gave the first efficient recovery algorithm for monotone Lipschitz SIMs in the clean setting via the Isotron algorithm, later extended to broader classes of SIMs by \citet{kakade2011efficient} and 
\citet{plan}. Under heavy-tailed noise and strong adversarial contamination, near-linear time algorithms with optimal sample complexity $\tilde{O}(d)$ were obtained for linear regression, i.e., $f(x)=x$, ~\citep{cherapanamjeri2020optimalrobustlinearregression,
prasad2020robust} with \citet{Pensia} later obtaining information-theoretically optimal error rates under the same guarantees. Beyond linear regression, monotonic link functions such as logistic regression have been studied in the context of Generalized Linear 
Models (GLMs)~\citep{awasthi2022trimmed}; for robust recovery under heavy-tailed noise and strong adversarial contamination, \citet{Sever} proposed a polynomial-time, 
optimal-sample-complexity algorithm, while \citet{pensiastream} later obtained near-linear time at the cost of polynomial sample complexity in the streaming setup. For non-monotonic link functions, phase retrieval, i.e., $f(z) = z^2$, stands out as the canonical example, with applications in optics, crystallography, X-ray imaging, and astrophysics. \citet{candes2015phase} and \citet{NetrapalliJS15} gave the first efficient recovery algorithms under Gaussian covariates in the absence of noise and corruption. Under strong adversarial contamination without noise ($\zeta = 0$), \citet{dong2025outlier} proposed a near-linear time algorithm with optimal sample complexity $\tilde{O}(d)$. \citet{bunaR24_robust_phase} extended these results to heavy-tailed noise with an exponential-time algorithm, which was recently subsumed by \citet{das2026}, who gave a polynomial-time optimal sample complexity robust recovery algorithm for the same setting.

We note that the lack of robust recovery results for generic non-monotonic SIMs beyond phase retrieval is telling: standard first-order proof techniques ~\citep{arous2021online,ren2025emergence,arous2025learning} break down under strong adversarial contamination 
in high dimensions, since the adversary destroys the statistical structure that gradient-based convergence arguments rely upon. The special case of phase retrieval enjoys two structural properties that are conducive for robust recovery: (i) its squared-loss landscape admits a convex basin of constant radius around the true parameter, enabling second-order convergence guarantees within the basin, and (ii) its non-vanishing second Hermite coefficient\footnote{The $k$-th Hermite coefficient of $f$ is the coefficient of the $k$-th Hermite polynomial in the Gaussian expansion of $f$.} ensures the signal direction is reachable via spectral methods ~\citep{candes2015phase,NetrapalliJS15}. It is a priori unclear whether these properties extend to generic non-monotonic SIMs, which often possess a high information exponent\footnote{The information exponent (IE) of a link function is the order of its first non-zero Hermite coefficient; a high IE means the signal is suppressed in low-order moments.} (IE). Even in the clean setting, assuming one existed, accessing a convex basin 
for generic SIMs would require computationally demanding 
methods such as Tensor PCA~\citep{anandkumar2017homotopy}, in stark contrast to the simple spectral methods that suffice for phase retrieval. Indeed, it remains unclear for which classes of non-monotonic link functions efficient provable robust recovery is achievable. This uncertainty naturally brings us to the following question:
\begin{question}
    Can we characterize the class of link functions which admit \textit{efficient} 
    provable robust recovery guarantees under heavy-tailed noise and strong adversarial 
    contamination?
\end{question}

\textbf{Our Contributions:} 
We make significant progress on the above question by identifying two general structural conditions on the link function $f$ outlined in \cref{assump:con-basin,assump:sq-curv}, under which efficient provable robust recovery is achievable. These conditions are remarkably general, capturing a large class of SIMs with generative 
exponent\footnote{The generative exponent (GE) of a link function is the minimum IE achievable over all square-integrable label transforms of $f$~\citep{damian2024computational}. For SIMs having GE at most 2, the signal is detectable via second-order spectral methods after an appropriate label transform.} at most $2$. This class includes phase retrieval, \textsc{Tanh}, \textsc{Probit}, and \textsc{Logistic}, as well as modern activation functions such as \textsc{GeLU}~\citep{gelu}, \textsc{Swish}~\citep{swish} which arise naturally as scalar primitives in gated neural architectures~\cite{shazeer2020gluvariantsimprovetransformer} that serve as the fundamental building blocks of Transformer architectures (e.g., GPT and BERT~\cite{gpt,bert}). We now state an informal characterization of our main result 
below.

\begin{theorem}[Linear Sample and Time Robust Recovery, Informal]
    Consider a SIM with a link function satisfying ~\cref{assump:con-basin,assump:sq-curv}. There exists an algorithm that, using $n=\tilde{O}(d)$ samples and tolerating a constant fraction $\epsilon$ of adversarial contamination under heavy-tailed noise with variance $\sigma^2$, outputs an estimate $\hat{\beta}$ s.t. $\|\hat{\beta} - \beta^\star\|_2 = O(\sigma\sqrt{\epsilon})$ in $\tilde{O}(n d)$ time, with high probability.
\end{theorem}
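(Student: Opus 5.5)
The proof has two stages: a robust spectral warm start that lands inside the convex basin of \cref{assump:con-basin}, followed by robust gradient descent within that basin, whose local strong convexity and smoothness come from \cref{assump:sq-curv}.

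\textbf{Stage 1: robust spectral initialization.} I would consider the population matrix
\[
M = \E\big[T(Y)\,(XX^\top - \mathbf{I}_d)\big],
\]
where $T$ is a bounded label transform. Because the generative exponent is at most $2$, there is a choice of $T$ for which $M = c\,\bstar{\bstar}^\top$ with $c \neq 0$. Boundedness of $T$ keeps the heavy-tailed noise $\zeta$ under control: the transformed samples $T(Y)XX^\top$ have bounded covariance in the relevant Frobenius and operator sense, up to a Gaussian fourth-moment factor. I would then apply a near-linear-time robust PCA, or robust mean estimation on these matrices via filtering and matrix multiplicative weights as in the phase retrieval literature. This returns $\hat M$ with $\|\hat M - M\|_{op} = O(\sqrt{\epsilon}) + \tilde O(\sqrt{d/n})$. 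To recover the norm $\|\bstar\|$, I would use a robust estimate of a scalar moment such as $\E[T'(Y)]$. Davis--Kahan then gives a vector $\beta_0$ with $\|\beta_0 - \bstar\| \le r/2$, where $r$ is the basin radius, provided $\epsilon$ is at most a small constant and $n = \tilde O(d)$. The sign ambiguity is resolved by testing both $\pm\beta_0$ with a robust loss estimate, which works because the link is asymmetric; when $f$ is even the sign is immaterial.

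\textbf{Stage 2: robust gradient descent.} Inside $\Ball{\bstar}{r}$, \cref{assump:sq-curv} gives $\mu\Id \preceq \nabla^2 L(\beta) \preceq L\Id$ for the population squared loss $L$. At each step I would estimate the gradient $\nabla \ell(\beta) = (f(X^\top\beta) - Y)f'(X^\top\beta)X$ with a robust mean estimator. Its covariance is bounded by $O(\sigma^2 + \|\beta - \bstar\|^2)$ using Lipschitzness of $f$ and $f'$ together with Gaussian hypercontractivity, so the gradient error is
\[
O\big((\sigma + \|\beta - \bstar\|)\sqrt{\epsilon}\big).
\]
The standard inexact-gradient argument for strongly convex, smooth objectives then yields contraction
\[
\|\beta_{t+1} - \bstar\| \le (1 - \mu/2L)\|\beta_t - \bstar\| + O(\sigma\sqrt{\epsilon}),
\]
as long as $\sqrt{\epsilon}$ is small relative to $\mu/L$. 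This keeps every iterate inside the basin and converges to $O(\sigma\sqrt{\epsilon})$ after $O(\log(1/\sigma\epsilon))$ iterations. Each robust mean call costs $\tilde O(nd)$, so the total runtime is $\tilde O(nd)$.

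\textbf{Uniformity over iterations.} Since the iterates depend on the data, the stability (deterministic-condition) property required by the robust mean estimator must hold uniformly over $\beta \in \Ball{\bstar}{r}$. I would establish this with a net argument over the ball combined with truncation of heavy-tailed terms, costing only logarithmic factors and keeping $n = \tilde O(d)$.

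\textbf{Main obstacle.} I expect the hardest step to be this uniform stability of the gradient samples over the basin with only $\tilde O(d)$ samples, given heavy-tailed $\zeta$ and a non-monotone $f'$ that can change sign. My first approach would be to bound the second-moment operator of $(f(X^\top\beta) - Y)f'(X^\top\beta)X$ uniformly in $\beta$ via Lipschitz continuity in $\beta$ plus a covering argument. For the noise term I would apply a median-of-means-style truncation, after which the bounded-covariance stability results give the $\sqrt{\epsilon}$ rate.
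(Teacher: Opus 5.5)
Your two-stage architecture (robust spectral warm start, then robust gradient descent on robust mean estimates of per-sample gradients) matches the paper's. Stage 1, however, has a genuine gap. It never uses \cref{assump:sq-curv} (ESC), which is the hypothesis the statement supplies for identifiability. Instead it relies on the generative exponent being at most $2$ to produce a bounded transform $T$ with $\mathbb{E}[T(Y)(XX^\top-\mathbf{I}_d)]=c\,\beta^\star\beta^{\star\top}$. That is not a hypothesis of the statement. Worse, as the model is stated it can fail for bounded non-polynomial $T$. Only $\mathbb{E}[\zeta\mid X]$, $\mathbb{E}[\zeta^2\mid X]$ and $\mathbb{E}[\zeta^4\mid X]$ are fixed, so the conditional law of $\zeta$ may vary with $X^\top e$ for some $e\perp\beta^\star$. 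Then $\mathbb{E}[T(Y)\mid X]$ depends on $X^\top e$, and your matrix picks up components off $\beta^\star$. Even with noise independent of $X$, you would still have to derive such a $T$ from ESC, e.g.\ by truncating $y^2$.

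The missing idea is to use $T(y)=y^2$ itself. By the second-order Stein identity (\cref{lem:steins}), $\mathbb{E}[Y^2XX^\top]=(\sigma^2+\mathbb{E}[f(Z)^2])\,\mathbf{I}_d+2\,\mathrm{ESC}(\beta^\star,f)\,\beta^\star\beta^{\star\top}$. This involves only the first two conditional noise moments, and $c=\mathrm{ESC}(\beta^\star,f)>0$ says exactly that $\beta^\star$ is the top eigenvector, with gap $2c$. Since $y^2$ is unbounded, boundedness no longer controls the heavy tails. The paper instead shows that $YX$ is $(4,C_4)$-hypercontractive with $C_4\lesssim(\mathbb{E}[f(Z)^8]^{1/8}+K_4)/\sigma$. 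The upper bound uses $\mathbb{E}[\zeta^4\mid X]=K_4^4$ and the lower bound uses $\mathbb{E}[Y^2(X^\top v)^2]\ge\sigma^2\|v\|^2$. It then runs the near-linear-time hypercontractive robust PCA of \cref{thm:robust_hypercontractive_kePCA} on the vectors $y_ix_i$. The energy guarantee gives $1-\langle\hat u,\beta^\star\rangle^2=O\bigl(C_4^2\sqrt{\epsilon}\,(\sigma^2+\mathbb{E}[f(Z)^2]+c)/c\bigr)$, an $O(\epsilon^{1/4})$ warm start. That suffices for a constant-radius basin once $\epsilon$ is a small enough constant.

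Second, you attribute local strong convexity and smoothness to \cref{assump:sq-curv}. ESC plays no role there, and curvature on the ball is not assumed at all. \cref{assump:con-basin} only gives $\mu=\lambda_{\min}(H(\beta^\star))>0$ and the radius condition $6\sqrt{15}\,C_{\mathrm{lip}}(R)\,R\le\mu$. Turning this into $\frac{\mu}{2}\mathbf{I}_d\preceq H(\beta)\preceq(\frac{\mu}{2}+\mu_1)\mathbf{I}_d$ on $\mathcal{B}(\beta^\star,R)$ is the content of \cref{lem:local_geometry}, and your Stage 2 needs that argument or a citation of it. It combines:
the isotropic-plus-rank-one form $H(\beta^\star)=\mathbb{E}[f'(Z)^2]\,\mathbf{I}_d+(\mathbb{E}[Z^2f'(Z)^2]-\mathbb{E}[f'(Z)^2])\,\beta^\star\beta^{\star\top}$;
with $h=\beta-\beta^\star$, a fundamental-theorem-of-calculus representation $H(\beta)-H(\beta^\star)=\mathbb{E}[(X^\top h)\,\overline{C}\,XX^\top]$ with $\|\overline{C}\|_{L^2}\le 3M_1(R)M_2(R)$;
the Gaussian bound $\mathbb{E}[(X^\top h)^2(X^\top v)^4]\le 15\|h\|^2$;
and Weyl's inequality.

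Likewise, your gradient-covariance bound should go through the moment envelopes $\phi_1,\phi_2$ via H\"older, as in the paper. Global Lipschitzness of $f$ is not available: it fails for phase retrieval.

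Finally, the uniform-stability obstacle you single out can be avoided. The paper splits the data into $P+1$ disjoint buckets, each with at most a $2\epsilon$ corruption fraction w.h.p., and uses a fresh bucket per step. Each robust mean call is then made at a point independent of its batch, so no net over the ball is needed. Logarithmically many iterations then cost only a logarithmic factor in samples.

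Your explicit sign test for $\pm\beta_0$ is a sensible addition. The paper simply repeats the argument with $-\beta^\star$ when $\beta_0$ is near $-\beta^\star$, which is only justified for even $f$. The test does, however, need the asymmetry of $f$ to dominate the robust loss-estimation error.
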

\subsection{Technical Overview}
A natural strategy for robust recovery in generic non-monotonic SIMs is to leverage the geometry of the squared-loss landscape. Global convexity of the loss landscape would guarantee that any optimizer avoids spurious local minima, but for generic non-monotonic SIMs, global convexity of the squared loss holds if and only if $f$ is affine, reducing to linear regression. For non-affine link functions, a straightforward approach is therefore to establish the existence of a convex neighborhood around the 
true parameter $\beta^\star$. Once an optimizer enters this region, standard convex optimization guarantees ensure reliable recovery. However, for this strategy to be computationally viable in high dimensions, the convex basin must have radius $R = O(1)$ 
strictly independent of the ambient dimension $d$, since a vanishing basin of attraction offers no algorithmic guarantee in high-dimensional non-convex optimization. Prior to this work, such a dimension-independent convex basin was known to exist only for phase retrieval and monotonic link functions. This motivates our first contribution.

\begin{contribution}\label{contrib1}
We identify a sufficient condition (see \cref{assump:con-basin}) under which the squared-loss landscape admits a convex basin of dimension-independent, constant radius $R = O(1)$ around the true parameter $\beta^\star$, for a wide class of link functions 
including \textsc{GeLU}, \textsc{Swish}, \textsc{Tanh}, \textsc{Probit}, \textsc{Logistic}, and phase retrieval. This is the first such guarantee for any non-affine, non-monotonic 
link function beyond phase retrieval.
\end{contribution}

Establishing the existence of a convex basin does not by itself yield a computational guarantee. One must also certify that the basin can be reached efficiently from an initialization, even under heavy-tailed noise and strong adversarial contamination. 
The recent works of \citet{bunaR24_robust_phase} and \citet{das2026} make progress in this direction for phase retrieval, but their approaches suffer from two limitations. First, the robust PCA subroutines they employ run in time polynomial in the dimension. Second, and more fundamentally, their reachability arguments exploit the symmetric structure of the quadratic link and do not extend to asymmetric non-monotonic link functions such as \textsc{GeLU} and \textsc{Swish}. Beyond these computational limitations, certifying reachability under strong adversarial contamination in high dimensions poses a deeper analytical challenge. Prior proof techniques~\citep{arous2021online,ren2025emergence,arous2025learning} that implicitly leverage convex basin structure rely on martingale-drift decompositions that require the stochastic deviations to be mean-zero, a property the adversary destroys by corrupting a constant fraction of samples in high dimensions. Our second contribution resolves both the computational and analytical obstacles.
 
\begin{contribution}\label{contrib2}
We identify a sufficient condition (see \cref{assump:sq-curv}), termed \textit{Expected Squared Convexity (ESC)}, which characterizes when the leading eigenvector of (higher-order) moment estimators aligns with the signal $\beta^\star$. Further, under the ESC condition, for link functions whose squared-loss landscape admits a constant-radius convex basin around $\beta^\star$, off-the-shelf robust spectral initialization on these (higher-order) sample moment matrices yields an estimate $\beta_0$ that (i) lies in the convex basin under heavy-tailed noise and strong adversarial contamination, and (ii) estimates the true parameter $\beta^\star$ with additive error $O(\epsilon^{1/4})$. This is the first explicit, efficiently checkable guarantee that a constant-radius convex basin is reachable for generic non-monotonic SIMs under adversarial contamination.
\end{contribution}

\cref{contrib2} demonstrates that the shortcomings of the previous approaches~\citep{bunaR24_robust_phase,das2026} are not intrinsic. By coupling second-order Stein identities with a refined analysis of robust spectral estimation, we generalize the reachability insights of phase retrieval to a broad class of non-monotonic link functions while achieving near-linear initialization time. Under the Gaussian design, second-order Stein's identity allows us to decompose the 
population moment matrix into a rank-one component aligned with $\beta^\star$ and an isotropic component. Crucially, we show that the coefficient of the signal direction in this decomposition is governed precisely by the ESC condition, which allows us to identify the moment matrix whose leading eigenvector is $\beta^\star$. Using this structural insight, we apply the off-the-shelf robust PCA subroutine of ~\citet{jambulapati2024black}, which explicitly leverages the hypercontractivity of the Gaussian design to extract the leading eigenvector of the sample moment matrices 
efficiently under adversarial contamination. This contrasts with earlier analyses ~\citep{bunaR24_robust_phase,das2026,yang2017}, which either fell short of this general structural characterization or did not exploit such concentration properties to ensure both robustness and efficiency.

However, attaining better error rates requires going beyond spectral initialization. In classical phase retrieval algorithms like Wirtinger Flow~\cite{candes2015phase}, spectral methods are used primarily to initialize the optimization within a basin of attraction, after which Gradient Descent (GD) is employed to converge to the exact solution. In the robust setting, this two-stage architecture is equally critical but for a fundamental statistical reason: we observe that relying solely on robust spectral estimators encounters a fundamental error floor of $O(\epsilon^{1/4})$. To break this barrier, we use an off-the-shelf {robust Gradient Descent (GD)} subroutine that refines the initial estimate significantly while maintaining near-linear time complexity.

\begin{contribution} \label{contrib3}
We provide the \textit{first} near-linear time, \textit{optimal} sample complexity algorithm for robust recovery of a wide class of link functions under heavy-tailed noise and strong adversarial contamination. By initializing via higher-order robust spectral methods and optimizing with robust GD, we achieve a significantly better additive estimation error of $O(\sqrt{\epsilon})$. Notably, this constitutes the first efficient robust recovery guarantee for any non-monotonic link function beyond phase retrieval, in particular for widely used activations such as GeLU and Swish.
\end{contribution}
\subsubsection{Related Work}
To put our contributions in context, prior to this work, it was not known if \textit{efficient} robust recovery in under both heavy-tailed noise and strong adversarial contamination was possible for generic link functions. We now give a brief overview of known results for robust recovery in SIMs beyond linear regression, and defer a detailed literature review to the appendix (see~\cref{sec:litreview}).

 \citet{Sever} proposed a polynomial-time $\tilde{O}(d)$ sample recovery algorithm (SEVER) for logistic regression under strong adversarial contamination with respect to the hinge loss and the logistic loss. With respect to the squared loss, their sample complexity blows up to $\tilde{O}(d^5)$. \citet{pensiastream} obtain a near-linear time robust recovery streaming algorithm for logistic regression that requires $\tilde{O}(d^2)$ samples, with respect to the squared loss. We note that both \citet{Sever} and \citet{pensiastream} obtain an error of $\sigma\sqrt{\epsilon}$, similar to us.
\citet{awasthi2022trimmed} obtain optimal sample complexity robust recovery guarantees under strong adversarial contamination for GLMs (which rely on only monotonic link functions) \textit{without any guarantees} on the running time of their approach, and only under Gaussian noise, in contrast to our near-linear time robust recovery algorithm under heavy-tailed noise and strong adversarial contamination. However, our error rate of $O(\sigma\sqrt{\epsilon})$ is worse than their $O(\sigma\epsilon\log\frac{1}{\epsilon})$ error rate, which is \textit{optimal}. The caveat, however, is that our error rate also holds for a large class of non-monotonic link functions, which are not addressed by \citet{awasthi2022trimmed}. We also remark that our error rate matches the best known error rate for existing robust recovery algorithms for non-monotonic link functions~\cite{Sever,pensiastream,bunaR24_robust_phase,das2026} under heavy-tailed noise and strong adversarial contamination. 

\subsection{Organization of the paper} We detail our problem setup, model, and technical tools in \cref{sec:prelims}. In \cref{sec:con-basin}, we prove the existence of a convex basin. Next, we obtain a linear-time optimal-sample complexity robust recovery algorithm in \cref{sec:lin}. Finally in \cref{sec:dis}, we discuss our contributions and outline multiple avenues for future directions.

\section{Preliminaries}\label{sec:prelims}

\textbf{Notation} For any positive integer $n$, let $[n]$ denote the set $\{1, 2, \ldots, n\}$.
For a vector $v \in \mathbb{R}^d$, $\|v\|_2$ denotes its Euclidean norm. 
For two vectors $u, v \in \mathbb{R}^d$, $\langle u, v \rangle$ denotes their inner product.
For a matrix $M$, $\|M\|_{\mathrm{op}}$ denotes its operator norm, $\operatorname{Tr}(M)$ its trace, and we write $M \succeq 0$ to indicate that $M$ is PSD. For a symmetric matrix $M$, $\lambda_{\min}(M)$ and $\lambda_{\max}(M)$ denote its smallest and largest eigenvalues, respectively.
We denote the identity matrix in $d$ dimensions by $\mathbf{I}_d$. 
The indicator function of an event $E$ is denoted by $\mathbb{I}(E)$.
$\mathcal{N}(\mu, \Sigma)$ denotes a multivariate Gaussian with mean $\mu$ and covariance $\Sigma$. 
$\mathbb{E}[\cdot]$ denotes expectation. 
$\widetilde{O}(\cdot)$ hides logarithmic factors in the dimension $d$, sample size $n$, corruption level $\epsilon$, and failure probability $\delta$. 
Finally, $f^{(1)} := f'$, and $f^{(2)} := f''$ denote the first and second derivatives of the link function $f$, respectively.

\textbf{Problem Setup}
We consider the single index model (SIM) defined in~\eqref{eq:SIM}, with covariates $X \in \mathbb{R}^d$ and a known\footnote{In non-parametric setups, $f$ is assumed unknown~\cite{zhu06}, 
whereas in machine learning it is either known or constrained to a well-behaved 
class~\cite{kalai2009isotron}.} link function $f$.
In this section, we formally introduce the problem our paper tackles. We first define the notion of strong adversarial contamination, and then introduce our model.
\begin{definition}[Strong Adversarial Contamination~\cite{diakonikolas2019robust}]\label{def:strongcontamination}
    Given a contamination tolerance level $\epsilon\in\left(0,\nicefrac{1}{2}\right)$ and a distribution $\mathcal{D}$ on $\mathbb{R}^d$, generate a clean training set by drawing $n$ samples from $\mathcal{D}$. The adversary is allowed to inspect the entire training set and alter up to $\epsilon n$ of the clean samples arbitrarily. This modified training set of $n$ points is then provided as input to the algorithm. We refer to the modified training set as $\epsilon$-strongly contaminated dataset. 
\end{definition}
% We now formally define our model: 
% SIMs with Gaussian design subject to zero-mean Value Theorem, independent, heavy-tailed, homoscedastic noise subject to strong adversarial contamination.
\begin{definition}[The Model]\label{def:model}
A dataset of the form $\{( x_i, y_{i})\}_{i=1}^n$ is generated such that the responses $y_i$ are drawn from an unknown parameter $\beta^\star \in \mathbb{R}^d$ according to~\cref{eq:SIM}, 
with covariates $x_i \sim \mathcal{N}(0, \mathbf{I}_d)$, a link function $f$ satisfying \cref{assump:con-basin,assump:sq-curv}, and $\|\beta^\star\|_{2}=1$.\footnote{Although this is standard practice \cite{brunaSIMbeyondgaussian, yang2017,dong2025outlier}, we leave the removal of this assumption to future work.} The noise variables $\zeta_i$ are heavy-tailed, zero-mean conditioned on $x_i$, and homoscedastic with bounded variance and fourth moments, i.e., $\mathbb{E}[\zeta_i \mid x_i] = 0$, $\mathbb{E}[\zeta_i^2 \mid x_i] = \sigma^2$, and $\mathbb{E}[\zeta_i^4 \mid x_i] = K_4^4$. An $\epsilon$ fraction of the dataset $\{(y_i, x_i)\}_{i=1}^n$ is then corrupted by a strong adversary, as in~\cref{def:strongcontamination}. 
\end{definition}

\begin{definition}[The Robust Recovery Problem for SIMs]
    Given access to an $\epsilon$-strongly contaminated training set $\{(y_i, x_i)\}_{i=1}^n$ (see in~\cref{def:model}), output w.h.p., a unit norm estimate $\beta \in \mathbb{R}^d$ s.t.
$
\|\beta - \beta^\star\|_2 =O(\sigma\sqrt{\epsilon}).
$
\end{definition}

We now state the natural structural assumptions we require on the link function. 
The first assumption controls the smoothness of the population loss landscape through one-dimensional expectations of $f$ and its derivatives under the Gaussian measure. First, we establish some notation. We consider the population loss $\mathcal{L}(\beta):= \frac{1}{2}\mathbb{E}[(f(X^{\top}\beta)-Y)^{2}]$ with the corresponding Hessian $H(\beta):=\nabla^2\mathcal{L}(\beta)=$ $$\mathbb{E}[\left(f'(X^{\top}\beta))^{2} + (f(X^{\top}\beta) - Y) f''(X^{\top}\beta) \right) X X^{\top}].$$
Let $Z:=X^\top\beta\sim\mathcal{N}(0,\|\beta\|_2^2)$. For $R>0$ we define the local Gaussian moment envelopes
\[
    M_k(R) := \sup_{\|\beta-\beta^\star\|\leq R} 
    {\mathbb{E}}
    \left[\left(f^{(k)}(Z)\right)^4\right]^{\frac{1}{4}},  k \in [2],
\]

\begin{assumption}\label{assump:con-basin}
Given a twice continuously differentiable link function $f:\mathbb{R}\to\mathbb{R}$, 
\begin{itemize}
    \item $\mu:=\lambda_{\operatorname{min}}(\nabla^2\mathcal{L}(\beta^*))>0$,
    \item There exists a constant radius $R>0$ s.t. $M_1(R)<\infty$, $M_2(R)<\infty$, and
${ (6\sqrt{15}\ C_{\mathrm{lip}}(R))}\ R \leq {\mu}$, where $$C_{\mathrm{lip}}(R):=M_1(R)M_2(R).$$ 
\end{itemize}
\end{assumption}
In \cref{assump:con-basin}, $C_{\mathrm{lip}(R)}$ is a local complexity measure that measures how quickly the curvature of the population loss can deteriorate as $\beta$ moves away from $\beta^\star$ inside the ball $\mathcal{B}(\beta^\star,R)$. Crucially, since $M_k(R)$, $k\in[2]$ are evaluated pointwise on $Z \sim \mathcal{N}(0, \|\beta\|_2^2)$, the quantities $M_1, M_1$ (and hence $C_{\mathrm{lip}}(R)$) are 
determined entirely by one-dimensional Gaussian integrals of $f$ and its derivatives. Therefore, any link function satisfying \cref{assump:con-basin} has a dimension-independent basin radius $R$. 

\begin{remark}
    We note that \cref{assump:con-basin} only imposes a mild regularity condition on the link function $f$. We only require  $f', f''$ to have uniformly bounded fourth moments under the Gaussian measure.
\end{remark}

The second assumption complements the first by addressing a different question: while \cref{assump:con-basin} guarantees that the loss landscape admits a dimension-independent convex basin near $\beta^\star$, it does not guarantee (efficient) reachability to this convex basin from a random initialization. \cref{assump:sq-curv}, termed Expected Squared Convexity (ESC), precisely characterizes when the signal direction $\beta^\star$ is identifiable from the second-order moments of the data, enabling a robust spectral initialization that lands within the convex basin.

\begin{assumption}\label{assump:sq-curv}
Let $f$ be a twice differentiable link function, and let $X \sim \mathcal{N}(0, \mathbf{I}_d)$. For any $\beta \in \mathbb{R}^d$, the expected squared convexity of the link function 
$f : \mathbb{R} \to \mathbb{R}$ at $\beta$ is defined as
$$
\mathrm{ESC}(\beta,f):=\mathbb{E}\!\left[
\bigl(f'(X^\top \beta)\bigr)^2
+ f(X^\top \beta)\, f''(X^\top \beta)
\right].
$$
A twice differentiable link function $f$ is strictly ESC if  
$\mathrm{ESC}(\beta^\star,f)>0$.
\end{assumption}

\begin{remark}
    To see why ESC is the right condition for identifiability in non-monotonic links, note that by the product rule, $\mathrm{ESC}(\beta, f) = \mathbb{E}\!\left[\left(f^2(X^\top\beta)\right)''\right]$. Hence, ESC is a higher-order analogue of monotonicity: just as $\mathbb{E}[f'(Z)] > 0$ ensures that the first-order moments of the data carry information about $\beta^\star$, $\mathrm{ESC}(\beta^\star, f) > 0$ ensures that the second-order moments of the data carry information about $\beta^\star$.
\end{remark}

\subsection{Tools}
In this subsection, we present the main tools used in our work.
 \begin{restatable}[Second-order Multivariate Stein's Lemma]{lemma}{steins}\label{lem:steins}
    Let $X\sim\mathcal{N}(0,\mathbf{I}_d)$. For any twice-differentiable function $g:\mathbb{R}^d\mapsto\mathbb{R}$ s.t. $\expect{\nabla^2 g(X)}$ exists, we have
    $
    \expect{g(X)(XX^T-\mathbf{I}_d)}=\expect{\nabla_X^2\, g(X)}.
    $
\end{restatable}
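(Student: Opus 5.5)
The plan is to apply the classical first-order Stein identity (Gaussian integration by parts) twice, once in each coordinate index, to the matrix $\expect{g(X)(XX^\top-\mathbf{I}_d)}$. The identity we rely on is the univariate one: for $W\sim\mathcal{N}(0,1)$ and absolutely continuous $h$ with $\expect{|h'(W)|}<\infty$, we have $\expect{W h(W)}=\expect{h'(W)}$. Its multivariate form, $\expect{X_i\,h(X)}=\expect{\partial_i h(X)}$, follows by conditioning on the coordinates other than $X_i$ and using independence of the coordinates of $X$.

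We work entrywise. Fix $i,j\in[d]$ and compute $\expect{g(X)(X_iX_j-\delta_{ij})}$.

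\textbf{First integration by parts.} Apply the multivariate identity with $h(X)=g(X)X_j$:
\[
\expect{X_i\,g(X)X_j}=\expect{\partial_i\bigl(g(X)X_j\bigr)}=\expect{\partial_i g(X)\,X_j}+\delta_{ij}\expect{g(X)}.
\]
Subtracting $\delta_{ij}\expect{g(X)}$ cancels the second term exactly, leaving $\expect{g(X)(X_iX_j-\delta_{ij})}=\expect{X_j\,\partial_i g(X)}$.

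\textbf{Second integration by parts.} Apply the identity again, now with $h=\partial_i g$ and coordinate $j$:
\[
\expect{X_j\,\partial_i g(X)}=\expect{\partial_j\partial_i g(X)}=\bigl(\expect{\nabla^2 g(X)}\bigr)_{ij}.
\]
Since $i,j$ were arbitrary, collecting entries gives the matrix identity claimed in \cref{lem:steins}.

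\textbf{Main obstacle: integrability.} Each integration by parts needs the boundary terms to vanish and the intermediate expectations to be finite. Concretely, we need $\expect{|g(X)|(1+\|X\|^2)}<\infty$ and $\expect{\|\nabla g(X)\|\,\|X\|}<\infty$, in addition to the assumed existence of $\expect{\nabla^2 g(X)}$.

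To justify these, we would write the univariate identity as $\int h'(w)\phi(w)\,dw=\int h(w)\,w\,\phi(w)\,dw$, which uses $\phi'(w)=-w\phi(w)$. Taking Fubini on the double integral $\int\!\!\int_{s<w}$ shows the boundary terms vanish whenever $h'\phi$ is integrable. In our application, $g=(f(X^\top\beta)-\ldots)^2$-type functions built from $f$ whose derivatives have Gaussian fourth moments, as guaranteed by \cref{assump:con-basin}. Those moments, combined with Cauchy–Schwarz against Gaussian moments of $X$, give the required integrability. We would therefore state the lemma under the implicit condition that $g$, $\nabla g$ and $\nabla^2 g$ have polynomially controlled Gaussian moments, and note that this holds for every $g$ to which the lemma is applied later.
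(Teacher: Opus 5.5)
The paper does not prove this lemma. It is quoted in the Tools subsection as a standard fact and invoked once, in the proof of \cref{lem:spectral}, with $g(x)=f(x^\top\beta^\star)^2$. Your argument is the standard proof: two successive first-order Gaussian integrations by parts, done entrywise, with the $\delta_{ij}\expect{g(X)}$ term from the first step cancelling the $-\mathbf{I}_d$. Under the two integrability conditions you list, it is rigorous. Conditioning on the remaining coordinates, applying the one-dimensional identity slice by slice, and reassembling with Fubini all go through.

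The integrability paragraph needs tightening. First, by imposing $\expect{|g(X)|(1+\|X\|^2)}<\infty$ and $\expect{\|\nabla g(X)\|\,\|X\|}<\infty$ as extra hypotheses, you prove a formally weaker statement than the one written, which assumes only that $\expect{\nabla^2 g(X)}$ exists. Second, your appeal to \cref{assump:con-basin} leaves a step out. That assumption controls only $f'$ and $f''$, whereas for $g(x)=f(x^\top\beta^\star)^2$ your conditions involve moments of $f$ itself, e.g.\ $\expect{f(Z)^2Z^2}$.

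For the single-index case the paper actually uses, no extra hypothesis is needed. Write $\psi=f^2$ and $Z=X^\top\beta^\star$. The stated hypothesis is then $\expect{|\psi''(Z)|}<\infty$, which is just the well-definedness of $\mathrm{ESC}(\beta^\star,f)$. Stein's Fubini argument gives $\expect{|Z(h(Z)-h(0))|}\le\expect{|h'(Z)|}$ for absolutely continuous $h$. Apply it in three steps:
\begin{itemize}
\item With $h=\psi'$, it gives $\expect{|Z\psi'(Z)|}<\infty$, and hence $\expect{|\psi'(Z)|}<\infty$ by continuity of $\psi'$ on $[-1,1]$.
\item With $h=\psi$, it gives $\expect{|Z\psi(Z)|}<\infty$, and hence $\expect{|\psi(Z)|}<\infty$ by the same continuity argument.
\item With $h(z)=z\psi(z)$, whose derivative $\psi+z\psi'$ is now integrable, it gives $\expect{|\psi(Z)|(1+Z^2)}<\infty$.
\end{itemize}
Now use the decomposition $X=Z\beta^\star+U$, with $U$ independent of $Z$, as in \cref{lem:Gaussian_symmetry}. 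Since $\|X\|^2=Z^2+\|U\|^2$ and $\|\nabla g(X)\|=|\psi'(Z)|$, these bounds yield exactly your two conditions. So your proof covers the lemma under its stated hypothesis in the case the paper relies on.
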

The univariate version of \cref{lem:steins} states that for a Gaussian variable $z \sim \mathcal{N}(0,1)$, $\mathbb{E}[g(z)(z^2 - 1)] = \mathbb{E}[g''(z)]$. 
We now present the main results from algorithmic robust statistics that our paper builds upon.
\begin{lemma}[Robust Mean Estimation(Theorem 3.2 of~\citet{pensiastream})]\label{thm:lin-tim-robut-Mean Value Theorem}
Let $\mathcal{D}$ be a distribution on $\mathbb{R}^d$ with unknown mean $\mu$ and covariance $\Sigma$ satisfying $
\Sigma \preceq \sigma'^2 I$, for some constant $\sigma'>0$. 
For $\epsilon$ smaller than a sufficiently small universal 
constant and $\delta > 0$, given an $\epsilon$-corrupted dataset (see~\cref{def:strongcontamination}) of $n = \tilde{O}(d/\epsilon)$ 
samples, there exists an algorithm running in time 
$\tilde{O}(nd\operatorname{polylog}(d,n,1/\epsilon,1/\delta))$ that outputs $\widehat{\mu}$ 
satisfying, w.p. $\geq 1-\delta$, $\|\widehat{\mu}-\mu\|_2 = O\!\left(\sigma'\sqrt{\epsilon}\right).$
\end{lemma}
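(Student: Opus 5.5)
Since this lemma is quoted from \citet{pensiastream}, in the paper it suffices to invoke their Theorem 3.2. A self-contained argument would follow the standard \emph{stability plus filtering} route of algorithmic robust statistics~\citep{diakonikolas2019robust}, with a near-linear-time filter. The first step is a deterministic \emph{stability} statement for the clean samples, handled by first pruning to a large subset. With only a covariance bound $\Sigma \preceq \sigma'^2 I$ and $n=\tilde{O}(d/\epsilon)$ samples, I would show that with probability $1-\delta$ there is a subset $S$ of at least $(1-\epsilon)n$ clean points that is $(\epsilon, O(\sigma'\sqrt{\epsilon}))$-stable. Concretely, for every $T\subseteq S$ with $|T|\ge(1-2\epsilon)|S|$:
\begin{itemize}
\item $\bigl\|\tfrac{1}{|T|}\sum_{i\in T}x_i-\mu\bigr\|_2\lesssim \sigma'\sqrt{\epsilon}$;
\item $\bigl\|\tfrac{1}{|T|}\sum_{i\in T}(x_i-\mu)(x_i-\mu)^\top\bigr\|_{\mathrm{op}}\lesssim \sigma'^2$.
\end{itemize}
I would prove this by truncating points at radius $\sigma'\sqrt{d/\epsilon}$, which discards only an $\epsilon$-fraction in expectation by Chebyshev. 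I would then apply matrix Bernstein to the truncated second moment, and use a duality/minimax argument over unit directions for the mean deviation. The union of the clean set with the adversarial points is then an $\epsilon$-corrupted version of a stable set.

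The second step is the algorithm. I would maintain weights $w_i\in[0,1/n]$ and compute the weighted mean $\mu_w$ and weighted covariance $\Sigma_w$. As long as $\|\Sigma_w\|_{\mathrm{op}}\ge C\sigma'^2$, I would compute scores $\tau_i=\langle v,x_i-\mu_w\rangle^2$ along an approximate top direction $v$, and downweight proportionally to $\tau_i$. The key invariant is that the total weight removed from clean points is at most that removed from outliers. Stability implies the clean points contribute at most $O(\sigma'^2)$ to $v^\top\Sigma_w v$, so once $\Sigma_w$ is large most of the score mass sits on outliers. When the loop stops with $\|\Sigma_w\|_{\mathrm{op}}=O(\sigma'^2)$, the standard stability lemma gives
\[
\|\mu_w-\mu\|_2\lesssim \sigma'\sqrt{\epsilon}+\sqrt{\epsilon\,\|\Sigma_w\|_{\mathrm{op}}}=O(\sigma'\sqrt{\epsilon}).
\]

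The main obstacle is the running time. A naive filter that removes one direction per round can take $\Omega(d)$ rounds, each costing at least $nd$. Following Pensia et al. and the approach of \citet{dong2019quantum}/Depersin–Lecué, I would replace the single direction with a matrix multiplicative weights potential, $U_t\propto\exp\bigl(\alpha\sum_{s<t}\Sigma_{w_s}\bigr)$, and score points by $\tau_i=(x_i-\mu_w)^\top U_t(x_i-\mu_w)$. These scores can be approximated using Johnson–Lindenstrauss sketches and truncated Taylor expansions of the matrix exponential, costing $\tilde{O}(nd)$ per round. A regret bound for MMW then shows that $O(\operatorname{polylog}(d,n,1/\epsilon,1/\delta))$ rounds suffice to drive $\|\Sigma_w\|_{\mathrm{op}}$ down to $O(\sigma'^2)$. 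Verifying that approximate scores preserve the invariant on clean versus outlier weight is the delicate part. Overall this gives $\tilde{O}(nd)$ time and the claimed error $O(\sigma'\sqrt{\epsilon})$ with probability $1-\delta$.
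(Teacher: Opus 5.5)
Your proposal is correct and matches the paper, which likewise gives no proof of this lemma and simply imports it as Theorem 3.2 of \citet{pensiastream}. The self-contained sketch you add is optional and not needed here; it follows the standard route behind such near-linear-time estimators, namely stability of a pruned clean subset under only a covariance bound, followed by a weighted filter accelerated with matrix multiplicative weights and sketched scores.
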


We use a robust PCA subroutine of~\cite{jambulapati2024black}, for which we recall 
two definitions. For $M \in \mathbb{S}^{d \times d}_{\succeq 0}$ and $\rho \in [0,1]$, 
a unit vector $u \in \mathbb{R}^d$ is a $\rho$-approximate energy $1$-PCA of $M$ 
\citep[Definition 2]{jambulapati2024black} if
$\langle u u^\top, M \rangle \ge (1-\rho)\,\|M\|_1$,
where $\|M\|_1 := \max_{\|v\|_2=1} \langle vv^\top, M\rangle$.
A random vector $X \in \mathbb{R}^d$ is $(p, C_p)$-hypercontractive 
\citep[Definition 8]{jambulapati2024black} if for all $u \in \mathbb{R}^d$,
$\mathbb{E}[\langle u, X\rangle^p]^{1/p} \le C_p(\mathbb{E}[\langle u, X\rangle^2])^{1/2}$,
for some constant $C_p$ and even integer $p$. We now state the linear-time robust 
PCA guarantee \citep[Theorem 4]{jambulapati2024black}.

\begin{lemma}[Robust hypercontractive $1$-ePCA]
\label{thm:robust_hypercontractive_kePCA}
Let $\mathcal D$ be a $(4,C_4)$-hypercontractive distribution on $\mathbb R^d$ with second moment matrix being $\Sigma$.\footnote{\citet{jambulapati2024black} provide robust PCA guarantees for covariance matrices; these results apply equally to 
second-order moment matrices.
}
Let $\epsilon \in (0,\epsilon_0)$, $\delta \in (0,1)$, and
$\rho = \Theta(C_4^2\sqrt{\epsilon}) \in (0, \rho_0)$ for absolute constants 
$\epsilon_0, \rho_0 > 0$.
Given an $\epsilon$-corrupted dataset $T$ of size
$|T| = \Theta\!\left(\vartheta \cdot \frac{d\log d + \log(1/\delta)}{\rho^2}\right)$,
where $\vartheta := C_4^6/\sqrt{\epsilon}$, there exists an algorithm $\mathcal{A}_k$,\footnote{$\mathcal A_k$ refers to Algorithm 1 of \cite{jambulapati2024black} with Algorithm 2 of 
\cite{robust_PCA_Algorithm_in_nearly_linear_time_by_Diakonikolas} as the $1$-ePCA 
oracle.}
that takes $T, \epsilon, \rho, \delta$ as input and outputs a unit vector 
$\hat{u} \in \mathbb{R}^d$ in time
$O\!\left(\frac{nd}{\rho^2}\operatorname{polylog}\!\left(\frac{d}{\epsilon\delta}\right)\right)$
s.t., w.p. $\geq 1-\delta$, $\hat{u}$ is an $O(\rho)$-approximate energy 
$1$-PCA of $\Sigma$.
% that takes as inputs
% $T, \epsilon, \gamma$ and $ \delta$, outputs a unit vector
% $\hat{u} \in \mathbb R^{d \times 1}$ in time $
% O\!\left(
% \frac{n d}{\gamma^2}
% \operatorname{polylog}\!\left(\frac{d}{\epsilon \delta}\right)
% \right)
% $ s.t., w.p. $\geq 1-\delta$,
% $\hat{u}$ is an $O(\gamma)$-approximate energy $1$-PCA of $\Sigma$.
\end{lemma}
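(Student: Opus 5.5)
The plan is to obtain the lemma as a direct specialization of \citep[Theorem 4]{jambulapati2024black} with $k=1$. The only real work is to check that every hypothesis of that theorem is met in our setting, and in particular that a guarantee stated for covariance matrices carries over to the second moment matrix $\Sigma=\mathbb{E}[XX^\top]$. The reduction I would use is that the black-box algorithm $\mathcal A_k$ only ever touches the data through weighted empirical second moments $\sum_i w_i x_ix_i^\top$, never through centered versions. So its analysis goes through verbatim once every instance of ``covariance'' is read as ``uncentered second moment''. In the applications below the relevant vectors are built from Gaussian $X$ and are symmetric enough that the mean term is either zero or a rank-one PSD term that only helps the top direction.

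I would argue in three steps.

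\textbf{Stability step.} Show that a clean sample of the stated size $|T|$ is, with probability $1-\delta/2$, $(\epsilon,\rho)$-stable with respect to $\Sigma$ in the energy sense. Concretely, for every subset $S'$ retaining a $(1-O(\epsilon))$ fraction of the points and every unit $v$,
\[
\Bigl|\tfrac{1}{|S'|}\textstyle\sum_{i\in S'}\langle v,x_i\rangle^2-v^\top\Sigma v\Bigr|\le \rho\,v^\top\Sigma v .
\]
This follows from $(4,C_4)$-hypercontractivity. By Cauchy--Schwarz, deleting an $\epsilon$-mass of points changes $\mathbb{E}\langle v,X\rangle^2$ by at most $\sqrt{\epsilon}\,\mathbb{E}[\langle v,X\rangle^4]^{1/2}\le C_4^2\sqrt{\epsilon}\,v^\top\Sigma v$. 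This is exactly the source of $\rho=\Theta(C_4^2\sqrt\epsilon)$. Uniform concentration over the sphere, after truncating at the level dictated by the fourth moment, costs the factor $\vartheta(d\log d+\log(1/\delta))/\rho^2$.

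\textbf{Reweighting step.} Invoke the multiplicative-weights filter of \citep[Algorithm 1]{jambulapati2024black}. Each round calls the $1$-ePCA oracle of \citep[Algorithm 2]{robust_PCA_Algorithm_in_nearly_linear_time_by_Diakonikolas} on the current weighted second moment and downweights points with large energy along the returned direction. Stability guarantees that the bad points lose weight faster than the good ones. Hence after $O(\rho^{-2}\operatorname{polylog})$ rounds, each costing $\tilde O(nd)$ via power-method-style oracle calls, the returned $\hat u$ satisfies $\langle \hat u\hat u^\top,\Sigma\rangle\ge(1-O(\rho))\|\Sigma\|_1$. This gives the stated running time.

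\textbf{Union bound.} A union bound over the stability event and the oracle failure events gives overall probability $1-\delta$.

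The main obstacle is the stability step under uncentered moments combined with heavy tails. Hypercontractivity controls only fourth moments, so the sample fourth moment does not concentrate at rate $d/n$ without truncation. I would handle this exactly as in \citet{jambulapati2024black}: discard the $\sqrt{\epsilon}$-fraction of points with the largest norms, which is why $\vartheta$ scales as $C_4^6/\sqrt\epsilon$. Since I cannot reprove that theorem in full here, the proof in the paper would cite it and include only the covariance-to-second-moment remark.
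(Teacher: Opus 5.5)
This matches the paper, which does not reprove anything here. The lemma is Theorem~4 of \citet{jambulapati2024black} specialized to $k=1$, plus a footnote asserting that the guarantee carries over from covariance to second-moment matrices. Your justification for that footnote is the right one: the algorithm and its stability analysis use only uncentered weighted second moments $\sum_i w_i x_i x_i^\top$ and uncentered hypercontractivity. Your three-step sketch is just an informal recap of the cited proof. Rest the transfer on that observation alone and drop the aside that the mean is a rank-one PSD term that ``only helps.'' A guarantee stated only for the covariance need not transfer: in the paper's application $\mathrm{Cov}(YX)=\mathbb{E}[Y^2XX^\top]-\mathbb{E}[f'(Z)]^2\beta^\star\beta^{\star\top}$ (with $Z=X^\top\beta^\star$), and its top eigenvector is $\beta^\star$ only when $2c>\mathbb{E}[f'(Z)]^2$.
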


\section{Existence of Convex Basins in the Loss Landscape of SIMs}\label{sec:con-basin}
We now give a sufficient condition to characterize a class of link functions that admit a dimension-independent constant-sized convex basin around $\beta^\star$ in the loss landscape, in the following theorem:
\begin{restatable}{theorem}{lemConvexBasin}
\label{lem:local_geometry}
    Let $Z \sim \mathcal{N}(0,1)$. Define the second and fourth moment proxies, $\mu = \min\left\{\mathbb{E}\left[f'(Z)^2\right],~ \mathbb{E}\left[Z^2f'(Z)^2\right]\right\}$, and $\mu_{1}= \max\left\{\mathbb{E}\left[f'(Z)^2\right],~ \mathbb{E}\left[Z^2f'(Z)^2\right]\right\}$.
    Consider the model as given in \cref{def:model}, s.t. the link function $f$ satisfies \cref{assump:con-basin} with radius $R>0$, 
    then for all $\beta$ in the Euclidean ball $\mathcal{B}(\beta^\star, R)$, the Hessian $H(\beta) := \nabla^{2}\mathcal{L}(\beta)$ satisfies
    $$\frac{\mu}{2}\; \mathbf{I}_{d} \preceq H(\beta) \preceq \left(\frac{\mu}{2} + \mu_{1}\right) \mathbf{I}_{d}.$$
\end{restatable}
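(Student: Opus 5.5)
The plan is to compute $H(\beta^\star)$ exactly, bound its two extreme eigenvalues by $\mu$ and $\mu_1$, and then show that $H(\beta)$ moves by at most $\mu/2$ in operator norm over $\mathcal{B}(\beta^\star,R)$.

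\textbf{Step 1: the Hessian at $\beta^\star$.} Since $Y=f(X^\top\beta^\star)+\zeta$ with $\mathbb{E}[\zeta\mid X]=0$, conditioning on $X$ kills the noise part of the second Hessian term. This gives, for every $\beta$,
\[
H(\beta)=\mathbb{E}\!\left[\Big(f'(X^\top\beta)^2+\big(f(X^\top\beta)-f(X^\top\beta^\star)\big)f''(X^\top\beta)\Big)XX^\top\right].
\]
At $\beta=\beta^\star$ the second term vanishes. Decompose $X=Z\beta^\star+X_\perp$ with $Z=X^\top\beta^\star\sim\mathcal{N}(0,1)$ independent of $X_\perp$ (this uses $\|\beta^\star\|_2=1$). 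Then
\[
H(\beta^\star)=\mathbb{E}[f'(Z)^2](\mathbf{I}_d-\beta^\star\beta^{\star\top})+\mathbb{E}[Z^2f'(Z)^2]\,\beta^\star\beta^{\star\top}.
\]
One could also get this from \cref{lem:steins}. Its eigenvalues are exactly the two moments, so $\mu\mathbf{I}_d\preceq H(\beta^\star)\preceq\mu_1\mathbf{I}_d$. In particular the $\mu$ of \cref{assump:con-basin} coincides with the $\mu$ in the statement.

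\textbf{Step 2: perturbation bound.} It then suffices to show $\|H(\beta)-H(\beta^\star)\|_{\mathrm{op}}\le\mu/2$ for $\|\beta-\beta^\star\|\le R$; the two-sided bound follows by Weyl's inequality. Fix a unit vector $v$ and write $\Delta=\beta-\beta^\star$ and $\beta_t=\beta^\star+t\Delta\in\mathcal{B}(\beta^\star,R)$. Then $v^\top(H(\beta)-H(\beta^\star))v$ equals
\[
\mathbb{E}\!\left[\big(f'(X^\top\beta)^2-f'(X^\top\beta^\star)^2\big)(v^\top X)^2\right]+\mathbb{E}\!\left[\big(f(X^\top\beta)-f(X^\top\beta^\star)\big)f''(X^\top\beta)(v^\top X)^2\right].
\]
Apply the fundamental theorem of calculus along $t\mapsto\beta_t$ to each difference. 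The first becomes $\int_0^1 2f'(X^\top\beta_t)f''(X^\top\beta_t)\,X^\top\Delta\,dt$, and the second becomes $\int_0^1 f'(X^\top\beta_t)\,X^\top\Delta\,dt$ multiplied by $f''(X^\top\beta)$.

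\textbf{Step 3: H\"older estimates.} In each integrand, use H\"older with exponents $(4,4,2)$ on $|f'|$, $|f''|$ and $|X^\top\Delta|(v^\top X)^2$. The first two factors are bounded by $M_1(R)$ and $M_2(R)$, because $\beta_t$ and $\beta$ lie in the ball. For the third, apply H\"older again with exponents $(3,3/2)$:
\[
\mathbb{E}[(X^\top\Delta)^2(v^\top X)^4]\le\big(\mathbb{E}(X^\top\Delta)^6\big)^{1/3}\big(\mathbb{E}(v^\top X)^6\big)^{2/3}=15\|\Delta\|^2,
\]
using the Gaussian sixth moment $15$. The first term is therefore at most $2\sqrt{15}\,C_{\mathrm{lip}}(R)R$ and the second at most $\sqrt{15}\,C_{\mathrm{lip}}(R)R$. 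Their sum is $3\sqrt{15}\,C_{\mathrm{lip}}(R)R\le\mu/2$ by \cref{assump:con-basin}, which yields $\frac{\mu}{2}\mathbf{I}_d\preceq H(\beta)\preceq(\mu_1+\frac{\mu}{2})\mathbf{I}_d$.

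\textbf{Main obstacle.} The delicate part is the technical justification rather than the algebra. One needs the noise term to vanish at every $\beta$, which requires integrability of $\zeta f''(X^\top\beta)XX^\top$; this follows from finite noise moments and $M_2<\infty$. One also needs to interchange $\int_0^1dt$ with the expectation, which Fubini allows via the same H\"older bounds, since these are uniform in $t$ because $M_k(R)$ is a supremum over the ball. Finally, the H\"older split must be chosen so that the constant lands exactly at $3\sqrt{15}$, matching the factor $6\sqrt{15}$ in the assumption.
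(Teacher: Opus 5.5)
Your proposal is correct and follows the paper's proof essentially step for step. It uses the same conditioned Hessian formula, the same isotropic-plus-rank-one form of $H(\beta^\star)$, the same fundamental-theorem-of-calculus expansion along $\beta_t$, H\"older bounds giving $3\sqrt{15}\,C_{\mathrm{lip}}(R)\|\beta-\beta^\star\|_2\le \mu/2$, and Weyl's inequality; the only cosmetic difference is that you bound $\mathbb{E}[(X^\top h)^2(X^\top v)^4]$ by H\"older with exponents $(3,3/2)$, whereas the paper computes it exactly as $3\|h\|_2^2+12\langle h,v\rangle^2$, and both give $15\|h\|_2^2$ (tight when $h\parallel v$), so the constants match.
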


\cref{lem:local_geometry} establishes that whenever the link function satisfies
\cref{assump:con-basin}, the population loss landscape is $\frac{\mu}{2}$-strongly
convex and $\frac{\mu + 2\mu_{1}}{2}$-smooth throughout $\mathcal{B}(\beta^\star,R)$.
Crucially, both the strong convexity constant $\mu/2$ and the basin radius $R$ are
\emph{dimension-independent}: they are determined entirely by one-dimensional integrals
of $f$ and its derivatives against the standard Gaussian.  This dimension-independence
is the key structural fact that enables our robust recovery guarantees in
\cref{sec:lin}, since it allows the curvature of the loss to dominate the
adversarial bias uniformly over the entire ball $\mathcal{B}(\beta^\star,R)$, regardless
of the ambient dimension $d$.
We present an extended proof sketch below and defer the full proof to
\cref{sec:proofscon-basin} in the appendix.
\begin{proof}[Proof Sketch of \cref{lem:local_geometry}]
The proof proceeds in three stages: 
(i) characterizing the exact spectrum of the population Hessian at $\beta^\star$;
(ii) controlling how much the Hessian changes as $\beta$ moves away from $\beta^\star$; and
(iii) combining these bounds to establish strong convexity and smoothness throughout 
$\mathcal{B}(\beta^\star,R)$.

\textbf{Hessian Decomposition.}
For any $\beta$ near $\beta^\star$, write
\[
H(\beta)
=
H(\beta^\star)+\Delta(\beta),
\qquad
\Delta(\beta)
:=
H(\beta)-H(\beta^\star).
\]
By Weyl's inequality, $\lambda_{\min}(H(\beta))
\geq
\lambda_{\min}(H(\beta^\star))
-
\norm{\Delta(\beta)}_{\mathrm{op}}.$
Thus, it is enough to show that $H(\beta^\star)$ has dimension-independent positive curvature and that the perturbation remains sufficiently small throughout the ball.

\textbf{Strong Convexity at the Optimum.}
At $\beta^\star$, $H(\beta^\star)
=
\E\!\left[
f'(X^\top\beta^\star)^2 XX^\top
\right].$ 
Using $\norm{\beta^\star}_2=1$ and Gaussian symmetry, this matrix admits the isotropic-plus-rank-one decomposition
\[
H(\beta^\star)
=
\E[f'(Z)^2]\mathbf{I}_d
+
\left(
\E[Z^2f'(Z)^2]
-
\E[f'(Z)^2]
\right)
\beta^\star{\beta^\star}^{\top},
\]
where $Z\sim\mathcal{N}(0,1)$. Its eigenvalue in the direction $\beta^\star$ is $\E[Z^2f'(Z)^2]$, while every direction orthogonal to $\beta^\star$ has eigenvalue
$\E[f'(Z)^2]$. Therefore, $\lambda_{\min}(H(\beta^\star))
=\mu$, and $\lambda_{\max}(H(\beta^\star))=\mu_1,$
where $\mu$ and $\mu_1$ are respectively the minimum and maximum of these two one-dimensional Gaussian expectations.

\textbf{Perturbation Analysis.}
Let $z:=X^\top\beta$, and $z^\star:=X^\top\beta^\star$. The Hessian perturbation can be written as $\Delta(\beta)
=
\E\!\left[
\Delta Q(z,z^\star)XX^\top
\right]$, 
where
\[
\Delta Q(z,z^\star)
=
f'(z)^2-f'(z^\star)^2
+
f''(z)\bigl(f(z)-f(z^\star)\bigr).
\]
To expose the dependence on $\beta-\beta^\star$, consider the line segment $\beta_t:=(1-t)\beta^\star+t\beta$, $
z_t:=X^\top\beta_t$, $t\in[0,1]$. Applying the Fundamental Theorem of Calculus separately to the two differences above gives the exact representation $\Delta Q(z,z^\star)
=
X^\top(\beta-\beta^\star)
\int_0^1 C_t\,dt$,
where
\[
C_t
=
2f'(z_t)f''(z_t)
+
f''(z)f'(z_t).
\]

For a unit vector $v$, the quadratic form $v^\top\Delta(\beta)v$ is controlled by three ingredients.
First, Gaussianity bounds the mixed moment involving $X^\top(\beta-\beta^\star)$ and $(X^\top v)^2$. Second, H\"older's inequality and the definitions of $M_1(R)$ and $M_2(R)$ give $\norm{C_t}_{L^2}\leq 3M_1(R)M_2(R)$ uniformly over $t\in[0,1]$, because the entire segment
$\{\beta_t\}_{t\in[0,1]}$ remains inside $\mathcal{B}(\beta^\star,R)$. Third, Minkowski's integral inequality transfers this uniform estimate to
$\int_0^1 C_t\,dt$. Taking the supremum over all unit vectors yields
\[
\norm{\Delta(\beta)}_{\mathrm{op}}
\leq
3\sqrt{15}\,
M_1(R)M_2(R)
\norm{\beta-\beta^\star}_2.
\]

\textbf{Establishing the Convex Basin.}
Under \cref{assump:con-basin}, $6\sqrt{15}\,R\,M_1(R)M_2(R)\leq\mu,$
and hence $\norm{\Delta(\beta)}_{\mathrm{op}}\leq\frac{\mu}{2}.$ Weyl's inequality then gives $\lambda_{\min}(H(\beta))
\geq
\frac{\mu}{2}$, and $\lambda_{\max}(H(\beta))\leq \mu_1+\frac{\mu}{2}$.
Thus,
$\frac{\mu}{2}\mathbf{I}_d \preceq H(\beta)\preceq\left(\mu_1+\frac{\mu}{2}\right)\mathbf{I}_d$ throughout $\mathcal{B}(\beta^\star,R)$.
Finally, $M_1(R)$ and $M_2(R)$ depend only on one-dimensional Gaussian moments, since
$X^\top u\sim\mathcal{N}(0,\norm{u}_2^2)$ for every fixed $u$. Hence, the certified radius is independent of the dimension $d$.
\end{proof}

\section{Linear-Time Robust Recovery of SIMs}\label{sec:lin}
In this section, we present our linear-sample and time algorithm for robustly recovering the true signal $\beta^\star$ when the link function admits a convex basin in the loss landscape under heavy-tailed noise and strong adversarial contamination. 
We now formally state our main result.
\begin{theorem}[Linear-time Algorithm for Robust Recovery]\label{thm:lintime}
Consider the model in \cref{def:model}. 
Define $C_{\mathrm{lip}}(R)$ and $R$ as in \cref{lem:local_geometry}. Define $\alpha=\frac{\mu}{2}+\mu_{1}, \gamma=\frac{\mu}{2}$ denote the smoothness and strong convexity parameters of \cref{lem:local_geometry}. Define $$\phi_{1}:=\sup_{\beta\in \mathcal{B}(\beta^\star, R)}\mathbb{E}\!\left[f'(X^{\top}\beta)^{16})\right]^{1/4},$$ and $$\phi_{2}:=\sup_{\beta\in \mathcal{B}(\beta^\star, R)}\mathbb{E}\!\left[f'(X^{\top}\beta)^{4}\right]^{1/2},$$ and assume $K_{4}\leq K$. Define $c := \mathrm{ESC}(\beta^\star; f)$, and let $C_{4}$ be the hypercontractivity parameter as defined in \cref{lem:hypercontracitivity of YX}.
\cref{alg:lineartimealgo} takes $n=\tilde{O}(m+P\tilde{m})$ samples from an 
    $\epsilon$-contaminated dataset $T$, such that $$\epsilon=O\left(\min\left\{\frac{1}{C_{4}^{4}}, \frac{ c^2 \min\{R^4,1\}}{C_{4}^4\left(
\sigma^2 + \mathbb{E}[f^2] + c
\right)^2},\frac{\gamma^2}{\phi_{1}},\frac{\gamma^2 R^2}{\sigma^2\phi_{2}}\right\}\right),$$ and outputs an estimate $\beta$ of the true parameter in time $\tilde{o}\left(\frac{md}{C_{4}^{4}}+P\tilde{m}d\right)$ w.h.p., s.t.
    \begin{equation}
        \norm{\beta-\beta^\star}= O(\sigma\sqrt{\epsilon}),
    \end{equation} 
where $m=\Theta\left(C_{4}^2
\cdot\left(\frac{d \log d + \log(1/\delta)}{\epsilon^{3/2}}\right)
\right), \tilde{m}=\tilde{O}(d/\epsilon)$ and $P=O(1)$ denotes the number of iterations of the \textsc{LRGD} algorithm (see \cref{algorithm:robust gradeint descent_linear}).

\end{theorem}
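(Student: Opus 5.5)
The proof has two stages, mirroring \cref{alg:lineartimealgo}: a robust spectral initialization that lands in $\mathcal{B}(\beta^\star,R)$, followed by \textsc{LRGD} iterations that contract towards $\beta^\star$ up to an $O(\sigma\sqrt{\epsilon})$ floor.

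\textbf{Stage 1: spectral initialization.} Consider the second-moment matrix $\Sigma:=\mathbb{E}[Y^2XX^\top]$. Apply \cref{lem:steins} to $g(X)=f(X^\top\beta^\star)^2$, and use that the noise is independent in mean with $\mathbb{E}[\zeta^2\mid X]=\sigma^2$. This gives
\[
\Sigma=\bigl(\sigma^2+\mathbb{E}[f(Z)^2]\bigr)\mathbf{I}_d+2\,c\,\beta^\star{\beta^\star}^\top ,
\]
where $c=\mathrm{ESC}(\beta^\star,f)$, because $\nabla^2 f^2(X^\top\beta^\star)=2(f'^2+ff'')\beta^\star{\beta^\star}^\top$.

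Since $c>0$ by \cref{assump:sq-curv}, the top eigenvector of $\Sigma$ is $\beta^\star$, with spectral gap $2c$. The vector $YX$ is $(4,C_4)$-hypercontractive by \cref{lem:hypercontracitivity of YX}. We may therefore run the robust PCA routine of \cref{thm:robust_hypercontractive_kePCA} on the samples $y_ix_i$, using $m$ samples and $\rho=\Theta(C_4^2\sqrt\epsilon)$. The output $\hat u$ satisfies
\[
\hat u^\top\Sigma\hat u\ge(1-O(\rho))\bigl(\sigma^2+\mathbb{E}[f^2]+2c\bigr).
\]
Writing $\hat u^\top\Sigma\hat u=\sigma^2+\mathbb{E}[f^2]+2c\langle\hat u,\beta^\star\rangle^2$, this yields
\[
1-\langle \hat u,\beta^\star\rangle^2\le O\!\left(\rho\,\frac{\sigma^2+\mathbb{E}[f^2]+c}{c}\right).
\]
Hence $\min_{s=\pm1}\|s\hat u-\beta^\star\|_2=O(\epsilon^{1/4})$.

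The sign ambiguity remains for asymmetric $f$. We resolve it by comparing a robust estimate of the empirical loss, or of the first moment $\mathbb{E}[YX]$ when it is nonzero, at $\pm\hat u$. Alternatively, we run \textsc{LRGD} from both signs and keep the candidate with smaller robustly estimated loss. The second condition on $\epsilon$ in the theorem is designed so that this $O(\epsilon^{1/4})$ error is at most $R$. It thus places $\beta_0$ in the convex basin of \cref{lem:local_geometry}.

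\textbf{Stage 2: robust gradient descent.} Inside $\mathcal{B}(\beta^\star,R)$, \cref{lem:local_geometry} gives $\gamma$-strong convexity and $\alpha$-smoothness of $\mathcal{L}$. At each iterate, we estimate $\nabla\mathcal{L}(\beta_t)=\mathbb{E}[(f(X^\top\beta_t)-Y)f'(X^\top\beta_t)X]$ with \cref{thm:lin-tim-robut-Mean Value Theorem}, using fresh batches of $\tilde m$ samples.

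The key step is a bound on the gradient-sample covariance:
\[
\|\mathrm{Cov}\|_{\mathrm{op}}\lesssim \sqrt{\phi_1}\,\|\beta_t-\beta^\star\|^2+\sigma^2\sqrt{\phi_2}.
\]
To obtain it, split the residual into a signal part $f(X^\top\beta_t)-f(X^\top\beta^\star)$ and the noise $\zeta$. Bound the signal part by the Mean Value Theorem together with Cauchy--Schwarz and Gaussian moment bounds, which is where the $16$th moment of $f'$ enters. Bound the noise part by $\sigma^2\,\mathbb{E}[f'^2(X^\top v)^2]\le\sigma^2\sqrt{\phi_2}\cdot O(1)$.

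The estimated gradient then has error $e_t=O(\sqrt\epsilon(\phi_1^{1/4}\|\beta_t-\beta^\star\|+\sigma\phi_2^{1/4}))$. The standard step with size $1/\alpha$ gives
\[
\|\beta_{t+1}-\beta^\star\|\le(1-\gamma/\alpha)\|\beta_t-\beta^\star\|+\|e_t\|/\alpha .
\]
The conditions $\epsilon\lesssim\gamma^2/\phi_1$ absorb the proportional error term into the contraction. The condition $\epsilon\lesssim\gamma^2R^2/(\sigma^2\phi_2)$ keeps the iterates inside the ball, by induction. After $P=O(\log(1/\epsilon))$ iterations, which is $O(1)$ up to logarithms, the error is $O(\sigma\sqrt\epsilon)$. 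A final normalization does not increase the error by more than a constant factor.

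Sample and time complexity follow by summing the two stages, and a union bound over the $P$ batches gives the high-probability statement.

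\textbf{Main obstacle.} I expect the hardest part to be the Stage 2 covariance bound, which must hold uniformly over the ball with dimension-free constants. It requires careful Hölder bookkeeping that couples $f'(X^\top\beta_t)^2$, $(X^\top v)^2$, and the residual. Sign disambiguation for asymmetric links is a secondary subtlety.
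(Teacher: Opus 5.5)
\textbf{Verdict.} Your two-stage plan follows the paper's proof: Stein's lemma gives $\mathbb{E}[Y^2XX^\top]=(\sigma^2+\mathbb{E}[f^2])\mathbf{I}_d+2c\,\beta^\star{\beta^\star}^\top$, hypercontractivity of $YX$ plus robust $1$-ePCA gives an $O(\epsilon^{1/4})$ warm start, the second $\epsilon$-condition puts it in $\mathcal{B}(\beta^\star,R)$, and robust-mean gradient descent with an induction keeping iterates in the ball finishes. Using step size $1/\alpha$ instead of $2/(\alpha+\gamma)$ makes no difference. Your $P=O(\log(1/\epsilon))$ is actually what is needed to push the $Re^{-\Theta(P\gamma/\alpha)}$ term below $\sigma\sqrt{\epsilon}$; the paper treats $\epsilon$ as a constant. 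There are, however, gaps in Stage 2 and in the sign step.

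\textbf{Stage 2: the covariance bound.} Your bound has the wrong powers. The correct bound (\cref{lemma: calculations of trace and operator norm of gradients}) is $\|\Sigma\|_{\mathrm{op}}\le 6\phi_1\|\beta-\beta^\star\|^2+\sqrt{3}\,\sigma^2\phi_2$. For instance, Cauchy--Schwarz gives $\mathbb{E}[f'(X^\top\beta)^2(X^\top v)^2]\le\sqrt{3}\,\mathbb{E}[f'(X^\top\beta)^4]^{1/2}\le\sqrt{3}\,\phi_2$, not $\sqrt{\phi_2}$. Hence $A=O(\sqrt{\phi_1\epsilon})$ and $B=O(\sigma\sqrt{\phi_2\epsilon})$, which is exactly what the hypotheses $\epsilon\lesssim\gamma^2/\phi_1$ and $\epsilon\lesssim\gamma^2R^2/(\sigma^2\phi_2)$ are calibrated to. With your $\phi_1^{1/4},\phi_2^{1/4}$ the conditions would not match.

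\textbf{Stage 2: mean-value point.} A pointwise mean-value point does not work in general. $\phi_1$ controls moments of $f'(X^\top\beta)$ only for deterministic $\beta$ in the ball, whereas the mean-value point $X^\top\beta_{t(X)}$ depends on $X$. Instead, with $h=\beta-\beta^\star$ and $z_t=X^\top(\beta^\star+th)$, write $f(X^\top\beta)-f(X^\top\beta^\star)=(X^\top h)\int_0^1 f'(z_t)\,dt$. Move the square inside the integral by Jensen and Fubini, then apply Cauchy--Schwarz and H\"older with exponents $(4,4,2)$ together with $\mathbb{E}[(X^\top h)^8]=105\|h\|^8$, as the paper does.

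\textbf{The sign step.} Here you go beyond the paper. The paper states the spectral guarantee in $\operatorname{dist}$ and then asserts that the \textsc{LRGD} analysis is identical around $-\beta^\star$. That symmetry holds only for even $f$. For GeLU and Swish, $f(z)-f(-z)=z$, and a direct computation gives $\nabla\mathcal{L}(-\beta^\star)=-\tfrac12\beta^\star\neq0$. So there is no basin at $-\beta^\star$, and your concern is well founded.

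Your fix, however, is not licensed by the theorem's hypotheses. You can compare robust loss estimates at $\pm\hat u$, or test the sign of a robust estimate of $\mathbb{E}[YX]=\mathbb{E}[f'(Z)]\beta^\star$. Either test works only when $\epsilon^{1/4}$, times moment constants, is small compared with the gap $\tfrac12\mathbb{E}[(f(Z)-f(-Z))^2]$ (respectively $|\mathbb{E}[f'(Z)]|$). None of the stated conditions forces this. Take $f(z)=z^2+\eta z$ with Gaussian noise: every constant in the theorem stays bounded as $\eta\to0$, while the models with $\beta^\star$ and $-\beta^\star$ are $O(\eta/\sigma)$-close in total variation. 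So no algorithm can fix the sign at constant $\epsilon$.

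The variant that runs \textsc{LRGD} from both signs is weaker still. Nothing controls the wrong-sign trajectory, and a small loss does not imply proximity to $\beta^\star$ without a global identifiability bound. You therefore need an explicit gap condition for the test at $\pm\hat u$. For even links such as phase retrieval, the conclusion should be stated up to a global sign.
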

In the above \cref{thm:lintime}, $m$ is the sample complexity for the robust spectral initialization subroutine (see \cref{alg:robust_spectral_init_lin_time}) and $\tilde{m}$ is the sample complexity for the robust gradient descent subroutine (see \cref{algorithm:robust gradeint descent_linear}).
We begin by giving the pseudocode of \cref{alg:lineartimealgo} in \cref{thm:lintime} along with a high-level overview.
\begin{algorithm}[h]
\label{algo:lintime}
\caption{Linear-time Algorithm for Robust Recovery}
\label{alg:lineartimealgo}
\begin{algorithmic}[1]
\STATE \textbf{Input:} Samples $S = \{(x_i, y_i)\}_{i=1}^N$, Corruption $\epsilon$, parameters $P$, $\alpha, \gamma$.
% \STATE \textbf{Output:} Estimator $\beta_P/\norm{\beta_{P}}.$
\STATE Randomly partition the $N$ samples into $P+1$ disjoint buckets of equal sizes, denoted by $
N_1, N_2, \ldots, N_{P+1}.
$
\STATE $\beta_0 \gets \textsc{LRSI}(N_{1},\epsilon)$ \COMMENT{Initialize in convex basin}
\STATE $\beta_{P} \gets \textsc{LRGD}( N_{2}\ldots N_{P+1},\beta_{0},\epsilon, \alpha, \gamma)$ 
\STATE \textbf{Output:} $\beta_P/\norm{\beta_{P}}.$
\end{algorithmic}
\end{algorithm}

First, we begin by recalling that if our link function $f$ satisfies \cref{assump:con-basin,assump:sq-curv}, the population loss-landscape admits a convex basin in a neighborhood of $\beta^\star$ (see \cref{lem:local_geometry}). \cref{alg:lineartimealgo} exploits this structure via the \textsc{LRSI} subroutine (see~\cref{alg:robust_spectral_init_lin_time}) which combines generalized higher-order Stein’s  identities (see \cref{lem:steins}) together with the linear-time robust hypercontractive $1$-ePCA algorithm as described in \cref{thm:robust_hypercontractive_kePCA} to construct an estimate $\beta_0$ of the true signal $\beta^\star$ that lies within the convex basin. Finally, using $\beta_0$ as a warm start, the \textsc{LRGD} algorithm (see~\cref{algorithm:robust gradeint descent_linear}) converges to $\beta^\star$.
We first carefully detail the robust spectral initialization step, then the robust gradient descent step, and finally the proof of \cref{thm:lintime}.
\subsection{Warm Start via Linear time Spectral Initialization}
\begin{restatable}{lemma}{lemSpectralIni}
\label{lem:spectral}
    Consider the model given in \cref{def:model} and define $\Tilde{Y}:=YX$. Then, $\beta^\star$ is the top eigenvector of $\expect{\Tilde{Y}\Tilde{Y}^T}$  with eigenvalue $\lambda_{\max}= \sigma^2+\expect{(f({X^T\beta^\star}))^2}+2\expect{(f^{\prime}{X^T\beta^\star})^2+f({X^T\beta^\star})\cdot f^{\prime\prime}({X^T\beta^\star})}.$
\end{restatable}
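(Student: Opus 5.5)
We compute the second-moment matrix $\mathbb{E}[\tilde Y\tilde Y^\top]=\mathbb{E}[Y^2XX^\top]$ directly, by separating the noise from the signal and then applying the second-order Stein identity (\cref{lem:steins}) to the signal part. Expanding $Y^2=f(X^\top\beta^\star)^2+2f(X^\top\beta^\star)\zeta+\zeta^2$, the tower property gives two simplifications. First, $\mathbb{E}[\zeta\mid X]=0$ kills the cross term. Second, homoscedasticity $\mathbb{E}[\zeta^2\mid X]=\sigma^2$ turns the noise term into $\sigma^2\mathbb{E}[XX^\top]=\sigma^2\mathbf{I}_d$. Hence
\[
\mathbb{E}[\tilde Y\tilde Y^\top]=\sigma^2\mathbf{I}_d+\mathbb{E}\!\left[g(X)XX^\top\right],\qquad g(x):=f(x^\top\beta^\star)^2 .
\]

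Next we write $\mathbb{E}[g(X)XX^\top]=\mathbb{E}[g(X)]\mathbf{I}_d+\mathbb{E}[g(X)(XX^\top-\mathbf{I}_d)]$ and apply \cref{lem:steins} to the second piece. By the chain rule, $\nabla^2 g(x)=(f^2)''(x^\top\beta^\star)\,\beta^\star{\beta^\star}^\top$, where $(f^2)''=2(f'^2+ff'')$. Therefore
\[
\mathbb{E}[\tilde Y\tilde Y^\top]=\bigl(\sigma^2+\mathbb{E}[f(X^\top\beta^\star)^2]\bigr)\mathbf{I}_d+2\,\mathrm{ESC}(\beta^\star,f)\,\beta^\star{\beta^\star}^\top .
\]
Because $\|\beta^\star\|_2=1$, the vector $\beta^\star$ is an eigenvector with eigenvalue $\sigma^2+\mathbb{E}[f^2]+2\,\mathrm{ESC}(\beta^\star,f)$. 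Every vector orthogonal to $\beta^\star$ has eigenvalue $\sigma^2+\mathbb{E}[f^2]$. By \cref{assump:sq-curv} we have $\mathrm{ESC}(\beta^\star,f)>0$, so $\beta^\star$ is the top eigenvector, it is unique up to sign, and the eigenvalue matches the one claimed in the lemma.

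The main obstacle is justifying Stein's lemma for $g=f^2\circ(\cdot^\top\beta^\star)$, namely integrability of $\nabla^2 g$ and the vanishing boundary terms in the Gaussian integration by parts. We will reduce this to one dimension: decompose $X=Z\beta^\star+X_\perp$ with $Z\sim\mathcal{N}(0,1)$ independent of $X_\perp$. The orthogonal blocks then reduce to $\mathbb{E}[f(Z)^2]$ times the identity on $\beta^{\star\perp}$, and the mixed terms vanish by independence and zero mean. The $\beta^\star$ block is $\mathbb{E}[Z^2f(Z)^2]=\mathbb{E}[f(Z)^2]+\mathbb{E}[(f^2)''(Z)]$ by the univariate identity. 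The finite fourth moments of $f'$ and $f''$ from \cref{assump:con-basin}, together with finiteness of $\mathbb{E}[f^2]$, ensure these integrals exist and that the boundary terms decay against the Gaussian density. This block computation also serves as an independent check of the Stein calculation.
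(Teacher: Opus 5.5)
Your proposal is correct and follows the paper's proof essentially step for step. You use $\mathbb{E}[\zeta\mid X]=0$ and $\mathbb{E}[\zeta^2\mid X]=\sigma^2$ to reduce to $\sigma^2\mathbf{I}_d+\mathbb{E}[f(X^\top\beta^\star)^2XX^\top]$, then apply the second-order Stein identity to $f(X^\top\beta^\star)^2$ to get the isotropic-plus-rank-one form, and $\mathrm{ESC}(\beta^\star,f)>0$ makes $\beta^\star$ the top eigenvector. Your one-dimensional check via $X=Z\beta^\star+X_\perp$ (the same decomposition as the paper's Gaussian-symmetry lemma) adds a justification of Stein's lemma that the paper leaves implicit.
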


\Cref{lem:spectral} and \cref{assump:sq-curv} state that for the random variable $\widetilde{Y} = YX$, where $Y,X$ are defined in \cref{def:model} s.t. the link function $f$ satisfies \cref{assump:sq-curv}, the vector $\beta^\star$ is the leading eigenvector of the matrix $\mathbb{E}[\widetilde{Y}\widetilde{Y}^\top]$, with eigenvalue 
$\lambda_{\max}$. For a warm start, we robustly estimate the leading eigenvector of the second-moment matrix of $\widetilde{Y}$. To this end, we employ the robust hypercontractive $1$-ePCA algorithm. The applicability of this method requires the distribution of $\widetilde{Y}$ to be hypercontractive, a property we establish in \cref{lem:hypercontracitivity of YX}.
\begin{restatable}{lemma}{hypercontrativityofYX}
 \label{lem:hypercontracitivity of YX}
 Consider the model in \cref{def:model}. Then, $\widetilde{Y} = YX$ is $(4,C_{4})$ hypercontractive, where $$C_{4}={3\left(\mathbb{E}\left[ f\!\left(X^\top \beta^\star\right)^8\right]^{1/8}+ K_{4}\right)}/{\sigma}.$$
\end{restatable}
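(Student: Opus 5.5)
Fix a unit vector $u\in\mathbb{R}^d$. Hypercontractivity asks us to show $\mathbb{E}[\langle u,\widetilde{Y}\rangle^4]^{1/4}\le C_4\,\mathbb{E}[\langle u,\widetilde{Y}\rangle^2]^{1/2}$ with $\langle u,\widetilde{Y}\rangle = Y\,X^\top u$. The plan is to bound the numerator from above and the denominator from below, each separately and uniformly in $u$.

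\textbf{Step 1 (fourth moment, upper bound).} Write $Y=f(X^\top\beta^\star)+\zeta$ and use Minkowski in $L^4$:
\[
\|Y X^\top u\|_{L^4}\le \|f(X^\top\beta^\star)X^\top u\|_{L^4}+\|\zeta X^\top u\|_{L^4}.
\]
For the first term, apply Cauchy--Schwarz to get $\|f(X^\top\beta^\star)\|_{L^8}\,\|X^\top u\|_{L^8}$. Since $X^\top u\sim\mathcal{N}(0,1)$, we have $\|X^\top u\|_{L^8}=105^{1/8}\le 3$. For the second term, condition on $X$ and use $\mathbb{E}[\zeta^4\mid X]=K_4^4$. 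This gives $\mathbb{E}[\zeta^4 (X^\top u)^4]=K_4^4\cdot 3$, so $\|\zeta X^\top u\|_{L^4}=3^{1/4}K_4\le 3K_4$. Combining the two terms,
\[
\mathbb{E}[\langle u,\widetilde{Y}\rangle^4]^{1/4}\le 3\bigl(\mathbb{E}[f(X^\top\beta^\star)^8]^{1/8}+K_4\bigr).
\]

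\textbf{Step 2 (second moment, lower bound).} Expand the square: $\mathbb{E}[Y^2(X^\top u)^2]=\mathbb{E}[f^2(X^\top u)^2]+2\,\mathbb{E}[f\zeta(X^\top u)^2]+\mathbb{E}[\zeta^2(X^\top u)^2]$. The cross term vanishes because $\mathbb{E}[\zeta\mid X]=0$. The noise term equals $\sigma^2\,\mathbb{E}[(X^\top u)^2]=\sigma^2$ by homoscedasticity. The remaining term $\mathbb{E}[f^2(X^\top u)^2]$ is nonnegative, so $\mathbb{E}[\langle u,\widetilde{Y}\rangle^2]\ge\sigma^2$.

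Dividing the Step 1 bound by $\sigma=(\sigma^2)^{1/2}$ gives exactly the claimed $C_4$. This holds uniformly in $u$, and the general non-unit case follows by homogeneity.

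\textbf{Main obstacle.} The only delicate point is the lower bound. Without using the noise, one would need $\mathbb{E}[f^2(X^\top u)^2]$ bounded below uniformly in $u$, which could degenerate. Instead, discarding that term and relying solely on $\sigma^2$ gives a clean uniform bound. The price is the $1/\sigma$ dependence in $C_4$, which matches the stated constant. A secondary check is that the conditioning steps are legitimate, which requires $\mathbb{E}[f^8]<\infty$, implicitly assumed by the finiteness of $C_4$.
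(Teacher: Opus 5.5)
Your proof is correct and has the same structure as the paper's. You split $Y=f(X^\top\beta^\star)+\zeta$, control the $f$-part by Cauchy--Schwarz against $\mathbb{E}[(X^\top u)^8]=105$, and control the noise part by conditioning on $X$. You then lower-bound the second moment by $\sigma^2$, after the cross term vanishes and you discard $\mathbb{E}[f^2(X^\top u)^2]\ge 0$.

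The one real difference is how you combine the two pieces of the fourth moment, and it affects the constant. You use Minkowski in $L^4$. The paper instead uses $(a+b)^4\le 8(a^4+b^4)$ followed by $(a+b)^{1/4}\le a^{1/4}+b^{1/4}$, which produces $\bigl(88\sqrt{\mathbb{E}[f^8]}+24K_4^4\bigr)^{1/4}$. Even with the exact value $8\sqrt{105}$ in place of $88$, the coefficient $(8\sqrt{105})^{1/4}\approx 3.01$ already exceeds $3$. So the paper's proof actually concludes $C_4=4\bigl(\mathbb{E}[f^8]^{1/8}+K_4\bigr)/\sigma$, which is also the value used in the spectral-initialization theorem, not the factor $3$ in the lemma statement. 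Your bound $105^{1/8}\,\mathbb{E}[f^8]^{1/8}+3^{1/4}K_4\approx 1.79\,\mathbb{E}[f^8]^{1/8}+1.32\,K_4$ does establish the lemma with the stated constant $3$; a factor of $2$ would even suffice.
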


We now state the guarantees for the \textsc{LRSI} algorithm (see \cref{alg:robust_spectral_init_lin_time}) that uses the PCA algorithm of \cite{jambulapati2024black} as a subroutine.
\begin{algorithm}[h]
\caption{Linear-Robust-Spectral-Initialization \textsc{(LRSI)}}
\label{alg:robust_spectral_init_lin_time}
\begin{algorithmic}[1]
\STATE \textbf{Input:} Sample sets $N_1$, corruption level $\epsilon$.
\STATE \textbf{Output:} Initial estimate $\beta_0$
\STATE Consider the samples $N_{1}$, define $X'_j=y_jx_j$. Apply the Robust hypercontractive 1-ePCA algorithm to $\{X'_j\}$ and let $\hat{u}$ be the top eigenvector estimate.
\STATE \textbf{Return} $\beta_0\gets\hat{u}$.
\end{algorithmic}
\end{algorithm}
\begin{restatable}[Linear-time algorithm for spectral initialization]{theorem}{lintimespecinitalization}
   \label{thm:lin-time-spec-ini}
Consider \cref{def:model}. Let $c=\mathrm{ESC}(\beta^\star; f)$, $\delta\in(0,1)$. Let $C_{4}$ be hypercontractivity constant of $\tilde{Y}=YX$ as defined in \cref{lem:hypercontracitivity of YX}. For contamination parameter $\epsilon=O\left(\min\{\frac{1}{C_{4}^{4}},\frac{c^2}{C_{4}^4
\left(
\sigma^2 + \mathbb{E}[f^2] + c
\right)^2}\}\right)$, w.p. $\geq 1-\delta$, the \cref{alg:robust_spectral_init_lin_time} takes time $O\left(
\frac{m d }{C_{4}^4}
\operatorname{polylog}\left(\frac{d}{\epsilon \delta}\right)
\right)$ and $m=\Theta\left(C_{4}^2
\frac{d \log d + \log(1/\delta)}{\epsilon^{3/2}}
\right)$ samples to output a unit norm  vector $\beta_0$ s.t. \[\operatorname{dist}(\beta_0, \beta^\star) =O\left(
\frac{
C_4 \epsilon^{\frac{1}{4}}\sqrt{
\sigma^2 + \mathbb{E}\!\left[f\!\left(X^\top \beta^\star\right)^2\right] + c}
}{
\sqrt{c}
}
\right).\] 
\end{restatable}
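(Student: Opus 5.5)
The plan is to combine the population spectral structure of \cref{lem:spectral}, the hypercontractivity of \cref{lem:hypercontracitivity of YX}, and the robust PCA guarantee of \cref{thm:robust_hypercontractive_kePCA}. The approximate-energy guarantee will then be turned into an angular guarantee via a Davis--Kahan-type argument that exploits the isotropic-plus-rank-one structure of $\Sigma:=\mathbb{E}[\widetilde{Y}\widetilde{Y}^\top]$.

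\textbf{Step 1: exact spectrum of $\Sigma$.} Writing $Y=f(X^\top\beta^\star)+\zeta$ and using $\mathbb{E}[\zeta\mid X]=0$ and $\mathbb{E}[\zeta^2\mid X]=\sigma^2$, I get
\[
\Sigma=\sigma^2\mathbf{I}_d+\mathbb{E}[f(X^\top\beta^\star)^2XX^\top].
\]
Applying \cref{lem:steins} to $g(X)=f(X^\top\beta^\star)^2$ gives
\[
\Sigma=(\sigma^2+\mathbb{E}[f^2])\mathbf{I}_d+2c\,\beta^\star{\beta^\star}^\top .
\]
This follows from $\nabla^2 g=(f^2)''(X^\top\beta^\star)\beta^\star{\beta^\star}^\top$ and $\mathbb{E}[(f^2)'']=2\,\mathrm{ESC}(\beta^\star,f)=2c$. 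Hence the top eigenvalue is $\lambda_1=\sigma^2+\mathbb{E}[f^2]+2c$, attained at $\beta^\star$, every other eigenvalue equals $\lambda_2=\sigma^2+\mathbb{E}[f^2]$, and the gap is $2c>0$ by \cref{assump:sq-curv}.

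\textbf{Step 2: invoke robust PCA.} By \cref{lem:hypercontracitivity of YX}, $\widetilde{Y}$ is $(4,C_4)$-hypercontractive, so \cref{thm:robust_hypercontractive_kePCA} applies with $\rho=\Theta(C_4^2\sqrt{\epsilon})$. The condition $\epsilon\lesssim C_4^{-4}$ ensures $\rho<\rho_0$. The sample size $|T|=\Theta(\vartheta(d\log d+\log(1/\delta))/\rho^2)$ with $\vartheta=C_4^6/\sqrt\epsilon$ simplifies to $m=\Theta(C_4^2(d\log d+\log(1/\delta))/\epsilon^{3/2})$. The running time $O(\tfrac{md}{\rho^2}\operatorname{polylog})$ should likewise be rewritten in terms of $m$, and I would simply carry the constants through from the lemma. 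The output is a unit vector $\hat u$ with $\hat u^\top\Sigma\hat u\ge(1-O(\rho))\lambda_1$.

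\textbf{Step 3: energy to angle.} Decompose $\hat u=a\beta^\star+\sqrt{1-a^2}\,w$ with $w\perp\beta^\star$. Then
\[
\hat u^\top\Sigma\hat u=\lambda_2+2c\,a^2\ge(1-O(\rho))\lambda_1 ,
\]
which gives
\[
2c(1-a^2)\le O(\rho)\lambda_1, \qquad\text{so}\qquad 1-a^2\le O\!\left(\frac{\rho(\sigma^2+\mathbb{E}[f^2]+c)}{c}\right).
\]
Since $\hat u$ is determined only up to sign, define $\operatorname{dist}(\beta_0,\beta^\star)=\min_{s=\pm1}\|s\hat u-\beta^\star\|$. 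Then $\operatorname{dist}^2=2(1-|a|)\le 2(1-a^2)$. Substituting $\rho=\Theta(C_4^2\sqrt\epsilon)$ and taking square roots yields
\[
\operatorname{dist}\le O\!\left(C_4\epsilon^{1/4}\sqrt{(\sigma^2+\mathbb{E}[f^2]+c)/c}\right).
\]
The second condition on $\epsilon$ keeps this bound below a constant, so it is nonvacuous.

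\textbf{Main obstacle.} I expect the main difficulty to be Step 2 rather than the algebra. It requires checking that the guarantee of \cref{thm:robust_hypercontractive_kePCA}, which is stated for covariance matrices, transfers to the uncentered second-moment matrix of $\widetilde{Y}$. Here $\widetilde{Y}$ has mean $\mathbb{E}[YX]=\mathbb{E}[f'(Z)]\beta^\star$ by first-order Stein, which need not be zero. I would handle this by invoking the footnoted claim that the result applies verbatim to second-moment matrices, since hypercontractivity is itself stated for raw moments. Bookkeeping of the sample-size and time exponents is the remaining routine work.
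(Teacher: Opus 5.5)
Your proposal is correct and follows the paper's proof step for step. The three stages match: the Stein-based spectrum $\Sigma=(\sigma^2+\mathbb{E}[f^2])\mathbf{I}_d+2c\,\beta^\star{\beta^\star}^\top$, then the robust hypercontractive $1$-ePCA invocation with $\rho=\Theta(C_4^2\sqrt{\epsilon})$ (relying, as the paper does, on the footnote that it covers second-moment matrices), then the conversion of $2c(1-\langle\hat u,\beta^\star\rangle^2)=O(\rho\lambda_1)$ into the sign-invariant distance bound. Your inequality $1-|a|\le 1-a^2$ is the paper's $\sqrt{1-x}\ge 1-x$ step; applying it before substituting the $O(\cdot)$ bound shows the second contamination condition is not needed for that inequality, only to keep the bound nonvacuous.
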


\begin{proof}[Proof Sketch]
\cref{lem:hypercontracitivity of YX} establishes that $\widetilde{Y}=YX$ follows a $(4,C_4)$-hypercontractive distribution, where the constant $C_4$ is specified in \cref{lem:hypercontracitivity of YX}. This property allows us to directly invoke \cref{thm:robust_hypercontractive_kePCA},
which yields the guarantees for the spectral initialization step of our algorithm.
\end{proof}
\begin{remark}
    Note that the ESC assumption (\cref{assump:sq-curv}) alone guarantees the existence of an estimator $\beta_{0}$ such that $\|\beta_{0} - \beta^{\ast}\| = O\!\left(\epsilon^{1/4}\right).$
\end{remark}
\subsection{Linear Robust Gradient Descent}
The second key step, following phase retrieval, is to perform gradient descent on the population risk, $\beta_{t+1} = \beta_t - \eta \nabla \mathcal{L}(\beta_t).$
By Theorem~3.12 of \citet{bubeck2015convex} together with Lemma~\ref{lem:local_geometry}, the iteration stated above converges linearly to the global minimizer, provided that all iterates remain within the ball $\mathcal{B}(\beta^\star,R)$ and the step size is chosen as $\eta = \frac{2}{\alpha+\gamma}$, where $\alpha$ and $\gamma$ denotes the smoothness and the strong convexity parameters, respectively. \cref{lem:local_geometry} implies that, for the population loss $\mathcal{L}(\beta)$, the strong convexity parameter is $\gamma =\frac{\mu}{2}$ and the smoothness parameter is $\alpha = \frac{\mu}{2} + \mu_1$.
Since the learning algorithm only has access to the dataset and not the population gradient $\nabla \mathcal{L}(\beta)$, the update stated above cannot be implemented directly. To address this, we follow the standard approach of expressing the gradient as an expectation \cite{prasad2020robust,bunaR24_robust_phase}. In particular, $\nabla \mathcal{L}(\beta)
=
\mathbb{E}\!\left[(f(X^\top \beta)-Y)f'(X^\top \beta)X\right].$
%\begin{equation}\label{eq:grad-expectation}
%\nabla \mathcal{L}(\beta)
%=
%\mathbb{E}\!\left[(f(X^\top \beta)-Y)f'(X^\top \beta)X\right].
%\end{equation}
Substituting the value of $\nabla \mathcal{L}(\beta)
$ into above update yields
$$\beta_{t+1}
=
\beta_t
-
\eta\,\mathbb{E}\!\left[(f(X^\top \beta_t)-Y)f'(X^\top \beta_t)X\right].$$
We then replace the expectation with a robust estimator of the gradient. We state the definition of a robust gradient estimator (see Definition 2.1.1 in \cite{bunaR24_robust_phase}) below. 
\begin{definition}\label{definition:rodust estimator of gradient}[Robust Gradient Estimator]
  Consider a sample \( T = \left\{ \left( x_i, y_i \right) \right\}_{i=1}^m \) of size \( m \). We call \( g(\cdot; T, \delta, \epsilon) \) a gradient estimator if there exist functions \( A \) and \( B \), where \( A, B: \mathbb{N} \times [0,1]^2 \to \mathbb{R} \), such that for any fixed point \( \beta \in \mathbb{R}^n \), w.p. \(\geq 1 - \delta \), 
  \[
  \|g(\beta ; T, \delta, \epsilon)-\nabla r(\beta)\| \leq A(m, \delta, \epsilon)\left\|\beta-\beta^\star\right\|+B(m, \delta, \epsilon).
  \]
\end{definition}
Following \citet{bunaR24_robust_phase}, we use the notion of a robust gradient estimator, stated formally in Definition~\ref{definition:rodust estimator of gradient}. Let $
g_t := g(\beta_t; T, \delta, \epsilon)
$
denote such an estimator computed from the dataset \(T\). The resulting robust gradient descent update is $\beta_{t+1} = \beta_t - \eta g_t $.
%\begin{equation}\label{eq:robust-gd}
%\beta_{t+1} = \beta_t - \eta g_t .
%\end{equation}
We summarize the resulting robust gradient descent procedure in Algorithm~\ref{algorithm:robust gradeint descent_linear}, and then state its main guarantees.
\begin{algorithm}[H]
\caption{Linear-Robust-Gradient-Descent \textsc{(LRGD)}}
\label{algorithm:robust gradeint descent_linear}
\textbf{Inputs:} $\beta_{0}, \delta \in (0, 1), \epsilon > 0, P \in \mathbb{N}, \alpha, \gamma$ and datasets $N_2, \ldots, N_{P+1}$.\\
\textbf{Output:} $\beta_P/\|\beta_P\| \in \mathbb{R}^n$
\begin{algorithmic}[1]
\STATE Set $\eta =\frac{2}{\alpha+\gamma}=\frac{2}{\mu+\mu_{1}}$.
For {$t = 0, \ldots, P-1$}:
    \newline$\triangleright$ Receive contaminated samples $B_t = \left\{\left(x_{j}, y_j\right)\right\}_{j=1}^{\tilde{m}}$. 
    \newline$\triangleright$ \textbf{Gradient Estimation:} For each $\left(x_j, y_j\right) \in B_t$,
 compute $p_{j}^{t} = \left(f\left(x_j^{\top }\beta_t\right) - y_j\right)f'\left(x_j^{\top} \beta_t\right) x_j$. 
    \newline$\triangleright$ Compute $g_t$, the robust mean estimate for $\left\{p_{t}^{j}\right\}$ using Robust Mean Estimation (\cref{thm:lin-tim-robut-Mean Value Theorem}).
    \newline$\triangleright$ Update $\beta_{t+1} = \beta_t - \eta g_t$.
\STATE \textbf{Return} $\frac{\beta_P}{\|\beta_P\|}$.
\end{algorithmic}
\end{algorithm}
\begin{restatable}{theorem}{linRobustGradientDescnt}
    \label{theorem: li-time-grads-descent}Consider \( R, \mu\) and $\mu_{1}$ as defined in \cref{lem:local_geometry}. 
    Define $\alpha$, $\gamma$, $\phi_{1}$, and $\phi_{2}$ as in \cref{thm:lintime}.
    Let $\beta_{0} \in \mathcal{B}(\pm \beta^\star,R)$ and contamination parameter $$\epsilon=O\left( \min\left\{\frac{\gamma^2}{\phi_{1}},\frac{\gamma^2 R^2}{\sigma^2\phi_{2}}\right\}\right).$$ \cref{algorithm:robust gradeint descent_linear} takes time ${O}\!\left(P\tilde{m}d\log^4\left(\frac{d}{\epsilon \delta}\right)\right)$ and samples $O(P\tilde{m})$ to output an unit norm  vector $\beta^{(P)}=\frac{\beta_{P}}{\|\beta_{P}\|}$, with probability at least $1-P\delta$, s.t., $$\left\|\beta^{(P)}-\beta^\star\right\|\leq  2R \exp \left(- P\left(\frac{ \gamma}{\alpha+\gamma}\right)\right)+ O\!\left(\frac{\sigma \sqrt{\phi_{2}\cdot\epsilon}}{\gamma}\right)$$
% \begin{align*}
% &,
% \end{align*}
where $
\begin{aligned}[t]
    \tilde{m}=\tilde{O}\left(d/\epsilon\right),
\end{aligned}$
and $P=O(1)$ is the number of time-steps in \cref{alg:robust_spectral_init_lin_time}.
\end{restatable}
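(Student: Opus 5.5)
The approach is the standard robust-gradient-descent analysis of \citet{prasad2020robust} and \citet{bunaR24_robust_phase}. The first task is to show that one robust mean-estimation step yields a robust gradient estimator in the sense of \cref{definition:rodust estimator of gradient}, uniformly for iterates in $\mathcal{B}(\beta^\star,R)$. The second is a contraction argument using the strong convexity and smoothness from \cref{lem:local_geometry}. Fresh buckets $N_{t+2}$ are used at each step, so $\beta_t$ is independent of $B_t$. This lets us treat $\beta_t$ as a fixed point and union bound over the $P$ steps, which gives the failure probability $P\delta$.

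\textbf{Step 1 (covariance bound for the gradient samples).} Fix $\beta\in\mathcal{B}(\beta^\star,R)$ and set $p=(f(X^\top\beta)-Y)f'(X^\top\beta)X$. Write $f(X^\top\beta)-Y=(f(X^\top\beta)-f(X^\top\beta^\star))-\zeta$. For a unit vector $v$, bound $\mathbb{E}[\langle p,v\rangle^2]$ by splitting into a noise term and a signal term.
\begin{itemize}
\item \emph{Noise term.} Since $\zeta$ is conditionally mean zero with variance $\sigma^2$, this term is $\sigma^2\mathbb{E}[f'(X^\top\beta)^2(X^\top v)^2]$. By Cauchy--Schwarz and Gaussian fourth moments it is at most $\sqrt3\,\sigma^2\phi_2$.
\item \emph{Signal term.} Write $f(X^\top\beta)-f(X^\top\beta^\star)=X^\top(\beta-\beta^\star)\int_0^1 f'(z_t)\,dt$. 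Then apply H\"older with the eighth Gaussian moments of $X^\top(\beta-\beta^\star)$ and $X^\top v$ together with the $L^{16}$ bounds on $f'$. This gives $O(\phi_1)\|\beta-\beta^\star\|^2$; the $f'(z_t)$ and $f'(X^\top\beta)$ factors each require a sixteenth moment, which is why $\phi_1$ appears.
\end{itemize}
Hence $\operatorname{Cov}(p)\preceq \sigma'^2\mathbf{I}_d$ with $\sigma'^2=O(\phi_1\|\beta-\beta^\star\|^2+\sigma^2\phi_2)$. By \cref{thm:lin-tim-robut-Mean Value Theorem}, with $\tilde m=\tilde O(d/\epsilon)$ samples and time $\tilde O(\tilde m d)$, the robust mean estimate satisfies $\|g_t-\nabla\mathcal{L}(\beta_t)\|\le A\|\beta_t-\beta^\star\|+B$, where $A=O(\sqrt{\phi_1\epsilon})$ and $B=O(\sigma\sqrt{\phi_2\epsilon})$.

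\textbf{Step 2 (contraction and staying in the ball).} Since $\nabla\mathcal{L}(\beta^\star)=0$ and $H\in[\gamma,\alpha]$ on the ball, the step size $\eta=2/(\alpha+\gamma)$ gives $\|\beta_t-\eta\nabla\mathcal{L}(\beta_t)-\beta^\star\|\le\kappa\|\beta_t-\beta^\star\|$ with $\kappa=\frac{\alpha-\gamma}{\alpha+\gamma}$. This follows from the integral form of the mean value theorem, because the segment stays inside the convex ball. Therefore
\[
\|\beta_{t+1}-\beta^\star\|\le(\kappa+\eta A)\|\beta_t-\beta^\star\|+\eta B.
\]
The condition $\epsilon\lesssim\gamma^2/\phi_1$ gives $\eta A\le\gamma/(\alpha+\gamma)$, so the rate is at most $1-\gamma/(\alpha+\gamma)\le e^{-\gamma/(\alpha+\gamma)}$. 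The condition $\epsilon\lesssim\gamma^2R^2/(\sigma^2\phi_2)$ makes $\eta B/(\gamma/(\alpha+\gamma))=O(B/\gamma)\le R/2$ (up to constants), so induction keeps every iterate in $\mathcal{B}(\beta^\star,R)$. Unrolling the recursion gives
\[
\|\beta_P-\beta^\star\|\le R\,e^{-P\gamma/(\alpha+\gamma)}+O\!\left(\sigma\sqrt{\phi_2\epsilon}/\gamma\right).
\]
For the $-\beta^\star$ branch I would argue by symmetry, or simply run the argument around whichever sign the initialization is close to.

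\textbf{Step 3 (normalization).} Since $\|\beta^\star\|=1$, we have $\|\beta/\|\beta\|-\beta^\star\|\le2\|\beta-\beta^\star\|$, which produces the factor $2R$ in the statement (the constant is absorbed in the $O(\cdot)$ term).

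The main obstacle is Step 1, namely getting a clean covariance bound uniform over the ball with the stated $\phi_1,\phi_2$. The heavy-tailed noise permits only fourth moments, so the noise and signal contributions must be decoupled carefully before applying H\"older.
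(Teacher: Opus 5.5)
Your argument follows the paper's proof closely. It uses the same noise/signal split of $\mathbb{E}[\langle p,v\rangle^2]$, with the cross term vanishing because $\mathbb{E}[\zeta\mid X]=0$. It applies FTC, Jensen and H\"older to get $O(\phi_1)\|\beta-\beta^\star\|^2+\sqrt3\,\sigma^2\phi_2$, and runs robust mean estimation on fresh buckets to get $A=O(\sqrt{\phi_1\epsilon})$ and $B=O(\sigma\sqrt{\phi_2\epsilon})$. The induction uses the same two conditions $\eta A\le\gamma/(\alpha+\gamma)$ and $\eta B\le\gamma R/(\alpha+\gamma)$, followed by unrolling and the factor-$2$ normalization. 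There are two minor differences:
\begin{itemize}
\item You put Gaussian eighth moments on both $X^\top h$ and $X^\top v$ in a single H\"older step. The paper first removes $(X^\top v)^2$ by Cauchy--Schwarz and then uses exponents $4,4,2$.
\item You derive the contraction factor $(\alpha-\gamma)/(\alpha+\gamma)$ directly from the averaged Hessian $\int_0^1 H(\beta^\star+s(\beta_t-\beta^\star))\,ds$, whereas the paper cites Prasad et al.\ and Buna--Rebeschini. Your route is actually better suited here. It needs only the local Hessian bounds on $\mathcal{B}(\beta^\star,R)$ and $\nabla\mathcal{L}(\beta^\star)=0$. The cited lemma rests on a co-coercivity inequality that is normally proved for globally strongly convex, smooth functions.
\end{itemize}

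The step that does not go through is the $-\beta^\star$ branch. ``By symmetry'' fails in general, because
\[
\mathcal{L}(-\beta^\star)-\mathcal{L}(\beta^\star)=\tfrac12\,\mathbb{E}\bigl[(f(-Z)-f(Z))^2\bigr].
\]
So $-\beta^\star$ is a population minimizer only when $f$ is even (e.g.\ phase retrieval). For the asymmetric links the paper targets, $-\beta^\star$ is not even a critical point. For GeLU, $f(-z)-f(z)=-z$, and the Gaussian decomposition gives $\nabla\mathcal{L}(-\beta^\star)=-\mathbb{E}[Z^2f'(-Z)]\,\beta^\star=-\tfrac12\beta^\star\neq 0$. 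Your Step 2 relies on $\nabla\mathcal{L}(\beta^\star)=0$ and on the basin theorem at $\beta^\star$, and neither has a counterpart around $-\beta^\star$. So ``running the argument around whichever sign the initialization is close to'' does not work. Even for even $f$, the conclusion would have to be about $\|\beta^{(P)}+\beta^\star\|$, i.e.\ stated via $\operatorname{dist}$, not about $\|\beta^{(P)}-\beta^\star\|$.

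The paper's proof makes exactly the same move (``the proof remains the same''), so you have not missed anything relative to it. What both arguments actually establish is the claim for $\beta_0\in\mathcal{B}(\beta^\star,R)$. The $-\beta^\star$ case needs either an even link or an explicit sign-resolution step. One option, when $\mathbb{E}[f'(Z)]\neq 0$, is a robust one-dimensional estimate of $\mathbb{E}[Y\langle X,\hat u\rangle]=\mathbb{E}[f'(Z)]\langle \hat u,\beta^\star\rangle$, which holds by first-order Stein.
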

\begin{proof}[Proof Sketch]
We follow a standard inductive argument for gradient descent~\cite{prasad2020robust,bunaR24_robust_phase}. The objective is to show that the distance (error) to the true signal decreases at each iteration. To establish this, we first derive bounds on the trace and operator norm of the covariance matrix of the gradient of the loss function (\cref{lemma: calculations of trace and operator norm of gradients}). We then relate the distance to the true signal at the $(t+1)$-th iterate to that at the $t$-th iterate (\cref{lemma:relation of current iterate to previous one}). In particular, we show that each iterate satisfies the definition of a robust gradient (\cref{definition:rodust estimator of gradient}). Finally, we combine these bounds to control the total error across all iterations, as shown in \cref{eq19} of the paper.
% The key step is bounding the operator norm of the covariance matrix of the gradient (see \cref{lemma: calculations of trace and operator norm of gradients}). The remaining arguments then follow from the standard analyses of
% \cite{prasad2020robust,bunaR24_robust_phase}, together with \cref{thm:lin-tim-robut-Mean Value Theorem}.
\end{proof}
\textbf{Proof of \cref{thm:lintime}.}
    Under the assumptions on $m$ and $\epsilon$, the output $\beta_{0}$ of the LRSI algorithm
(see \cref{alg:robust_spectral_init_lin_time}) satisfies
$
\|\beta_{0} - \beta^{\ast}\|
=
O\!\left(
\frac{C_{4}\bigl(\sigma^{2} + \mathbb{E}[f^{2}] + c\bigr)^{1/2}\epsilon^{1/4}}{\sqrt{c}}
\, 
\right).
$ Moreover, under the additional assumption on the corruption level
$
\epsilon \leq
\nicefrac{R^{4} c^{2}}{C_{4}^{4}\bigl(\sigma^{2} + \mathbb{E}[f^{2}] + c\bigr)^{2}},
$
we have $\|\beta_{0} - \beta^{\ast}\| \leq R$. The proof now follows directly from
\cref{theorem: li-time-grads-descent}.

\subsection{Applications}\label{subsec:corr-lin}
In this section, we now demonstrate near-linear time and sample robust recovery for SIMs with $6$ different link functions under heavy-tailed noise and strong adversarial contamination. Our representative link functions can be broadly classified into three categories: (i) \textbf{Monotonic Links:}
    Logistic/Sigmoid ($\sigma(z)$), Tanh ($\tanh(z)$), and Probit ($\Phi(z)=\int_{-\infty}^z \phi(u) du$). Here, $\phi(z) = \frac{e^{-z^2/2}}{\sqrt{2\pi}} $ and $\sigma(z) = \frac{1}{1+e^{-z}}$, (ii) Phase Retrieval ($f(z) = z^2$), and (iii) \textbf{Asymmetric Non-monotonic Links:} GeLU ($z\Phi(z)$), and 
    Swish ($z\sigma(z)$).
% \begin{table}[h!]
%     \centering
%     \scriptsize
%     \caption{The explicit values of parameters needed for \cref{thm:lintime} for different link functions.}
%     \begin{tabular}{lccccccc}
%         \toprule
%         \textbf{Function} & \textbf{ESC} & $\boldsymbol{\mu}$ & $\boldsymbol{\mu_1}$ & $\boldsymbol{R}$ & $\boldsymbol{C_{lip}(R)}$ & $\boldsymbol{\phi_1}$ & $\boldsymbol{\phi_2}$ \\
%         \midrule
%         Logistic/Sigmoid & 0.03120 & 0.02371 & 0.04484 & 0.37070 & 0.00759   & 0.00327   & 0.05418 \\
%         Tanh             & 0.18180 & 0.10781 & 0.46440 & 0.00477 & 2.68103   & 0.64787   & 0.58561 \\
%         Probit           & 0.04594 & 0.03063 & 0.09189 & 0.04731 & 0.07683   & 0.01798   & 0.10850 \\
%         Phase Retrieval  & 6.00000 & 4.00000 & 12.00000 & 0.00164 & 288.94351 & 607.68102 & 6.95090 \\
%         GeLU             & 0.48648 & 0.45585 & 0.57837 & 0.02940 & 1.83990   & 1.00965   & 0.66379 \\
%         Swish            & 0.41718 & 0.37948 & 0.51748 & 0.05197 & 0.86660   & 0.77467   & 0.54743 \\
%         GeGLU            & 2.93420 & 1.97655 & 6.19511 & 0.00144 & 163.19197 & 513.19663 & 5.07447 \\
%         SwiGLU           & 2.58764 & 1.70761 & 5.69968 & 0.00137 & 148.37378 & 543.98243 & 4.67740 \\
%         \bottomrule
%     \end{tabular}
%     \label{table:explicitvalues}
% \end{table}
\begin{corollary}\label{corr:lintime}
    Consider the model in \cref{def:model}, with the following link functions: Phase Retrieval, GeLU, Swish, Tanh, Probit, and Logistic. There exists an algorithm that, using $n=\tilde{O}(d)$ samples and tolerating a constant fraction $\epsilon$ of adversarial contamination under heavy-tailed noise with variance $\sigma^2$, outputs an estimate $\hat{\beta}$ s.t.
 $\hat{\beta}$ satisfying $\|\hat{\beta} - \beta^\star\|_2 = O(\sigma\sqrt{\epsilon})$ in $\tilde{O}(n d)$ time, with high probability.
\end{corollary}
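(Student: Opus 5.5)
The corollary follows from \cref{thm:lintime} once we check, for each of the six link functions, that \cref{assump:con-basin} and \cref{assump:sq-curv} hold with constants independent of $d$. All the constants in \cref{thm:lintime} — $\mu,\mu_1,R,C_{\mathrm{lip}}(R),c,\phi_1,\phi_2,C_4$ — are then absolute numbers. The admissible contamination level $\epsilon$ is therefore a constant. The sample sizes $m=\Theta(C_4^2 (d\log d+\log(1/\delta))/\epsilon^{3/2})$ and $P\tilde m=\tilde O(d/\epsilon)$ are $\tilde O(d)$. The running time is $\tilde O(nd)$, and the error is $O(\sigma\sqrt{\epsilon})$. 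So the proof reduces to a finite list of one-dimensional Gaussian computations, one link at a time.

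\textbf{Step 1: ESC and $\mu$.} Since $\|\beta^\star\|=1$, $X^\top\beta^\star\sim\mathcal N(0,1)$. We verify $c=\mathbb E[f'(Z)^2+f(Z)f''(Z)]>0$ and $\mu=\min\{\mathbb E f'(Z)^2,\mathbb E Z^2f'(Z)^2\}>0$.

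\begin{itemize}
\item Phase retrieval, $f(z)=z^2$: we get $c=\mathbb E[4Z^2+2Z^2]=6$ and $\mu=\min\{4,12\}=4$.
\item Logistic, Tanh, Probit: $f'>0$, so $\mu>0$ is immediate. For ESC we use $\mathbb E[(f^2)''(Z)]=\mathbb E[f(Z)^2(Z^2-1)]$ (univariate Stein, \cref{lem:steins}). This is computable in closed form for Probit and numerically for the other two.
\item GeLU and Swish: $f'$ is not identically zero, so both Gaussian expectations in $\mu$ are strictly positive. ESC positivity is checked via the same Stein identity, giving $c\approx 0.49$ and $0.42$ respectively.
\end{itemize}

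\textbf{Step 2: Constant basin radius.} For each link we bound $M_1(R),M_2(R)$ on a fixed ball, say $R\le 1$, where $\|\beta\|\le 2$.

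\begin{itemize}
\item Logistic, Tanh, Probit: $f',f''$ are bounded, so $M_1,M_2\le\|f'\|_\infty,\|f''\|_\infty$.
\item GeLU and Swish: $|f'(z)|\le C(1+|z|)$ and $|f''|\le C$, so the fourth moments under $\mathcal N(0,\|\beta\|^2)$ are uniformly bounded.
\item Phase retrieval: $M_1(R)\le 2\cdot 3^{1/4}(1+R)$ and $M_2=2$.
\end{itemize}

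Choosing $R=\min\{1,\mu/(6\sqrt{15}\,C_{\mathrm{lip}}(1))\}$ then satisfies \cref{assump:con-basin}, because $C_{\mathrm{lip}}$ is monotone in $R$.

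\textbf{Step 3: Remaining constants.} The same polynomial-growth bounds give finite $\phi_1$ and $\phi_2$, since $\mathbb E f'^{16}$ is finite under Gaussians with bounded variance. They also give a finite $\mathbb E[f(Z)^8]^{1/8}$, hence a finite $C_4$ via \cref{lem:hypercontracitivity of YX}. We then plug everything into \cref{thm:lintime}.

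The main obstacle is Step 1 for GeLU and Swish: there is no monotonicity to exploit, and ESC positivity needs a genuine computation. My first attempt would use the Stein reformulation $c=\mathbb E[f(Z)^2(Z^2-1)]$ together with the decomposition $f(z)=z/2+\tfrac{z}{2}\,\mathrm{sgn}$-like odd part. The even part of $f$ is $z(\Phi(z)-1/2)$ for GeLU and $z(\sigma(z)-1/2)$ for Swish; it is nonnegative and grows like $|z|/2$, which makes $f^2$ weighted toward large $|z|$. If this is not clean, I would fall back to a rigorous numerical quadrature with explicit error bounds.

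A secondary concern is that $C_4$ carries a $1/\sigma$ factor. The constants stay fixed only for $\sigma$ bounded away from zero, so I would state the corollary in that regime, treating $\sigma$ as a constant.
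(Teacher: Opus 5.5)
Your proposal is correct and follows the paper's proof: the paper certifies \cref{assump:con-basin,assump:sq-curv} for each of the six links by citing the numerically computed constants in \cref{table:explicitvalues} and then invokes \cref{thm:lintime}, whereas you verify the same assumptions with analytic bounds and note, correctly, that $C_4\propto 1/\sigma$ forces $\sigma$ to be treated as a constant. One small slip: Stein's identity gives $\mathbb{E}[f(Z)^2(Z^2-1)]=\mathbb{E}[(f^2)''(Z)]=2c$, not $c$, which does not affect the positivity check.
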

\begin{proof}
    The proof follows from the fact that of the above link functions satisfy \cref{assump:con-basin,assump:sq-curv} (see Table 2 in \cref{sec:numericals}) and \cref{thm:lintime}.
\end{proof}

\section{Discussion and Future Work}\label{sec:dis}
In this work, we established the first framework achieving \textbf{near-linear time and optimal sample complexity} for the robust recovery of Single-Index Models with generic, non-monotonic link functions (e.g., GeLU, Swish) under heavy-tailed noise and strong adversarial contamination. 
Below we outline a few interesting future directions.

\textbf{Optimal Error Rates.} Our estimator achieves an $\ell_2$ error rate of $O(\sigma\sqrt{\epsilon})$ under Gaussian covariates in the presence of heavy-tailed noise and $\epsilon$-fraction adversarial contamination. In comparison, for the same setting, \citet{Pensia,cherapanamjeri2020optimalrobustlinearregression} established the information-theoretically optimal rate $\tilde{O}(\sigma\epsilon)$. However, for non-linear models with non-convex population loss, existing provably robust algorithms (e.g., phase retrieval), are only known to achieve a $O(\sigma\sqrt{\epsilon})$ rate~\cite{bunaR24_robust_phase,das2026}. Our result matches this best-known rate for non-linear single-index models while accommodating a significantly broader class of link functions, including non-monotonic ones. Closing the gap between the achievable rate and the information-theoretically optimal rate for general SIMs remains an important open problem and we leave it for future work.

\textbf{Non-Gaussian Covariates.} 
Our theoretical guarantees heavily leverage the Gaussianity of the design matrix to derive the ESC condition, and in our basin radius analysis by exploiting rotational invariance. We leave extending our proofs to even sub-Gaussian designs as an open question.

\textbf{Alternative Loss Landscapes and Adversary Models.} While we focused on the squared loss, investigating if convex basins persist under Huber loss or general $M$-estimators remains an important open question. Additionally, since the existence of a convex-basin does not depend on the corruption model, adapting our framework to Agnostic Learning or Differential Privacy settings is a promising future direction.

\textbf{Multi-Index Models (MIMs)} 
As noted in our introduction, functions like GeLU and Swish are scalar primitives for GLUs, which are inherently Multi-Index Models (MIMs) defined by interactions between multiple projections, as $y = \langle \beta_1, x \rangle \cdot f(\langle \beta_2, x \rangle)$. Extending the  guarantees of our work to MIMs (particularly \cref{assump:con-basin,assump:sq-curv}) requires disentangling the interaction terms between multiple weight vectors. This likely necessitates robust tensor decomposition techniques of order significantly higher than those required for SIMs, which presents a distinct set of algebraic and algorithmic challenges.

\textbf{Robust Recovery for Links with Information Exponent $(k \geq 3)$.}
Our framework primarily targets link functions where the signal is detectable via low-order derivatives (specifically, where \textsc{ESC} is non-trivial). However, for link functions with an information  exponent $k \geq 3$, the signal is entirely suppressed in lower-order moments. Robustly recovering $\beta^\star$ in this regime would require working with higher-order moment tensors (of order at least $k$). This is an interesting avenue for future work and one concrete line of investigation is discussed next.

\textbf{Identifying Label Transforms.}
Suppose there exists a map $\tau:\mathbb{R}\to\mathbb{R}$ such that $\tilde{f}=\tau\circ f$ has IE $k^\star\leq 2$ and satisfies \cref{assump:con-basin,assump:sq-curv}. Then our 
framework applies directly to $\tilde{f}$, yielding efficient robust recovery for the original link function $f$. Identifying such transforms is highly non-trivial, since $k^\star$ is the infimum of the IE over all square-integrable label transforms 
\citep[Proposition 2.6]{damian2024computational}, and characterizing which link functions admit a transform $\tau$ such that the resulting $\tilde{f}$ has IE $\leq 2$ 
and satisfies \cref{assump:con-basin,assump:sq-curv} is an open problem. We view this as a promising direction for future work, and note that our results provide the first motivation for investigating such regularity conditions on label transforms.

\textbf{Empirical Verification.} 
While our focus is theoretical, empirical evaluation is a vital next step. Implementing high-order robust spectral estimators involves practical engineering challenges, particularly regarding numerical stability and hyperparameter tuning for the filtering subroutines. We leave extensive empirical benchmarking on real-world datasets and the development of practical heuristics based on our theory for future work.

\section*{Acknowledgements}
 The authors are grateful to Ankit Pensia, for many valuable discussions and in particular for pointing us to the robust PCA and robust mean estimation subroutines used in \cref{sec:lin} which was instrumental in obtaining the near-linear time guarantee of \cref{thm:lintime}. The authors also thank Tanmay Goyal for pointing out an error in an earlier version of the proof of Theorem 3.1. The authors also thank the anonymous reviewers of ICML 2026 for their thorough and constructive feedback, which helped improve the presentation of this work. This work was supported by the Department of Atomic Energy, Government  of India, under project no. RTI4014.

% \section*{Impact Statement}
% This paper presents work whose goal is to advance the field of Machine Learning. There are many potential societal consequences of our work, none which we feel must be specifically highlighted here.

\bibliography{refs}
\bibliographystyle{icml2026}

\newpage
\appendix
\onecolumn

\section{Literature Review}\label{sec:litreview}
\begin{table}[ht]
\centering
\caption{Comparative Analysis of Single Index Model Architectures. \textbf{Adv. Rob.} stands for Adversarial Robustness. \textbf{H.T.} stands for Heavy-Tailed Noise. For both of these columns, \cmark~denotes Yes, \xmark ~denotes No, and \pmark~denotes only label corruptions. \textbf{Rank $k$} refers to the first non-zero Hermite coefficient ($k=1$: Linear, $k=2$: Quadratic). For \textbf{Sample/Time Complexity} columns, $n$ is the sample size, $d$ is the dimension, and $s$ is sparsity.
}
\label{tab:comparison}

\resizebox{\textwidth}{!}{%
\begin{small}
\begin{tabular}{@{}l l c c l l @{}}
\toprule
\textbf{Reference} & \textbf{Link Function} & \textbf{Adversarial} & \textbf{H.T.}  & \textbf{Sample} & \textbf{Time} \\
& & \textbf{Robustness} & \textbf{Noise} &  \textbf{Complexity} & \textbf{Complexity} \\
\midrule

% --- Baseline 1: Monotonic / Benign ---
\citet{kalai2009isotron} [Isotron] & Monotone SIM & \xmark & \xmark  & Poly$(d)$ & Poly$(d)$ \\
\citet{plan2017high} & Odd SIM & \xmark & \xmark  & $\tilde{O}(d)$ & $O(d)$. \\

% --- Baseline 2: Phase Retrieval / Benign ---
\citet{NetrapalliJS15} & Phase Retrieval & \xmark & \xmark  & $\tilde{O}(d)$ & $\tilde{O}(nd)$ \\
\citet{candes2015phase} & Phase Retrieval & \xmark & \xmark  & $\tilde{O}(d)$ & $\tilde{O}(nd)$ \\
\citet{zhang2018median} & Phase Retrieval & \pmark & \xmark  & $\tilde{O}(d)$ & $\tilde{O}(nd)$ \\
\citet{lu2020phase} & SIM & \xmark & \xmark  & $\tilde{O}(d)$ & $O(nd)$ \\
\citet{jagatap2017fast} & Phase Retrieval & \xmark & \xmark  & $\tilde{O}(s^2)$ & $\tilde{O}(ns^2 )$ \\

% --- Robust Methods ---
\citet{klivans18a} & Poly. Regression & \cmark & \cmark  & Poly$(d)$ & Poly$(d)$ (SoS) \\
\citet{Sever} & Monotone SIM & \cmark & \cmark  & $\tilde{O}(d)$ & Poly$(d)$ \\
\citet{dong2025outlier} & Phase Retrieval & \cmark & \xmark  & $\tilde{O}(d)$ & $\tilde{O}(nd)$ \\
\citet{pensiastream} & Logistic Regression & \cmark & \cmark  & $\tilde{O}(d^2)$ & $\tilde{O}(nd)$ \\
\citet{awasthi2022trimmed} & Monotone SIM & \cmark & \xmark  & $\tilde{O}(d)$ & Heuristic \\
\citet{huang2025robust} & Phase Retrieval & \pmark & \xmark  & $O(d)$ & Poly$(d)$ (SDP) \\
\citet{diakonikolas2019efficient} & Linear Regression & \cmark & \xmark  & $\tilde{O}(d)$ & Poly$(d)$ \\

% --- Recent SIM / Non-Convex Analysis ---
\citet{damian2024smoothing} & SIM & \xmark & \xmark  & $O(d+d^{k^{*}/2})$ & $\tilde{O}(nd)$ \\
\citet{bunaR24_robust_phase} & Phase Retrieval & \cmark & \cmark  & $\tilde{O}(d)$ & $e^{d}$  \\
\citet{das2026} & Phase Retrieval & \cmark & \cmark  & $\tilde{O}(d)$ & $\tilde{O}(n^2d)$ \\

\midrule
\rowcolor{blue!10} \textbf{Our Proposed Framework} & \textbf{SIM with positive ESC} & \cmark & \cmark & $\mathbf{\tilde{O}(d)}$ & $\mathbf{\tilde{O}(n d)}$ \\
\bottomrule
\end{tabular}
\end{small}
}
\end{table}
\textbf{Monotone link functions without corruption and heavy-tailed noise:}
 \cite{kalai2009isotron} study the recovery of monotone Lipschitz single-index models (SIMs) with an unknown link function $f$ via the Isotron algorithm, which operates in $\mathrm{poly}(d)$ time with $O(1)$ sample complexity. For arbitrary link functions, \citet{plan2017high} provide a linear-time estimator requiring \(O(d)\) samples for consistent recovery. Their approach, however, relies on the non-degeneracy condition $\mathbb{E}\!\left[f(X^{\top}\beta^*+\zeta)\,X^{\top}\beta^*\right]\neq 0,$
and therefore does not apply when \(f\) is an even function and the noise distribution is symmetric. Further extending this scope, \cite{damian2024smoothing} analyze link functions with polynomial-tailed derivatives using online stochastic gradient descent on a smoothed loss function. Their approach achieves a sample complexity of $n = O(d + d^{k^*/2})$ and time complexity of $O(nd)$, where $k^*$ denotes the \textit{information exponent} (the smallest nonzero coefficient of the link function's Hermite expansion under the Gaussian measure).

\textbf{Monotone link function under adversarial corruption and heavy-tailed:}
\citet{klivans18a} study linear and polynomial regression with Gaussian covariates, and propose a Sum-of-Squares–based algorithm achieving polynomial sample and computational complexity in $d$. \citet{Sever} study robust learning in two settings: for SIMs with a monotone (logistic) link, they give a polynomial-time algorithm with sample complexity $\tilde{O}(d)$ achieving generalization error $O(\epsilon^{1/4})$ under logistic loss; for linear regression with heavy-tailed noise, they propose a polynomial-time algorithm with sample complexity $O(d^5)$ achieving estimation error $O(\sqrt{\epsilon})$, where $\epsilon$ denotes the contamination level. \cite{diakonikolas2019efficient} study linear regression under Gaussian noise, and propose a polynomial-time algorithm achieving sample complexity $\tilde{O}(d)$ and estimation error optimal up to constant factors. 
\citet{awasthi2022trimmed} study monotone SIMs under adversarial contamination and propose a algorithm based on trimmed MLE, achieving sample complexity $\tilde{O}(d)$ with heuristic time complexity.

\textbf{Phase retrieval:} Phase retrieval is a classical inverse problem with applications in optics,
crystallography, and X-ray imaging. In the non-contaminated, light-tailed setting,
the first provable algorithm was due to \cite{NetrapalliJS15}, followed by a line of
work including \cite{codeddiffraction,jagatap2017fast} and others. \citet{NetrapalliJS15} propose an alternating minimization approach, while
\cite{candes2015phase} use a gradient-based method; both achieve sample complexity
$\widetilde{O}(d)$ and runtime $\widetilde{O}(nd)$. Notably, the method of
\citet{NetrapalliJS15} requires fresh samples at each iteration, whereas
\cite{candes2015phase} does not. Whereas, \citet{jagatap2017fast} study sparse phase retrieval ($\beta^\star$ is $s$ sparse) and propose an algorithm, achieves sample complexity $\tilde{O}(s^2)$ and computational complexity $\tilde{O}(s^2n)$. \citet{zhang2018median} study phase retrieval under dense bounded noise and label
corruption using a median-truncated Wirtinger Flow algorithm with sample complexity
$\widetilde{O}(d)$ and runtime $\widetilde{O}(nd)$, while \cite{huang2025robust}
consider same settings as \cite{zhang2018median} and
propose an SDP-based algorithm achieving sample complexity $O(d)$. Phase retrieval under strong adversarial contamination of both labels and covariates
has been studied in \cite{dong2025outlier,bunaR24_robust_phase,das2026}.
Specifically, \cite{dong2025outlier} consider the noiseless setting and propose a
linear-time algorithm with sample complexity $\widetilde{O}(d)$, while
\cite{bunaR24_robust_phase} study heavy-tailed label noise and give an
exponential-time algorithm achieving sample complexity $\widetilde{O}(d)$.
Subsequently, \cite{das2026} obtain the same optimal sample complexity $n=\tilde{O}(d)$ with runtime
$\widetilde{O}(n^{2}d)$. 

\section{Details and Omitted Proofs of \cref{sec:con-basin}}\label{sec:proofscon-basin}
\subsection{Gaussian symmetry}
In this section, we state and prove two important properties of the Gaussian
distribution, which will be used later in the proof of \cref{lem:local_geometry}.
\begin{lemma}\label{lem:Gaussian_symmetry}
Let $X \sim \mathcal N(0,\mathbf{I}_d)$, $g:\mathbb{R}\rightarrow \mathbb{R}$, $Z\sim\mathcal{N}(0,1)$ and assume $\|\beta^\star\|_2 = 1$. Then,
\[\mathbb{E}[g(X^{\top}\beta^\star)XX^{\top}]=\mathbb E[g(Z)] \mathbf{I}_d
+
\Big(
\mathbb E[Z^2 g(Z)]
-
\mathbb E[g(Z)]
\Big)\beta^\star\beta^{*\top}.\]
\end{lemma}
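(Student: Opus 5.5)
The plan is to reduce to a one-dimensional computation by decomposing $X$ along $\beta^\star$ and its orthogonal complement. Write $X = Z\beta^\star + W$, where $Z := X^\top\beta^\star$ and $W := (\mathbf{I}_d - \beta^\star\beta^{\star\top})X$. Since $\|\beta^\star\|_2=1$ and $X\sim\mathcal N(0,\mathbf{I}_d)$, the pair $(Z,W)$ is jointly Gaussian with $\mathbb{E}[ZW]=(\mathbf{I}_d-\beta^\star\beta^{\star\top})\beta^\star=0$. Hence $Z\sim\mathcal N(0,1)$ is independent of $W\sim\mathcal N(0,\mathbf{I}_d-\beta^\star\beta^{\star\top})$. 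We assume implicitly that $\mathbb{E}|g(Z)|$ and $\mathbb{E}[Z^2|g(Z)|]$ are finite, so that all expectations below exist.

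Next, expand
\[
XX^\top = Z^2\beta^\star\beta^{\star\top} + Z\big(\beta^\star W^\top + W\beta^{\star\top}\big) + WW^\top,
\]
multiply by $g(Z)$, and take expectations term by term.

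\begin{itemize}
\item The first term gives $\mathbb{E}[Z^2g(Z)]\,\beta^\star\beta^{\star\top}$.
\item The cross terms vanish. By independence, $\mathbb{E}[Zg(Z)W]=\mathbb{E}[Zg(Z)]\,\mathbb{E}[W]=0$.
\item The last term factorizes by independence as $\mathbb{E}[g(Z)]\,\mathbb{E}[WW^\top]=\mathbb{E}[g(Z)]\,(\mathbf{I}_d-\beta^\star\beta^{\star\top})$.
\end{itemize}

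Summing the three contributions gives
\[
\mathbb{E}[g(Z)]\,\mathbf{I}_d+\big(\mathbb{E}[Z^2g(Z)]-\mathbb{E}[g(Z)]\big)\beta^\star\beta^{\star\top},
\]
which is the claimed identity.

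There is no real obstacle. The only point needing care is justifying the independence of $Z$ and $W$, which follows from joint Gaussianity together with zero cross-covariance. Integrability of each entry follows from $|W_iW_j|$ having finite moments independent of $Z$, combined with the assumed integrability of $g(Z)$ and $Z^2g(Z)$.

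As an alternative check, one could apply \cref{lem:steins} to $x\mapsto g(x^\top\beta^\star)$ when $g$ is twice differentiable. This gives $\mathbb{E}[g''(Z)]\beta^\star\beta^{\star\top}$, and the univariate Stein identity $\mathbb{E}[g''(Z)]=\mathbb{E}[g(Z)(Z^2-1)]$ recovers the same formula. The decomposition argument is preferable, however, since it requires no smoothness of $g$.
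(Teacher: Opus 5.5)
Your proposal is correct and follows essentially the same route as the paper. Both write $X = Z\beta^\star + W$ with $Z$ independent of $W \sim \mathcal N(0,\mathbf{I}_d-\beta^\star\beta^{\star\top})$, expand $XX^\top$, kill the cross terms via $\mathbb{E}[W]=0$, and factor the $WW^\top$ term. Your integrability remarks and the Stein's-lemma cross-check are valid additions that the paper does not include.
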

\begin{proof}
Let $X \sim \mathcal N(0,\mathbf{I}_d)$. Define the scalar random variable $Z := X^\top \beta^\star \sim \mathcal N(0,1)$. By Gaussian orthogonal decomposition, we may write
\[
X = Z \beta^\star + U,
\]
where $U \perp \beta^\star$, $U \sim \mathcal N(0, \mathbf{I}_d - \beta^\star\beta^{*\top})$,
and $Z$ and $U$ are independent.
Expanding the outer product yields
\[
XX^\top
= Z^2 \beta^\star\beta^{*\top}
+ Z(\beta^\star U^\top + U\beta^{*\top})
+ UU^\top.
\]
 Multiplying by $g(Z)$ and taking expectations,
the cross terms vanish by independence and symmetry:
\[
\mathbb E[g(Z)\, Z\, \beta^\star U^\top]
=
\mathbb E[g(Z)Z] \beta^*\;\mathbb E[U]^{T}
= 0,
\]
and similarly for its transpose. Hence,
\[
\mathbb E[g(Z)\, XX^\top]
=
\mathbb E[g(Z)\, Z^2]\, \beta^\star\beta^{*\top}
+
\mathbb E[g(Z)\, UU^\top].
\]
Since $U$ is independent of $Z$ and satisfies
$\mathbb E[UU^\top] = \mathbf{I}_d - \beta^\star\beta^{*\top}$, we obtain
\[
\mathbb E[g(Z)\, UU^\top]
=
\mathbb E[g(Z)](\mathbf{I}_d - \beta^\star\beta^{*\top}).
\]
Combining terms gives
\[
\mathbb E[g(Z)\, XX^\top]
=
\mathbb E[g(Z)] \mathbf{I}_d
+
\Big(
\mathbb E[Z^2 g(Z)]
-
\mathbb E[g(Z)]
\Big)\beta^\star\beta^{*\top},
\]
which establishes the desired decomposition.
\end{proof}

\begin{lemma}\label{lem:mixed-moment}
Let $X\sim\mathcal{N}(0,\mathbf{I}_d)$, $h\in\mathbb{R}^d$, and $v\in\mathbb{R}^d$ with $\norm{v}_2=1$. Then
\begin{equation}\label{eq:mixed-moment-exact}
\E\!\left[(X^\top h)^2(X^\top v)^4\right]
=
3\norm{h}_2^2+12\inner{h}{v}^2
\le 15\norm{h}_2^2.
\end{equation}
\end{lemma}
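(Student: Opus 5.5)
The plan is to reduce the $d$-dimensional expectation to a computation with two independent one-dimensional Gaussians, in the same spirit as the orthogonal decomposition used in the proof of \cref{lem:Gaussian_symmetry}. First decompose $h$ along $v$ and its orthogonal complement: write $h = a v + w$ with $a := \inner{h}{v}$ and $w := h - a v$. Then $w \perp v$, and, since $\norm{v}_2=1$, $\norm{h}_2^2 = a^2 + \norm{w}_2^2$.

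Next, set $Z := X^\top v$ and $W := X^\top w$. These are jointly Gaussian with $\operatorname{Cov}(Z,W) = \inner{v}{w} = 0$, so they are independent, with $Z\sim\mathcal{N}(0,1)$ and $W\sim\mathcal{N}(0,\norm{w}_2^2)$. Since $X^\top h = aZ + W$, the target becomes
\[
\E\!\left[(aZ+W)^2 Z^4\right] = a^2\,\E[Z^6] + 2a\,\E[Z^5]\,\E[W] + \E[W^2]\,\E[Z^4],
\]
where independence has been used to factor each term. The middle term vanishes because $\E[W]=0$ (and also $\E[Z^5]=0$). Using the standard Gaussian moments $\E[Z^4]=3$ and $\E[Z^6]=15$, this gives $15a^2 + 3\norm{w}_2^2$.

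Finally, substitute $\norm{w}_2^2 = \norm{h}_2^2 - a^2$ to obtain $3\norm{h}_2^2 + 12\inner{h}{v}^2$, which is the exact identity in \eqref{eq:mixed-moment-exact}. The inequality then follows from Cauchy--Schwarz, $\inner{h}{v}^2 \le \norm{h}_2^2\norm{v}_2^2 = \norm{h}_2^2$.

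There is no real obstacle here. The only points needing care are justifying the independence of $Z$ and $W$ (uncorrelated jointly Gaussian variables) and handling the degenerate case $w=0$. In that case $W\equiv 0$, and the same formula holds trivially.
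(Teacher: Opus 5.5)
Your proposal is correct and matches the paper's proof essentially step for step: the same decomposition of $h$ along $v$ and its orthogonal complement, independence of the two uncorrelated Gaussian projections, and the moments $\E[Z^4]=3$, $\E[Z^6]=15$. You also spell out why the cross term vanishes and treat the degenerate case $w=0$, both of which the paper leaves implicit.
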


\begin{proof}
Let $c=\inner{h}{v}$ and write $h=cv+h_\perp$, where $h_\perp\perp v$. Setting $V=X^\top v$ and $W=X^\top h_\perp$, we have $X^\top h=cV+W$.
Moreover, $V\sim\mathcal{N}(0,1)$ and $W\sim\mathcal{N}(0,\norm{h}_2^2-c^2)$. Since $V$ and $W$ are jointly Gaussian and uncorrelated, they are independent. Therefore,
\[
\E\!\left[(X^\top h)^2(X^\top v)^4\right]
=\E\!\left[(cV+W)^2V^4\right] =c^2\E[V^6]+\E[W^2]\E[V^4] =15c^2+3\bigl(\norm{h}_2^2-c^2\bigr) =3\norm{h}_2^2+12\inner{h}{v}^2.
\]
The inequality follows from $\inner{h}{v}^2\le\norm{h}_2^2\norm{v}_2^2=\norm{h}_2^2$.
\end{proof}

\subsection{Proof of \cref{lem:local_geometry}}\label{subsec:proof_local_geo}
\lemConvexBasin*

\begin{proof}
Fix $\beta\in\Ball{\bstar}{R}$, and write $z:=X^\top\beta$, $z_\star:=X^\top\bstar$. 
Differentiating under the expectation and using $\E[Y\mid X]=f(z_\star)$ gives
\begin{equation}\label{eq:hessian-Q}
H(\beta)
=
\E\!\left[Q(z,z_\star)XX^\top\right],
\qquad
Q(z,z^*):=f'(z)^2+f''(z)\bigl(f(z)-f(z^*)\bigr).
\end{equation}

At $\beta=\bstar$, the residual term vanishes $(Q(z^*,z^*)=f'(z^*)^2+f''(z^*)\bigl(f(z^*)-f(z^*)\bigr)=f'(z^*)^2)$, so
\[
H(\bstar)
=
\E\!\left[f'(X^\top\bstar)^2XX^\top\right].
\]
Applying Lemma~\ref{lem:Gaussian_symmetry} with $g(s)=f'(s)^2$ yields
\[
H(\bstar)
=
a\Id+(b-a)\bstar(\bstar)^\top,
\]
where
\[
a:=\E[f'(Z)^2],
\qquad
b:=\E[Z^2f'(Z)^2],
\qquad
Z\sim\mathcal N(0,1).
\]
Thus $b$ is the eigenvalue in the direction $\bstar$, while $a$ is the eigenvalue on its orthogonal complement. Hence
\begin{equation}\label{eq:Hstar-bounds}
\mu\Id
\preceq
H(\bstar)
\preceq
\mu_1\Id.
\end{equation}

We next control the perturbation away from $\bstar$. Set
\[
h:=\beta-\bstar,
\qquad
\beta_t:=\bstar+th,
\qquad
z_t:=X^\top\beta_t=z_\star+tX^\top h,
\qquad t\in[0,1].
\]
By the Fundamental Theorem of Calculus,
\[
f'(z)^2-f'(z_\star)^2
=
(X^\top h)\int_0^1 2f'(z_t)f''(z_t)\,dt,
\]
and
\[
f(z)-f(z_\star)
=
(X^\top h)\int_0^1 f'(z_t)\,dt.
\]
Keeping the endpoint factor $f''(z)$ fixed in the second identity, we obtain
\begin{equation}\label{eq:Q-perturbation}
Q(z,z_\star)-Q(z_\star,z_\star)
=
(X^\top h)\overline C,
\end{equation}
where
\[
\overline C
:=
\int_0^1
f'(z_t)\bigl(2f''(z_t)+f''(z)\bigr)\,dt.
\]

Let $\norm{W}_{L^p}:=\left(\E\abs{W}^p\right)^{1/p}$ for any $p\geq 1$, and r.v. $W$. Since $\beta_t\in\Ball{\bstar}{R}$ for every $t\in[0,1]$, the definitions of the local moment envelopes give
\[
\norm{f'(z_t)}_{L^4}\le M_1(R),
\qquad
\norm{f''(z_t)}_{L^4}\le M_2(R),
\qquad
\norm{f''(z)}_{L^4}\le M_2(R).
\]
Therefore, by Fubini's theorem,  Minkowski's integral inequality and Hölder's inequality,
\begin{align}
\norm{\overline C}_{L^2}
&\le
\int_0^1
\left(
2\norm{f'(z_t)f''(z_t)}_{L^2}
+
\norm{f'(z_t)f''(z)}_{L^2}
\right)dt \notag\\
&\le
3M_1(R)M_2(R).
\label{eq:Cbar-bound}
\end{align}

Let $\Delta(\beta):=H(\beta)-H(\bstar)$. From
\eqref{eq:hessian-Q} and \eqref{eq:Q-perturbation},
\[
\Delta(\beta)
=
\E\!\left[(X^\top h)\overline C\,XX^\top\right].
\]
Since $\Delta(\beta)$ is symmetric,
\begin{align*}
\norm{\Delta(\beta)}_{\mathrm{op}}
&=
\sup_{\norm{v}_2=1}
\abs{
\E\!\left[(X^\top h)\overline C(X^\top v)^2\right]
}\\
&\le
\norm{\overline C}_{L^2}
\sup_{\norm{v}_2=1}
\left(
\E\!\left[(X^\top h)^2(X^\top v)^4\right]
\right)^{1/2}.
\end{align*}
Lemma~\ref{lem:mixed-moment} and \eqref{eq:Cbar-bound} imply
\begin{equation}\label{eq:hessian-perturbation-bound}
\norm{\Delta(\beta)}_{\mathrm{op}}
\le
3\sqrt{15}\,M_1(R)M_2(R)\norm{h}_2
\le
3\sqrt{15}\, M_1(R)M_2(R) R.
\end{equation}
By the radius condition,
\[
6\sqrt{15}\,R M_1(R)M_2(R)\le\mu,
\]
and hence
\begin{equation}\label{eq:Delta-mu-half}
\norm{\Delta(\beta)}_{\mathrm{op}}
\le
\frac{\mu}{2}.
\end{equation}

Combining \eqref{eq:Hstar-bounds} and \eqref{eq:Delta-mu-half} gives
\[
\frac{\mu}{2}\Id
\preceq
H(\beta)
\preceq
\left(\mu_1+\frac{\mu}{2}\right)\Id.
\]
Since $\Ball{\bstar}{R}$ is convex, $\mathcal L$ is $\mu/2$-strongly convex and $(\mu_1+\mu/2)$-smooth throughout the ball.
\end{proof}

\section{Sample splitting in \cref{alg:lineartimealgo}}
\begin{remark}
  Under the notation of \cref{alg:lineartimealgo}, suppose the dataset consists of \(N = C (P+1) d \log(d)\) samples for some absolute constant \(C>0\), among which exactly \(K=\varepsilon N\) samples are corrupted. The samples are partitioned uniformly at random, without replacement, into \(P+1\) subsets, each containing \(C d \log(d)\) samples. If \(d\log(d)\ge C_{1}\frac{\log(P+1)+\log(1/\delta)}{\varepsilon^{2}}\) and \(2C d\log(d)\ge \log(P+1)+\log(1/\delta)\), then with probability at least \(1-\delta\), every subset \(N_j\), for \(j=1,\ldots,P+1\), contains at most a \(2\varepsilon\) fraction of corrupted samples. Thus, without loss of generality, we may assume that after sample splitting in \cref{alg:lineartimealgo},
each bucket $N_1, N_2, \ldots, N_{P+1}$ contains at most a $2\varepsilon$ fraction
of corrupted samples.
 The same result appears as Claim~C.1 in \cite{das2026}, and we therefore omit the proof here, referring the reader to the proof of Claim~C.1 in \cite{das2026} for details.
  \end{remark}
\section{ Omitted Proofs of \cref{sec:lin}}\label{lin_proof}
\lemSpectralIni*
\begin{proof}
    Expanding $\expect{\Tilde{Y}\Tilde{Y}^T}$ we obtain
    \begin{align*}
        \expect{\Tilde{Y}\Tilde{Y}^T}&=\expect{(f({X^T\beta^\star})+\zeta)^2XX^T}\\
        &=\expect{(f({X^T\beta^\star}))^2 XX^T}+2\expect{f({X^T\beta^\star})XX^T \expect{\zeta| X}}+\expect{XX^T\expect{\zeta^2|X}}\\
        &=\expect{(f({X^T\beta^\star}))^2 XX^T}+\sigma^2\expect{XX^T}.\\
        &= \expect{(f({X^T\beta^\star}))^2 XX^T}+\sigma^2 \mathbf{I}_d
    \end{align*}
    The second and third equalities follow from the linearity of expectation together with $\expect{\zeta \mid X}=0$ and $\expect{\zeta^2 \mid X}=\sigma^2$, while the fourth equality follows from the assumption that $X \sim \mathcal{N}(0,\mathbf{I}_d)$ (see \cref{def:model}). Apply Stein's lemma (\cref{lem:steins}) on the first term and rearrange to obtain
    \begin{align}
        \expect{\Tilde{Y}\Tilde{Y}^T}
        &=(\sigma^2+\expect{(f({X^T\beta^\star}))^2})\mathbf{I}_d+2\expect{(f^{\prime}({X^T\beta^\star}))^2+f({X^T\beta^\star})\cdot f^{\prime\prime}({X^T\beta^\star})}\beta^\star \beta^{*\top}.
    \end{align}
    The claim now follows from \cref{assump:sq-curv}.
\end{proof}
\hypercontrativityofYX*
\begin{proof}
    We have to show that for all $v \in \mathbb{R}^{d}$, the following holds:
    \begin{equation}\label{eq: hyeprcontartive}
        \left( \mathbb{E}_{X \sim \mathcal{N}(0,\mathbf{I}_d)}
\big[ \langle YX , v \rangle^{4} \big] \right)^{\frac{1}{4}}
\;\le\;
C_4
\left( \mathbb{E}_{X \sim \mathcal{N}(0,\mathbf{I}_d)}
\big[ \langle YX, v \rangle^{2} \big] \right)^{\frac{1}{2}} .
    \end{equation}
Let's consider the left hand side of \cref{eq: hyeprcontartive}. 
\begin{align*}
    \mathbb{E}_{X \sim \mathcal{N}(0,\mathbf{I}_d)}
\big[ \langle YX , v \rangle^{4} \big]&=\mathbb{E}_{X \sim \mathcal{N}(0,\mathbf{I}_d)}
\big[\left( Y\langle X ,v \rangle \right)^{4} \big] =
\mathbb{E}[\,Y^{4}\,\langle X,v\rangle^{4}]\\
& \;\overset{(a)}{\leq}\;
\mathbb{E}[8\,( f\!\left(X^\top \beta^\star\right)^4+\zeta^4)\,\langle X,v\rangle^{4}]\\
&\overset{(b)}{=} 8\mathbb{E}[ f\!\left(X^\top \beta^\star\right)^4\,\langle X,v\rangle^{4}]+24 K_{4}^{4}\|v\|^4\\
&\overset{(c)}{\leq} 88\sqrt{\mathbb{E}[ f\!\left(X^\top \beta^\star\right)^8]}\|v\|^4+24 K_{4}^{4}\|v\|^4\\
&=\left(88\sqrt{\mathbb{E}[ f\!\left(X^\top \beta^\star\right)^8]}+24 K_{4}^{4}\right)\|v\|^4,
\end{align*}
where $(a)$ follows from $(a+b)^4\leq 8(a^4+b^4)$, $(b)$ follows from $X^{\top}v\sim \mathcal{N}(0,\|v\|^2)$ and $\mathbb{E}[\xi^4\mid X]=K_{4}^4$, and $(c)$ follows from Cauchy-Schwarz inequality and $X^{\top}v\sim \mathcal{N}(0,\|v\|^2)$. So, this implies 
\begin{equation}\label{eq:up}
    \mathbb{E}_{X \sim \mathcal{N}(0,\mathbf{I}_d)}
\big[ \langle YX , v \rangle^{4} \big]^{1/4}\leq \left(88\sqrt{\mathbb{E}[ f\!\left(X^\top \beta^\star\right)^8]}+24 K_{4}^{4}\right)^{1/4}\|v\|\overset{(a)}{\leq} 4\left(\mathbb{E}[ f\!\left(X^\top \beta^\star\right)^8]^{1/8}+ K_{4}\right)\|v\|,
\end{equation}
where $(a)$ follows from $(a+b)^{1/4}\leq a^{1/4}+b^{1/4}$. Now, let's consider the right hand side of \cref{eq: hyeprcontartive}. 
\begin{align*}
    \mathbb{E}_{X \sim \mathcal{N}(0,\mathbf{I}_d)}
\big[ \langle YX, v \rangle^{2} \big]&=\mathbb{E}_{X \sim \mathcal{N}(0,\mathbf{I}_d)}
\big[\left( Y\langle X ,v \rangle \right)^{2} \big] =
\mathbb{E}[\,Y^{2}\,\langle X,v\rangle^{2}]\\
& \overset{(a)}{=}
\mathbb{E}[\,( f\!\left(X^\top \beta^\star\right)^2+\zeta^2)\,\langle X,v\rangle^{2}]\\
&\overset{(b)}{=}\mathbb{E}[ f\!\left(X^\top \beta^\star\right)^2\,\langle X,v\rangle^{2}]+ \sigma^2\|v\|^2\geq\sigma^2\|v\|^2,
\end{align*}
where $(a)$ follows from $\mathbb{E}[\xi\mid X]=0$, $(b)$ follows from $\mathbb{E}[\xi^2\mid X]=\sigma^2$ and $X^{\top}v\sim \mathcal{N}(0,\|v\|^2)$. So, this implies 
\begin{equation}\label{eq:down}
    \left( \mathbb{E}_{X \sim \mathcal{N}(0,\mathbf{I}_d)}
\big[ \langle YX , v \rangle^{2} \big] \right)^{\frac{1}{2}}\geq \sigma\|v\|.
\end{equation}
Note that \cref{eq: hyeprcontartive} is satisfied for any positive $C_{4}$ when $v=0$. Now,
\begin{align*}
    \sup_{v\mathbb{R}^{d}:v\neq 0}\frac{ \left( \mathbb{E}_{X \sim \mathcal{N}(0,\mathbf{I}_d)}
\big[ \langle YX , v \rangle^{4} \big] \right)^{\frac{1}{4}}}{\left( \mathbb{E}_{X \sim \mathcal{N}(0,\mathbf{I}_d)}
\big[ \langle YX , v \rangle^{2} \big] \right)^{\frac{1}{2}}}
&\overset{(a)}{\leq} \sup_{v\mathbb{R}^{d}:v\neq 0}\frac{4\left(\mathbb{E}\left[ f\!\left(X^\top \beta^\star\right)^8\right]^{1/8}+ K_{4}\right)\|v\|}{\sigma \|v\|}\\
&\leq \frac{4\left(\mathbb{E}\left[ f\!\left(X^\top \beta^\star\right)^8\right]^{1/8}+ K_{4}\right)}{\sigma},
\end{align*}
where $(a)$ follows from \cref{eq:up} and \cref{eq:down}.
So, the above calculations suggest that one possible choice of $C_{4}$ is
\[C_{4}=\frac{4\left(\mathbb{E}\left[ f\!\left(X^\top \beta^\star\right)^8\right]^{1/8}+ K_{4}\right)}{\sigma}.\]
\end{proof}
\lintimespecinitalization*
\begin{proof}
 \Cref{lem:spectral} establishes that for the random vector
$\widetilde{Y} = YX$, the vector $\beta^\star$ is the leading eigenvector of
$\Sigma = \mathbb{E}[\widetilde{Y}\widetilde{Y}^\top]$. The corresponding
largest eigenvalue is
\[
\lambda_{\max}
=
\sigma^2
+
\mathbb{E}\!\left[f\!\left(X^\top \beta^\star\right)^2\right]
+
2\,\mathbb{E}\!\left[
\left(f'\!\left(X^\top \beta^\star\right)\right)^2
+
f\!\left(X^\top \beta^\star\right)
f''\!\left(X^\top \beta^\star\right)
\right]
=
\sigma^2
+
\mathbb{E}\!\left[f\!\left(X^\top \beta^\star\right)^2\right]
+ 2c .
\]
Moreover, all remaining eigenvalues are equal, each having value
\[
\sigma^2 + \mathbb{E}\!\left[f\!\left(X^\top \beta^\star\right)^2\right].
\]

\Cref{lem:hypercontracitivity of YX} shows that the random vector
$\widetilde{Y} = YX$ is $(4, C_{4})$-hypercontractive, where
\[
C_{4}
=
\frac{
4\left(
\mathbb{E}\!\left[ f\!\left(X^\top \beta^\star\right)^8 \right]^{1/8}
+ K_{4}
\right)
}{\sigma}.
\]

We now apply \Cref{thm:robust_hypercontractive_kePCA} to bound the distance
between $\hat{u}$ and the leading eigenvector $\beta^\star$ of $\Sigma$.
To do so, we verify that all assumptions of the theorem are satisfied.
In the notation of \Cref{thm:robust_hypercontractive_kePCA}, we have
\[
\rho
=
\Theta\!\left(
\frac{
16\left(
\mathbb{E}\!\left[ f\!\left(X^\top \beta^\star\right)^8 \right]^{1/8}
+ K_{4}
\right)^2
}{\sigma^2}
\, \epsilon^{1/2}
\right),
\]
and
\[
\vartheta
=
\frac{
4^6
\left(
\mathbb{E}\!\left[ f\!\left(X^\top \beta^\star\right)^8 \right]^{1/8}
+ K_{4}
\right)^6
}{
\sigma^6 \epsilon^{1/2}
}.
\]

To satisfy the sample complexity requirement of
\Cref{thm:robust_hypercontractive_kePCA}, we need
\[
m
=
\Theta\!\left(
\vartheta
\frac{d \log d + \log(1/\delta)}{\rho^2}
\right)
=
\Theta\!\left(
\frac{
16\left(
\mathbb{E}\!\left[ f\!\left(X^\top \beta^\star\right)^8 \right]^{1/8}
+ K_{4}
\right)^2
}{\sigma^2}
\frac{d \log d + \log(1/\delta)}{\epsilon^{3/2}}
\right).
\]

Furthermore, \Cref{thm:robust_hypercontractive_kePCA} requires $\rho \le 1$,
which is ensured by the assumption on the contamination level:
\[
\epsilon
=
O\!\left(
\frac{\sigma^4}{\left(
\mathbb{E}\!\left[ f\!\left(X^\top \beta^\star\right)^8 \right]^{1/8}
+ K_{4}
\right)^4}
\right)\leq 1/2.
\]

Thus, all assumptions of \Cref{thm:robust_hypercontractive_kePCA} are satisfied.
Consequently, with probability at least \(1 - \delta\), the output
\(\hat{u} \in \mathbb{R}^{d}\) of Algorithm~$\mathcal{A}_k$
from \cite{jambulapati2024black} satisfies
\[
\|\Sigma\|_{\mathrm{op}}
-
\operatorname{Tr}\!\left[\hat{u}^\top \Sigma \hat{u}\right]
=
O\!\left(
\rho \|\Sigma\|_{\mathrm{op}}
\right).
\]

Recall that the largest eigenvalue of $\Sigma$ is
$\sigma^2 + \mathbb{E}[f^2] + 2c$, while all remaining eigenvalues equal
$\sigma^2 + \mathbb{E}[f^2]$. Since these two quantities differ, we have
$\lambda_1 \neq \lambda_2$, where $\lambda_1$ and $\lambda_2$ denote the
largest and second largest eigenvalues of $\Sigma$, respectively.
Therefore,
\begin{align*}
\|\Sigma\|_{\mathrm{op}}
-
\operatorname{Tr}\!\left[\hat{u}^\top \Sigma \hat{u}\right]
&\ge
\lambda_1
-
\big(
\lambda_1 \langle \hat{u}, \beta^\star \rangle^2
+
(1 - \langle \hat{u}, \beta^\star \rangle^2)\lambda_2
\big) \\
&=
(\lambda_1 - \lambda_2)
\big(1 - \langle \hat{u}, \beta^\star \rangle^2\big)=2c \big(1 - \langle \hat{u}, \beta^\star \rangle^2\big) .\\
\end{align*}

Combining the above bounds yields
\begin{align*}
\big(1 - \langle \hat{u}, \beta^\star \rangle^2\big)
&=
O\!\left(
\rho \frac{\sigma^2 + \mathbb{E}[f^2] + c}{c}
\right) \\
&=
O\!\left(
\frac{
16\left(
\mathbb{E}\!\left[ f\!\left(X^\top \beta^\star\right)^8 \right]^{1/8}
+ K_{4}
\right)^2
\left(
\sigma^2 + \mathbb{E}[f^2] + c
\right)
}{\sigma^2 c}
\, \epsilon^{1/2}
\right),
\end{align*}
which further implies that 
\[\left|\langle \hat{u}, \beta^\star \rangle\right|=\sqrt{1-O\!\left(
\nicefrac{
16\left(
\mathbb{E}\!\left[ f\!\left(X^\top \beta^\star\right)^8 \right]^{1/8}
+ K_{4}
\right)^2
\left(
\sigma^2 + \mathbb{E}[f^2] + c
\right) \epsilon^{1/2}
}{\sigma^2 c}
\, 
\right)}.\]

 \begin{align*}
   \operatorname{dist}(\hat{u},\beta^\star)&=\min\{\|\hat{u}-\beta^\star\|_{2},\|\hat{u}+\beta^\star\|_{2}\}=\sqrt{2-2|\langle \hat{u}, \beta^\star \rangle|}\\
   &=\sqrt{2}\sqrt{1-\sqrt{1-O\!\left(
\frac{
16\left(
\mathbb{E}\!\left[ f\!\left(X^\top \beta^\star\right)^8 \right]^{1/8}
+ K_{4}
\right)^2
\left(
\sigma^2 + \mathbb{E}[f^2] + c
\right) \epsilon^{1/2}
}{\sigma^2 c}
\, 
\right)}}\\
&\overset{(a)}{\leq} 
\sqrt{2}\sqrt{
1 - \left(
1 - O\!\left(
\frac{
16\left(
\mathbb{E}\!\left[ f\!\left(X^\top \beta^\star\right)^8 \right]^{1/8}
+ K_4
\right)^2
\left(
\sigma^2 + \mathbb{E}\!\left[f\!\left(X^\top \beta^\star\right)^2\right] + c
\right)
\epsilon^{1/2}
}{
\sigma^2 c
}
\right)
\right)
}\\
&=O\!\left(
\frac{
4\left(
\mathbb{E}\!\left[ f\!\left(X^\top \beta^\star\right)^8 \right]^{1/8}
+ K_4
\right)
\left(
\sigma^2 + \mathbb{E}\!\left[f\!\left(X^\top \beta^\star\right)^2\right] + c
\right)^{1/2}
\epsilon^{1/4}
}{
\sigma \sqrt{c}
}\right)\\&=O\!\left(
\frac{
C_{4}
\left(
\sigma^2 + \mathbb{E}\!\left[f\!\left(X^\top \beta^\star\right)^2\right] + c
\right)^{1/2}
\epsilon^{1/4}
}{ \sqrt{c}
}\right),
 \end{align*}
where $(a)$ follow from assumption on the contamination level, namely  $\nicefrac{
C_{4}^4
\left(
\sigma^2 + \mathbb{E}\!\left[f\!\left(X^\top \beta^\star\right)^2\right] + c
\right)^{2}
\epsilon
}{ c^2
}=O(1)\leq 1/2,$  which implies $\nicefrac{
C_{4}^2
\left(
\sigma^2 + \mathbb{E}\!\left[f\!\left(X^\top \beta^\star\right)^2\right] + c
\right)
\epsilon^{1/2}
}{ c
}=O(1)\leq 1$, together with elementary inequality  $\sqrt{1-x}\geq 1-x$ when $0\leq x\leq 1$. 
Since we set $\beta_0 = \hat{u}$, it follows from the above that
\[
\operatorname{dist}(\beta_0, \beta^\star) =O\left(
\frac{
C_4 \left(
\sigma^2 + \mathbb{E}\!\left[f\!\left(X^\top \beta^\star\right)^2\right] + c
\right)^{1/2}
}{
\sqrt{c}
}
\epsilon^{1/4}\right).
\]

The stated running time follows directly from
\Cref{thm:robust_hypercontractive_kePCA}.
\end{proof}
\linRobustGradientDescnt*
First, we present two key lemmas needed for the proof of \cref{theorem: li-time-grads-descent}, followed by the proof itself. The first key lemma expresses the distance of the iterate at time \( t+1 \) from \( \beta^\star \) in terms of the distance of the iterate at time \( t \) from \( \beta^\star \). The relation is as follows:
\begin{lemma}\label{lemma:relation of current iterate to previous one}
    Suppose $\beta_t$ obeys $\left\|\beta_t-\beta^\star\right\| \leq R$, $\eta \leq 2 /(\alpha+\gamma)$ and $g_t$ be the valid robust gradient estimator of population risk at $x_t$ (see \cref{definition:rodust estimator of gradient}). Then, with probability at least $1-\delta$, the new iterate $\beta_{t+1}$ obtained according to $\beta_{t+1} = \beta_t - \eta g_t$ satisfies
    \begin{equation}\label{eq:Guarantees enjoyed by new iterate2}
 \left\|\beta_{t+1}-\beta^\star\right\| \leq\left(\sqrt{1-\frac{2 \eta \alpha \gamma}{\alpha+\gamma}}+\eta A(m, \delta, \epsilon)\right)\left\|\beta_t-\beta^\star\right\|+\eta B(m, \delta, \epsilon)
    \end{equation}
\end{lemma}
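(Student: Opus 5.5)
We split the update into an exact population gradient step plus the estimator error. Write
\[
\beta_{t+1}-\beta^\star=\bigl(\beta_t-\eta\nabla\mathcal{L}(\beta_t)-\beta^\star\bigr)-\eta\bigl(g_t-\nabla\mathcal{L}(\beta_t)\bigr).
\]
The triangle inequality then bounds $\|\beta_{t+1}-\beta^\star\|$ by the norm of the first bracket plus $\eta\|g_t-\nabla\mathcal{L}(\beta_t)\|$. Since $\beta_t$ is a fixed point for the current batch (the data used are fresh, so $\beta_t$ is independent of $B_t$), \cref{definition:rodust estimator of gradient} applies. With probability at least $1-\delta$ it gives
\[
\|g_t-\nabla\mathcal{L}(\beta_t)\|\le A(m,\delta,\epsilon)\|\beta_t-\beta^\star\|+B(m,\delta,\epsilon),
\]
which produces exactly the terms $\eta A\|\beta_t-\beta^\star\|+\eta B$ in \eqref{eq:Guarantees enjoyed by new iterate2}. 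Here we identify $\nabla r$ with $\nabla\mathcal{L}$.

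The remaining work is the deterministic contraction $\|\beta_t-\eta\nabla\mathcal{L}(\beta_t)-\beta^\star\|\le\sqrt{1-\frac{2\eta\alpha\gamma}{\alpha+\gamma}}\,\|\beta_t-\beta^\star\|$. The population loss is minimized at $\beta^\star$, since $\mathcal{L}(\beta)=\frac12\mathbb{E}[(f(X^\top\beta)-f(X^\top\beta^\star))^2]+\frac{\sigma^2}{2}$, so $\nabla\mathcal{L}(\beta^\star)=0$. Expanding the square gives
\[
\|\beta_t-\beta^\star\|^2-2\eta\langle\nabla\mathcal{L}(\beta_t)-\nabla\mathcal{L}(\beta^\star),\beta_t-\beta^\star\rangle+\eta^2\|\nabla\mathcal{L}(\beta_t)-\nabla\mathcal{L}(\beta^\star)\|^2.
\]
By \cref{lem:local_geometry}, $\mathcal{L}$ is $\gamma$-strongly convex and $\alpha$-smooth on the convex set $\mathcal{B}(\beta^\star,R)$. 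Both $\beta_t$ and $\beta^\star$ lie in this ball, so the segment between them does too. This lets us apply the co-coercivity inequality for strongly convex smooth functions (Lemma 3.11 of \citet{bubeck2015convex}):
\[
\langle\nabla\mathcal{L}(x)-\nabla\mathcal{L}(y),x-y\rangle\ge\frac{\alpha\gamma}{\alpha+\gamma}\|x-y\|^2+\frac{1}{\alpha+\gamma}\|\nabla\mathcal{L}(x)-\nabla\mathcal{L}(y)\|^2.
\]
Substituting, the gradient-norm term carries the coefficient $\eta^2-\frac{2\eta}{\alpha+\gamma}\le0$ because $\eta\le 2/(\alpha+\gamma)$, so it can be dropped. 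This leaves $(1-\frac{2\eta\alpha\gamma}{\alpha+\gamma})\|\beta_t-\beta^\star\|^2$, and taking square roots finishes the argument.

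The main obstacle is that Bubeck's inequality is stated for functions that are globally strongly convex and smooth, whereas we only have these properties on the ball. I would handle this by re-deriving the inequality locally. Integrate the Hessian along the segment to get
\[
\nabla\mathcal{L}(\beta_t)-\nabla\mathcal{L}(\beta^\star)=\bar H(\beta_t-\beta^\star),\qquad \bar H=\int_0^1 H(\beta^\star+s(\beta_t-\beta^\star))\,ds,
\]
where $\gamma\mathbf{I}_d\preceq\bar H\preceq\alpha\mathbf{I}_d$ by \cref{lem:local_geometry}. The contraction then reduces to the spectral bound $\|\mathbf{I}_d-\eta\bar H\|_{\mathrm{op}}^2\le1-\frac{2\eta\alpha\gamma}{\alpha+\gamma}$. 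For each eigenvalue $\lambda\in[\gamma,\alpha]$ of $\bar H$ this reads $(1-\eta\lambda)^2\le 1-\frac{2\eta\alpha\gamma}{\alpha+\gamma}$, which is equivalent to $\eta\lambda^2-2\lambda\le-\frac{2\alpha\gamma}{\alpha+\gamma}$. The left side is convex in $\lambda$, so it suffices to check the endpoints $\lambda=\gamma$ and $\lambda=\alpha$. At both endpoints the inequality follows from $\eta\le2/(\alpha+\gamma)$.
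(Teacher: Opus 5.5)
Your proposal is correct and takes essentially the same route as the proof the paper defers to (Theorem 1 of \citet{prasad2020robust} and Lemma 2.2 of \citet{bunaR24_robust_phase}): you split off the estimator error with the triangle inequality, apply \cref{definition:rodust estimator of gradient} at the fixed point $\beta_t$ (justified by the fresh batch), and get the factor $\sqrt{1-\frac{2\eta\alpha\gamma}{\alpha+\gamma}}$ from the strong-convexity/smoothness contraction. Your averaged-Hessian argument, which reduces the contraction to checking $(1-\eta\lambda)^2\le 1-\frac{2\eta\alpha\gamma}{\alpha+\gamma}$ at the endpoints $\lambda\in\{\gamma,\alpha\}$, is sound. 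It is also slightly more careful than citing the co-coercivity lemma of \citet{bubeck2015convex}, which is stated under global assumptions, whereas \cref{lem:local_geometry} only gives curvature bounds on $\mathcal{B}(\beta^\star,R)$.
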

The above lemma and its proof are standard in robust statistics literature; for example, see Theorem 1 in \cite{prasad2020robust} and Lemma 2.2 from \cite{bunaR24_robust_phase}. Look at any of the above-mentioned references for the proof of the \cref{lemma:relation of current iterate to previous one}. We are omitting the proof here. Our second key lemma computes the trace and operator norm of the covariance matrix of the gradient. 

\begin{lemma}\label{lemma: calculations of trace and operator norm of gradients}
   Let's assume that $\beta, \beta^\star \in \mathbb{R}^d$ are fixed vectors and $X \sim \mathcal{N}(0, \mathbf{I}_d)$, $y = f(X^{\top}\beta^\star)+\zeta$. Let $\Sigma = \operatorname{Var}\left( \left(f(X^{\top}\beta) - y\right)f'(X^{\top}\beta) X \right)$ be the variance of the loss gradient. Let's define $\phi_{1}:=\sup_{\beta\in \mathcal{B}(\beta^\star, R)}(\mathbb{E}\!\left[f'(X^{\top}\beta)^{16})\right]^{1/4}$ and $\phi_{2}:=\sup_{\beta\in \mathcal{B}(\beta^\star, R)}(\mathbb{E}\!\left[f'(X^{\top}\beta)^{4}\right]^{1/2}$. If $\|\beta - \beta^\star\| \leq R$, then
    \[
    \begin{aligned}
\|\Sigma\|_{\mathrm{op}} &\leq 6 \phi_{1}\|\beta-\beta^\star\|^2+ \sqrt{3} \sigma^2  \phi_{2}.
    \end{aligned}
    \]
\end{lemma}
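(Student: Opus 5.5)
Since $\Sigma \preceq \mathbb{E}[pp^\top]$ for $p := (f(X^\top\beta)-y)f'(X^\top\beta)X$, it suffices to bound $\sup_{\|v\|=1}\mathbb{E}[\langle p,v\rangle^2]$. Write $y = f(z_\star)+\zeta$ with $z := X^\top\beta$ and $z_\star := X^\top\beta^\star$. Then
\[
\langle p,v\rangle = \bigl(f(z)-f(z_\star)\bigr)f'(z)(X^\top v) - \zeta f'(z)(X^\top v).
\]
We use $(a+b)^2\le 2a^2+2b^2$. For the cross term alone, conditioning on $X$ and using $\mathbb{E}[\zeta\mid X]=0$ makes it vanish, so this split is harmless. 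The variance then splits into a signal term $T_1 := \mathbb{E}[(f(z)-f(z_\star))^2 f'(z)^2 (X^\top v)^2]$ and a noise term $T_2 := \mathbb{E}[\zeta^2 f'(z)^2(X^\top v)^2]$.

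\textbf{Noise term.} Conditioning on $X$ and using $\mathbb{E}[\zeta^2\mid X]=\sigma^2$ gives $T_2=\sigma^2\mathbb{E}[f'(z)^2(X^\top v)^2]$. By Cauchy--Schwarz this is at most $\sigma^2\,\mathbb{E}[f'(z)^4]^{1/2}\,\mathbb{E}[(X^\top v)^4]^{1/2} \le \sqrt{3}\,\sigma^2\phi_2$, since $X^\top v\sim\mathcal N(0,1)$ has fourth moment $3$ and $\beta\in\mathcal B(\beta^\star,R)$. This is exactly the second term of the claim, so for this part the cross term must really be dropped via the conditional mean rather than doubled.

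\textbf{Signal term.} Following the convex-basin proof of \cref{lem:local_geometry}, apply the fundamental theorem of calculus along $\beta_t=\beta^\star+th$ with $h:=\beta-\beta^\star$. This gives $f(z)-f(z_\star)=(X^\top h)\int_0^1 f'(z_t)\,dt$, and by Jensen $(f(z)-f(z_\star))^2\le (X^\top h)^2\int_0^1 f'(z_t)^2\,dt$. Hence $T_1\le \int_0^1\mathbb{E}[(X^\top h)^2(X^\top v)^2 f'(z_t)^2 f'(z)^2]\,dt$. Apply H\"older with exponents $(4,4,4,4)$, or Cauchy--Schwarz twice, to split off $\mathbb{E}[f'(z_t)^{8}]^{1/4}\mathbb{E}[f'(z)^{8}]^{1/4}$ and the Gaussian factor $\mathbb{E}[(X^\top h)^4(X^\top v)^4]^{1/2}$. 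The Gaussian factor is bounded as in \cref{lem:mixed-moment}: $\mathbb{E}[(X^\top h)^4(X^\top v)^4]\le 105\|h\|^4$, but the cleaner route is $\mathbb{E}[(X^\top h)^8]^{1/2}\mathbb{E}[(X^\top v)^8]^{1/2}$. The $f'$ factors are controlled by $\phi_1=\sup\mathbb{E}[f'^{16}]^{1/4}$, using Cauchy--Schwarz to combine the two $f'$ factors into a single $\mathbb{E}[f'(z_t)^4f'(z)^4]^{1/2}$ and then bounding via sixteenth moments, since all $\beta_t$ lie in the ball.

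\textbf{Main obstacle.} The hard part is the bookkeeping of exponents, so that the constant comes out as $6\phi_1\|h\|^2$ with $\phi_1$ defined via the sixteenth moment to the power $1/4$. One natural arrangement is to bound $\mathbb{E}[(X^\top h)^2(X^\top v)^2 F]$, with $F:=f'(z_t)^2f'(z)^2$, by $\mathbb{E}[(X^\top h)^4(X^\top v)^4]^{1/2}\mathbb{E}[F^2]^{1/2}$, and then $\mathbb{E}[F^2]\le \mathbb{E}[f'(z_t)^8]^{1/2}\mathbb{E}[f'(z)^8]^{1/2}\le \phi_1$ by Jensen. The Gaussian moment is $\le c\|h\|^4$, and after the factor $2$ from the initial split this gives the required $O(\phi_1\|h\|^2)$. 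I would adjust constants, possibly replacing $\sqrt{105}$ by a sharper mixed-moment computation, to match the stated $6$. Combining $2T_1$ with $T_2$ (noise handled without doubling) then gives the bound on $\|\Sigma\|_{\mathrm{op}}$.
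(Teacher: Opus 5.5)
Your skeleton matches the paper's. You use $\Sigma\preceq\mathbb{E}[pp^\top]$, bound the noise term by Cauchy--Schwarz to get $\sqrt3\,\sigma^2\phi_2$, and apply the fundamental theorem of calculus plus Jensen to reduce the signal term to $\int_0^1\mathbb{E}[(X^\top h)^2 f'(z_t)^2 f'(z)^2 (X^\top v)^2]\,dt$. The gap is the constant $6$, which is part of the statement, and your proposed repair cannot work.

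First, the factor $2$ on $T_1$ is spurious. The cross term vanishes after conditioning on $X$, so $\mathbb{E}[\langle p,v\rangle^2]=T_1+T_2$ exactly, and $(a+b)^2\le 2a^2+2b^2$ should not appear at all. Second, and more importantly, every exponent allocation you propose leaves the Gaussian factor $\sup_{\|v\|=1}\mathbb{E}[(X^\top h)^4(X^\top v)^4]^{1/2}$. This covers both Cauchy--Schwarz pairing $(X^\top h)^2(X^\top v)^2$ against $F$ and H\"older with $(4,4,4,4)$. That supremum is exactly $\sqrt{105}\,\|h\|^2\approx 10.25\,\|h\|^2$. With $a=\langle h,v\rangle^2$ one computes $\mathbb{E}[(X^\top h)^4(X^\top v)^4]=24a^2+72a\|h\|^2+9\|h\|^4$, which is maximized at $v=h/\|h\|$. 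So no sharper mixed-moment computation exists along your route. You end with $\sqrt{105}\,\phi_1\|h\|^2$, or about $20.5\,\phi_1\|h\|^2$ with your doubling, not $6\,\phi_1\|h\|^2$. There is also a small slip: $\mathbb{E}[F^2]\le\phi_1^2$, not $\phi_1$; it is $\mathbb{E}[F^2]^{1/2}$ that is at most $\phi_1$.

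The missing step is to reallocate integrability so that the sixteenth moment built into $\phi_1$ is actually used. Peel off $(X^\top v)^2$ alone by Cauchy--Schwarz, which costs only $\mathbb{E}[(X^\top v)^4]^{1/2}=\sqrt3$. Then apply H\"older with exponents $(2,4,4)$ to what remains:
\[
\mathbb{E}\big[(X^\top h)^4 f'(z_t)^4 f'(z)^4\big]\le \mathbb{E}\big[(X^\top h)^8\big]^{1/2}\,\mathbb{E}\big[f'(z_t)^{16}\big]^{1/4}\,\mathbb{E}\big[f'(z)^{16}\big]^{1/4}\le \sqrt{105}\,\|h\|^4\,\phi_1^2 .
\]
Taking square roots gives $T_1\le\sqrt3\cdot 105^{1/4}\,\phi_1\|h\|^2\approx 5.54\,\phi_1\|h\|^2\le 6\,\phi_1\|h\|^2$. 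Combined with your noise bound, this yields the lemma as stated. Your allocations use only eighth moments of $f'$ and then spend the slack by bounding them with $\phi_1$, which is why the Gaussian constant comes out too large. For the downstream $O(\cdot)$ use in the gradient-descent theorem the constant is immaterial, but for the lemma as written your argument does not close.
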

\begin{proof}
\begin{equation}\label{eq:expre_Sigma}
    \Sigma
=
\mathbb{E}\!\left[\left(f(X^{\top} \beta) - y\right)^2 f'(X^{\top} \beta)^2 \, X X^{\top}\right]
-
\mathbb{E}\!\left[\left(f(X^{\top} \beta) - y\right)f'(X^{\top} \beta)\, X\right]
\mathbb{E}\!\left[\left(f(X^{\top} \beta) - y\right)f'(X^{\top} \beta)\, X\right]^{\top}
\end{equation}
We now treat each term on the right hand side of \cref{eq:expre_Sigma} separately. Consider the first term $\mathbb{E}\!\left[\left(f(X^{\top} \beta) - y\right)^2 f'(X^{\top} \beta)^2 \, X X^{\top}\right]$.
 \begin{align*}
\mathbb{E}\!\left[\left(f(X^{\top}\beta)-y\right)^2 f'(X^{\top}\beta)^2\, X X^{\top}\right]
&=
\mathbb{E}\!\left[\left(f(X^{\top}\beta)-f(X^{\top}\beta^\star)\right)^2
f'(X^{\top}\beta)^2\, X X^{\top}\right]
+
\mathbb{E}\!\left[(\zeta)^2 
f'(X^{\top}\beta)^2\, X X^{\top}\right] \\
&\quad
-
2\mathbb{E}\!\left[\left(f(X^{\top}\beta)-f(X^{\top}\beta^\star)\right)
\zeta\, \, f'(X^{\top}\beta)\, X X^{\top}\right] \\
&\overset{(a)}{=}
\mathbb{E}\!\left[\left(f(X^{\top}\beta)-f(X^{\top}\beta^\star)\right)^2
f'(X^{\top}\beta)^2\, X X^{\top}\right]+
\sigma^2\,
\mathbb{E}\!\left[
f'(X^{\top}\beta)^2\, X X^{\top}\right],
\end{align*}
where $(a)$ follows from $\mathbb{E}[\xi\mid X]=0$ and $\mathbb{E}[\xi^2\mid X]=\sigma^2$. Now, consider the second term on the right hand side of \cref{eq:expre_Sigma}.
\begin{align*}
   \mu(f,\beta^\star,\beta):= \mathbb{E}\!\left[\left(f(X^{\top} \beta) - y\right)f'(X^{\top} \beta)\, X\right]
&\overset{(a)}{=}\mathbb{E}\!\left[\left(f(X^{\top} \beta) - f(X^{\top} \beta^\star)\right)f'(X^{\top} \beta)\, X\right],
\end{align*}
where $(a)$ follows from $\mathbb{E}[\xi\mid X]=0$. So, this implies 
\begin{align*}
 \Sigma+\mu(f,\beta^\star,\beta)\mu(f,\beta^\star,\beta)^{\top} &=\Sigma+\mathbb{E}\!\left[\left(f(X^{\top} \beta) - f(X^{\top} \beta^\star)\right)f'(X^{\top} \beta)\, X\right]\mathbb{E}\!\left[\left(f(X^{\top} \beta) - f(X^{\top} \beta^\star)\right)f'(X^{\top} \beta)\, X\right]^{\top}\\
 &=\mathbb{E}\!\left[\left(f(X^{\top}\beta)-f(X^{\top}\beta^\star)\right)^2
f'(X^{\top}\beta)^2\, X X^{\top}\right]+
\sigma^2\,
\mathbb{E}\!\left[
f'(X^{\top}\beta)^2\, X X^{\top}\right].
\end{align*}
As both $\Sigma$ and $\mu(f,\beta^\star,\beta)\mu(f,\beta^\star,\beta)^{\top}$ are positive definite matrices, hence
\begin{equation}\label{eq:positive_definite_sense}
    \Sigma\preceq \mathbb{E}\!\left[\left(f(X^{\top}\beta)-f(X^{\top}\beta^\star)\right)^2
f'(X^{\top}\beta)^2\, X X^{\top}\right]+
\sigma^2\,
\mathbb{E}\!\left[
f'(X^{\top}\beta)^2\, X X^{\top}\right].
\end{equation}
\cref{eq:positive_definite_sense} implies
\begin{equation}\label{eq:operator_norm_bound}
    \|\Sigma\|_{\mathrm{op}}\leq \left\|\mathbb{E}\!\left[\left(f(X^{\top}\beta)-f(X^{\top}\beta^\star)\right)^2
f'(X^{\top}\beta)^2\, X X^{\top}\right]\right\|_{\mathrm{op}}+\sigma^2\left\|\mathbb{E}\!\left[
f'(X^{\top}\beta)^2\, X X^{\top}\right]\right\|_{\mathrm{op}}.
\end{equation}
We now treat each term on the right hand side of \cref{eq:operator_norm_bound} separately. 

Consider the first term
\[
\left\|\mathbb{E}\!\left[\left(f(X^{\top}\beta)-f(X^{\top}\beta^\star)\right)^2
f'(X^{\top}\beta)^2\,XX^{\top}\right]\right\|_{\mathrm{op}}.
\]
Expanding the operator norm, we obtain
\[
\sup_{v:\|v\|=1}
v^\top
\mathbb{E}\!\left[
\left(f(X^{\top}\beta)-f(X^{\top}\beta^\star)\right)^2
f'(X^{\top}\beta)^2
XX^\top
\right]v
=
\sup_{v:\|v\|=1}
\mathbb{E}\!\left[
\left(f(X^{\top}\beta)-f(X^{\top}\beta^\star)\right)^2
f'(X^{\top}\beta)^2
(X^\top v)^2
\right].
\]

Let
\[
h:=\beta-\beta^\star,\qquad
\beta_t:=\beta^\star+th,\qquad
z_t:=X^\top\beta_t=X^\top\beta^\star+tX^\top h,\qquad
t\in[0,1].
\]
By the Fundamental Theorem of Calculus,
\[
f(X^\top\beta)-f(X^\top\beta^\star)
=
(X^\top h)\int_0^1f'(z_t)\,dt.
\]
Substituting this identity into the previous display gives
\begin{align*}
&\sup_{v:\|v\|=1}
\mathbb{E}\!\left[
\left(f(X^\top\beta)-f(X^\top\beta^\star)\right)^2
f'(X^\top\beta)^2
(X^\top v)^2
\right]\\
&=
\sup_{v:\|v\|=1}
\mathbb{E}\!\left[
(X^\top h)^2
\left(\int_0^1f'(z_t)\,dt\right)^2
f'(X^\top\beta)^2
(X^\top v)^2
\right]\\
&\le
\sup_{v:\|v\|=1}
\int_0^1
\mathbb{E}\!\left[
(X^\top h)^2
f'(z_t)^2
f'(X^\top\beta)^2
(X^\top v)^2
\right]dt,
\end{align*}
where the inequality follows from Jensen's inequality together with Fubini's theorem, since the integrand is non-negative.

Next, applying the Cauchy--Schwarz inequality yields
\begin{align*}
&\sup_{v:\|v\|=1}
\int_0^1
\mathbb{E}\!\left[
(X^\top h)^2
f'(z_t)^2
f'(X^\top\beta)^2
(X^\top v)^2
\right]dt\\
&\le
\sup_{v:\|v\|=1}
\int_0^1
\left(
\mathbb{E}\!\left[
(X^\top h)^4
f'(z_t)^4
f'(X^\top\beta)^4
\right]
\right)^{1/2}
\left(
\mathbb{E}\!\left[
(X^\top v)^4
\right]
\right)^{1/2}
dt\\
&=
\sqrt{3}
\int_0^1
\left(
\mathbb{E}\!\left[
(X^\top h)^4
f'(z_t)^4
f'(X^\top\beta)^4
\right]
\right)^{1/2}
dt,
\end{align*}
where the last equality follows from the fact that $X^\top v\sim\mathcal{N}(0,1)$ for every $\|v\|=1$, and hence $\mathbb{E}[(X^\top v)^4]=3$.

We now bound the remaining expectation using Hölder's inequality with exponents $4$, $4$, and $2$:
\begin{align*}
&\sqrt{3}
\int_0^1
\left(
\mathbb{E}\!\left[
(X^\top h)^4
f'(z_t)^4
f'(X^\top\beta)^4
\right]
\right)^{1/2}
dt\\
&\le
\sqrt{3}
\int_0^1
\mathbb{E}\!\left[f'(z_t)^{16}\right]^{1/8}
\mathbb{E}\!\left[f'(X^\top\beta)^{16}\right]^{1/8}
\mathbb{E}\!\left[(X^\top h)^8\right]^{1/4}
\,dt.
\end{align*}

Since
\[
\beta_t=\beta^\star+t(\beta-\beta^\star)\in\mathcal{B}(\beta^\star,R),
\qquad t\in[0,1],
\]
we have
\[
\mathbb{E}\!\left[f'(z_t)^{16}\right]^{1/8}
\le
\sup_{\beta\in\mathcal{B}(\beta^\star,R)}
\mathbb{E}\!\left[f'(X^\top\beta)^{16}\right]^{1/8}.
\]
Substituting this bound into the previous display, we obtain
\begin{align*}
&\sqrt{3}
\int_0^1
\mathbb{E}\!\left[f'(z_t)^{16}\right]^{1/8}
\mathbb{E}\!\left[f'(X^\top\beta)^{16}\right]^{1/8}
\mathbb{E}\!\left[(X^\top h)^8\right]^{1/4}
\,dt\\
&\le
\sqrt{3}
\int_0^1
\left(
\sup_{\beta\in\mathcal{B}(\beta^\star,R)}
\mathbb{E}\!\left[f'(X^\top\beta)^{16}\right]^{1/8}
\right)^2
\mathbb{E}\!\left[(X^\top h)^8\right]^{1/4}
\,dt\\
&=
\sqrt{3}
\left(
\sup_{\beta\in\mathcal{B}(\beta^\star,R)}
\mathbb{E}\!\left[f'(X^\top\beta)^{16}\right]^{1/8}
\right)^2
\mathbb{E}\!\left[(X^\top h)^8\right]^{1/4},
\end{align*}
where the final equality follows from the identity $\int_0^1dt=1$.

Finally, observe that
\[
X^\top h\sim\mathcal{N}(0,\|h\|_2^2),
\]
which implies
\[
\mathbb{E}\!\left[(X^\top h)^8\right]
=
105\|h\|_2^8.
\]
Consequently,
\begin{align*}
&\sqrt{3}
\left(
\sup_{\beta\in\mathcal{B}(\beta^\star,R)}
\mathbb{E}\!\left[f'(X^\top\beta)^{16}\right]^{1/8}
\right)^2
\mathbb{E}\!\left[(X^\top h)^8\right]^{1/4}\\
&=
\sqrt{3}(105)^{1/4}
\sup_{\beta\in\mathcal{B}(\beta^\star,R)}
\mathbb{E}\!\left[f'(X^\top\beta)^{16}\right]^{1/4}
\|h\|_2^2\\
&\le
6
\sup_{\beta\in\mathcal{B}(\beta^\star,R)}
\mathbb{E}\!\left[f'(X^\top\beta)^{16}\right]^{1/4}
\|\beta-\beta^\star\|_2^2,
\end{align*}
where the last inequality follows from the fact that
$\sqrt{3}(105)^{1/4}\le 6$, and the equality
\[
\left(
\sup_{\beta\in\mathcal{B}(\beta^\star,R)}
\mathbb{E}\!\left[f'(X^\top\beta)^{16}\right]^{1/8}
\right)^2
=
\sup_{\beta\in\mathcal{B}(\beta^\star,R)}
\mathbb{E}\!\left[f'(X^\top\beta)^{16}\right]^{1/4}
\]
holds since the function $x\mapsto x^2$ is increasing on $[0,\infty)$.

Now, consider the second term $\left\|\mathbb{E}\!\left[
f'(X^{\top}\beta)^2\, X X^{\top}\right]\right\|_{\mathrm{op}}$.
\begin{align*}
    \left\|\mathbb{E}\!\left[
f'(X^{\top}\beta)^2\, X X^{\top}\right]\right\|_{\mathrm{op}}&=\sup_{v:\|v\|=1}v\top\mathbb{E}\!\left[f'(X^{\top}\beta)^2\, X X^{\top}\right]v=\sup_{v:\|v\|=1}\mathbb{E}\!\left[f'(X^{\top}\beta)^2\,  (X^{\top}v)^2\right]\\
&\overset{(a)}{\leq} \sqrt{3} \sup_{\beta\in \mathcal{B}(\beta^\star, R)}(\mathbb{E}\!\left[f'(X^{\top}\beta)^{4})\right]^{1/2},
\end{align*}
where $(a)$ follows from the Cauchy-Schwarz inequality and the trivial upper bound
that for all $\beta \in \mathcal{B}(\beta^\star, R)$,
$(\mathbb{E}\!\left[f'(X^{\top}\beta)^{4})\right]^{1/2}\leq \sup_{\beta\in \mathcal{B}(\beta^\star, R)}(\mathbb{E}\!\left[f'(X^{\top}\beta)^{4})\right]^{1/2}$. Now,
\begin{align*}
    \|\Sigma\|_{\mathrm{op}}
&\leq \left\|\mathbb{E}\!\left[\left(f(X^{\top}\beta)-f(X^{\top}\beta^\star)\right)^2
f'(X^{\top}\beta)^2\, X X^{\top}\right]\right\|+\left\|
\sigma^2\,
\mathbb{E}\!\left[
f'(X^{\top}\beta)^2\, X X^{\top}\right]\right\|\\
&\leq 6 \sup_{\beta\in \mathcal{B}(\beta^\star, R)}(\mathbb{E}\!\left[f'(X^{\top}\beta)^{16})\right]^{1/4}\|\beta-\beta^\star\|^2+ \sigma^2 \sqrt{3} \sup_{\beta\in \mathcal{B}(\beta^\star, R)}(\mathbb{E}\!\left[f'(X^{\top}\beta)^{4})\right]^{1/2}\\
&= 6 \phi_{1}\|\beta-\beta^\star\|^2+ \sigma^2 \sqrt{3} \phi_{2}.
\end{align*}
\end{proof}

Now we go for the proof for \cref{theorem: li-time-grads-descent}. The proof is an adaptation of the proof of the guarantees of the robust gradient descent algorithm (particularly theorem 3.3) from \cite{bunaR24_robust_phase}. The calculations will vary, but the overall flow of arguments remains the same.
\begin{proof}[Proof of \cref{theorem: li-time-grads-descent}]
Like \cite{bunaR24_robust_phase}, first, we try to prove by induction that all iterates $\left(\beta_t\right)_{t=0}^{P-1}$ lie inside the ball centered at $\beta^\star$ with radius $R$, because if we can show this then we can write $\|\beta_{t+1}-\beta^\star\|$ in terms of $\|\beta_{t}-\beta^\star\|$ using \cref{lemma:relation of current iterate to previous one}.

\textbf{Induction argument}: To avoid redundancy, we consider the case the first case when $\operatorname{dist}\left(\hat{u}, \beta^\star\right)=\left\|\Hat{u}-\beta^\star\right\|$ (otherwise, repeat the same proof with $-\beta^\star$ ). Note that for the $-\beta^\star$  case, the proof remains the same.  The $n=0$ case follows from the assumptions of the theorem. So, for $n=0$ case, $\|\hat{u}-\beta^\star\|\leq R$. Let's assume that the induction hypothesis is true till some $t \in\{0,1, \ldots, P-1\}, $ which implies $ \left\|\beta_t-\beta^\star\right\| \leq R$. Now, our goal is to show that $\left\|\beta_{t+1}-\beta^\star\right\| \leq R.$  Let us recall the definition of a robust estimator of the gradient at the point $x_t$ (see \cref{definition:rodust estimator of gradient}). It states that $g(\beta_t,T,\delta,\epsilon)$ is a robust gradient estimator of population risk at $\beta_t$ if there exist two functions $
A, B : \mathbb{N} \times [0,1]^2 \rightarrow \mathbb{R}
$
such that, with probability at least $1 - \delta$, the following bound holds:
\[
\left\|g(\beta_t,T,\delta,\epsilon)  - \nabla r(\beta_t) \right\| \leq A(\tilde{m}, \delta, \epsilon) \cdot \|\beta_t - \beta^\star\| + B(\tilde{m}, \delta, \epsilon).
\]
 Let's define $g_t :=g(\beta_t,T,\delta,\epsilon)$.  According to \cref{thm:lin-tim-robut-Mean Value Theorem}, we know that, with probability at least $1-\delta$,
\begin{equation}\label{eq: error bounf for t th iterate}
  \left\|g_t-\nabla r\left(\beta_t\right)\right\|=O\left(\sqrt{\|\Sigma\|_{\text {op }} \epsilon}\right),  
\end{equation}
where $\Sigma=\operatorname{Var}\left( \left(f(X^{\top}\beta) - y\right)f'(X^{\top}\beta)X\right)$ and  $\beta_t$ is treated as a fixed vector. Note that since a fresh sample is used to compute each $g_t$ at each time $t$, we have the following high-probability bound:
\begin{equation} \label{eq:conditional expectation}
\mathbb{P} \left( 
\left\| g_t - \nabla r(\beta_t) \right\| = O\left( 
 \sqrt{ \|\Sigma\|_{\mathrm{op}} \, \epsilon } 
\right) \, \Big| \, \beta_t \right) \geq 1 - \delta.
\end{equation}
Now, taking expectations with respect to $\beta_{t}$ on both sides of the above equation \ref{eq:conditional expectation} gives
$$\mathbb{P}\left(\left\|g_t-\nabla r\left(\beta_t\right)\right\|=O\left(\sqrt{\|\Sigma\|_{\mathrm{op}} \epsilon}\right)\right)\geq 1-\delta.$$

Using \cref{lemma: calculations of trace and operator norm of gradients}, we can say that
$\|\Sigma\|_{\mathrm{op}}\leq 6 \phi_{1}\|\beta_{t}-\beta^\star\|^2+ \sqrt{3} \sigma^2  \phi_{2}$. Now, putting the above bounds and using inequality $\sqrt{a+b} \leq \sqrt{a}+\sqrt{b}$, we get that, with probability at least $1-\delta$,
$$
\left\|g_t-\nabla r\left(\beta_t\right)\right\| \leq A(\tilde{m}, \delta, \epsilon)\left\|x_t-x^*\right\|+B(\tilde{m}, \delta, \epsilon)
$$

where $A(\tilde{m}, \delta, \epsilon)$ and $B(\tilde{m}, \delta, \epsilon)$ are defined as \begin{align*}
 A(\bar{m}, \delta, \epsilon):=O\left(\sqrt{ \phi_{1}}\sqrt{\epsilon}\right), \quad \text{and} \quad B(\bar{m}, \delta, \epsilon):=O\left(\sigma \sqrt{\phi_{2}} \sqrt{\epsilon}\right).
\end{align*}
So, $g_{t}$ is a valid robust gradient estimator of population risk at $\beta_t$. So, we have validated all assumptions of \cref{lemma:relation of current iterate to previous one}. Using \cref{lemma:relation of current iterate to previous one} we can say that the, with probability at least $1-\delta$,
\begin{equation}\label{eq:iniisde iterate proof 1}
   \left\|\beta_{t+1}-\beta^\star\right\| \leq\left(\sqrt{1-\frac{2 \eta \alpha \gamma}{\alpha+\gamma}}+\eta A(\tilde{m}, \delta, \epsilon)\right)\left\|\beta_t-\beta^\star\right\|+\eta B(\tilde{m}, \delta, \epsilon), 
\end{equation}
Now, our goal is to show that the right side of \cref{eq:iniisde iterate proof 1} is at most $R$. For this,  we show that $\sqrt{1-2 \eta \alpha \gamma /(\alpha+\gamma)}=\frac{\alpha-\gamma}{\alpha+\gamma}<1, \eta A(\tilde{m}, \delta, \epsilon) \leq \frac{\gamma}{\alpha+\gamma}$, and $\eta B(\tilde{m}, \delta, \epsilon) \leq \frac{\gamma R}{\alpha+\gamma}$ for the chosen values of $\tilde{m}$ and $\epsilon$. Then, using the induction hypothesis $\|\beta_t - \beta^\star\|\leq R$ and the above bounds, we conclude that with probability at least $1 - \delta$, the following holds:
\begin{equation}\label{eq:Gurantees of x_t needed to conclude final statements}
    \|\beta_{t+1} - \beta^\star\| \leq R.
\end{equation}
Our induction argument ends here. We now establish above bounds one by one.
\begin{itemize}
    \item  $\eta A(\bar{m}, \delta, \epsilon) \leq \gamma/\alpha+\gamma$ : Note that,

$$
\eta A(\tilde{m}, \delta, \epsilon) \leq \frac{2}{\alpha+\gamma} A(\tilde{m}, \delta, \epsilon)=\frac{\sqrt{\phi_{1}}}{\alpha+\gamma}O\left(\sqrt{\epsilon}\right) .
$$

If $\epsilon$ is chosen to be a small constant, such that $\epsilon\leq \frac{C_{2}\gamma^2}{\phi_{1}}$, then one can adjust the values of $\epsilon$ in such a way that the right-hand side of the above equation is at most $\gamma/(\alpha+\gamma)$.

\item $\eta B(\tilde{m}, \delta, \epsilon) \leq \frac{\gamma R}{\alpha+\gamma}$ : Note that, 
\begin{align*}
   \eta B(\tilde{m}, \delta, \epsilon)&=O\left(\frac{\sigma \sqrt{\phi_{2}}}{\alpha+\gamma}\left(\sqrt{\epsilon}\right)\right)=\frac{\sigma \sqrt{\phi_{2}}}{\alpha+\gamma}O\left(\sqrt{\epsilon}\right).
\end{align*}
As shown previously, if we choose $\epsilon\leq \frac{C_{3}\gamma^2 R^2}{\sigma^2\phi_{2}}$ then we can adjust the values of $\epsilon$ in a such a way that the right-hand side of the above equation is at most $\frac{\gamma R}{\alpha+\gamma}$.
\end{itemize}
Note that the above part is similar to the ``The induction" section of the proof of Theorem 3.3 from \cite{bunaR24_robust_phase}.

\textbf{Guarantees for $\beta_P$}: We have shown that with probability at least $1-P \delta$, \cref{eq:iniisde iterate proof 1} and \cref{eq:Gurantees of x_t needed to conclude final statements} hold for every $t \in\{0,1, \ldots, P-1\}$. So, we can conclude that for every $t \in\{0,1, \ldots, P-1\}$ :
\begin{align*}
    \left\|\beta_{t+1}-\beta^\star\right\|& \leq\left(\sqrt{1-\frac{2 \eta \alpha \gamma}{\alpha+\gamma}}+\eta A(\tilde{m}, \delta, \epsilon)\right)\left\|\beta_t-\beta^\star\right\|+\eta B(\tilde{m}, \delta, \epsilon)\\
    &\leq \left(\frac{\alpha}{\alpha+\gamma}\right)\left\|\beta_t-\beta^\star\right\|+\eta B(\tilde{m}, \delta, \epsilon).
\end{align*}
Iterating this over $t \in\{0,1, \ldots, P-1\}$ and using $\left\|\hat{u}-\beta^\star\right\| \leq R$, we have
\begin{align*}
\left\|\beta_P-\beta^\star\right\|&\overset{(a)}{\leq}\left(1-\frac{\gamma}{\alpha+\gamma}\right)^P\left\|\hat{u}-\beta^\star\right\|+\frac{\eta B(\tilde{m}, \delta, \epsilon)}{1-\left(\frac{\alpha}{\alpha+\gamma}\right)}\\
& \leq R \exp \left(- P\frac{ \gamma}{\alpha+\gamma}\right)+\frac{B(\tilde{m}, \delta, \epsilon)}{\frac{ \gamma}{\alpha+\gamma}},
\end{align*}
 The first and second terms of the right-hand side of step $(a)$ follow from the iteration argument and the sum of the infinite geometric series, respectively. Consider the second term. We know that $\epsilon\leq 1/2$ and this further implies
\[\frac{B(\tilde{m}, \delta, \epsilon)}{\frac{ \gamma}{\alpha+\gamma}}=\frac{\frac{\sigma \sqrt{\phi_{2}}}{\alpha+\gamma}O\left(\sqrt{\epsilon}\right)}{\frac{ \gamma}{\alpha+\gamma}}=\frac{\sigma \sqrt{\phi_{2}}}{\gamma}O\left(\sqrt{\epsilon}\right) .\]
By combining the two bounds mentioned above, we obtain
\begin{equation}\label{eq19}
   \left\|\beta_P-\beta^\star\right\|\leq R \exp \left(- P\frac{ \gamma}{\alpha+\gamma}\right)+ \frac{\sigma \sqrt{\phi_{2}}}{\gamma}O\left(\sqrt{\epsilon}\right).
\end{equation}
This implies
\begin{align*}
\left|\|\beta_{P}\|-1\right|&=\left|\|\beta_{P}\|-\|\beta^*\|\right|\le\|\beta_P-\beta^\star\|\\ 
    &\leq R \exp \left(- P\frac{ \gamma}{\alpha+\gamma}\right)+ \frac{\sigma \sqrt{\phi_{2}}}{\gamma}O\left(\sqrt{\epsilon}\right).
\end{align*}
Now,
\begin{align*}
    \left\|\frac{\beta_{P}}{\|\beta_{P}\|}-\beta^\star\right\|&=\left\|\frac{\beta_{P}}{\|\beta_{P}\|}-\beta_{P}+\beta_{P}-\beta^\star\right\|\leq \left\|\frac{\beta_{P}}{\|\beta_{P}\|}-\beta_{P}\right\|+\|\beta_{P}-\beta^\star\|\\
&=\left|\|\beta_{P}\|-1\right|+\|\beta_{P}-\beta^\star\|\\
    &\leq 2R \exp \left(- P\frac{ \gamma}{\alpha+\gamma}\right)+ \frac{2\sigma \sqrt{\phi_{2}}}{\gamma}O\left(\sqrt{\epsilon}\right).
\end{align*}
The time complexity of \cref{theorem: li-time-grads-descent} follows from time complexity of \cref{thm:lin-tim-robut-Mean Value Theorem}.
\end{proof}
\section{Numericals}\label{sec:numericals}

\begin{table}[htbp]
    \centering
    \small
    \setlength{\tabcolsep}{3pt}
    \caption{Explicit parameter values for \cref{thm:lintime}.}
    \resizebox{\textwidth}{!}{%
    \begin{tabular}{lccccccc}
        \toprule
        \textbf{Function}
        & \textbf{ESC}
        & $\boldsymbol{\mu}$
        & $\boldsymbol{\mu_1}$
        & $\boldsymbol{R}$
        & $\boldsymbol{C_{lip}(R)}$
        & $\boldsymbol{\phi_1}$
        & $\boldsymbol{\phi_2}$ \\
        \midrule
        Logistic/Sigmoid
        & $3.12 \times 10^{-2}$
        & $2.37 \times 10^{-2}$
        & $4.48 \times 10^{-2}$
        & $6.00 \times 10^{-2}$
        & $1.69 \times 10^{-2}$
        & $3.01 \times 10^{-3}$
        & $4.88 \times 10^{-2}$ \\

        Tanh
        & $1.82 \times 10^{-1}$
        & $1.08 \times 10^{-1}$
        & $4.64 \times 10^{-1}$
        & $9.50 \times 10^{-3}$
        & $4.63 \times 10^{-1}$
        & $6.49 \times 10^{-1}$
        & $5.87 \times 10^{-1}$ \\

        Probit
        & $4.59 \times 10^{-2}$
        & $3.06 \times 10^{-2}$
        & $9.19 \times 10^{-2}$
        & $2.00 \times 10^{-2}$
        & $6.29 \times 10^{-2}$
        & $1.79 \times 10^{-2}$
        & $1.07 \times 10^{-1}$ \\

        Phase Retrieval
        & $6.00 \times 10^{0}$
        & $4.00 \times 10^{0}$
        & $1.20 \times 10^{1}$
        & $3.10 \times 10^{-2}$
        & $5.43 \times 10^{0}$
        & $6.82 \times 10^{2}$
        & $7.36 \times 10^{0}$ \\

        GeLU
        & $4.86 \times 10^{-1}$
        & $4.56 \times 10^{-1}$
        & $5.78 \times 10^{-1}$
        & $3.90 \times 10^{-2}$
        & $4.96 \times 10^{-1}$
        & $1.01 \times 10^{0}$
        & $6.65 \times 10^{-1}$ \\

        Swish
        & $4.17 \times 10^{-1}$
        & $3.79 \times 10^{-1}$
        & $5.17 \times 10^{-1}$
        & $5.30 \times 10^{-2}$
        & $3.06 \times 10^{-1}$
        & $7.75 \times 10^{-1}$
        & $5.48 \times 10^{-1}$ \\
        \bottomrule
    \end{tabular}%
    }
    \label{table:explicitvalues}
\end{table}
\subsection{Minimum and maximum eigenvalue of the expected Hessian at true signal}
From the proof of \cref{lem:local_geometry}, we know that
\[ \mu = \lambda_{\min}(H(\beta^\star)) = \lambda_{\min}\left( \mathbb{E}\left[ (f'(X^\top\beta^\star))^2 XX^\top \right] \right),\quad \mu_{1} = \lambda_{\max}(H(\beta^\star)) = \lambda_{\max}\left( \mathbb{E}\left[ (f'(X^\top\beta^\star))^2 XX^\top \right] \right)  \]
Let $z^* = X^\top\beta^\star$. Since $X \sim \mathcal{N}(0, \mathbf{I}_d)$, then $z^*\sim \mathcal{N}(0, 1)$ because $\|\beta^\star\|=1$. Due to the rotational symmetry of the Gaussian distribution, the matrix $H(\beta^\star)$ has only two distinct eigenvalues:
\begin{enumerate}
    \item $\lambda_{\parallel}$: Associated with the eigenvector aligned with $\beta^\star$.
    \item $\lambda_{\perp}$: Associated with eigenvectors orthogonal to $\beta^\star$ (with multiplicity $d-1$).
\end{enumerate}
These are computed as:
\begin{align*}
    \lambda_{\perp} &= \mathbb{E}_{z^*}\left[ (f'(z^*))^2 \right] \\
    \lambda_{\parallel} &= \mathbb{E}_{z^*}\left[ (f'(z^*))^2 (z^*)^2 \right]
\end{align*}
We compute these explicitly for each link function. Thus, 
$\mu=\min\{\lambda_{\perp},\lambda_{\parallel}\}$, and 
$\mu_{1}=\max\{\lambda_{\perp},\lambda_{\parallel}\}$.

\subsection{Derivation of $C_{\mathrm{lip}}(R)$ and $R$}

For $R>0$, define
\[
M_1(R)
:=
\sup_{\beta\in\Ball{\bstar}{R}}
\left(\E\abs{f'(X^\top\beta)}^4\right)^{1/4},
\qquad
M_2(R)
:=
\sup_{\beta\in\Ball{\bstar}{R}}
\left(\E\abs{f''(X^\top\beta)}^4\right)^{1/4},
\]
and set
\[
C_{\mathrm{lip}}(R):=M_1(R)M_2(R).
\]

Since $X\sim\mathcal N(0,\Id)$, the random variable $X^\top\beta$ has the
same distribution as $\norm{\beta}_2 Z$, where $Z\sim\mathcal N(0,1)$.
Moreover, $\norm{\bstar}_2=1$ and $\norm{\beta-\bstar}_2\le R$ imply
\[
\norm{\beta}_2\in I_R
:=
\left[\max\{0,1-R\},\,1+R\right].
\]
Consequently,
\[
M_1(R)
=
\sup_{s\in I_R}
\left(\E\abs{f'(sZ)}^4\right)^{1/4},
\qquad
M_2(R)
=
\sup_{s\in I_R}
\left(\E\abs{f''(sZ)}^4\right)^{1/4}.
\]
Thus,
\begin{equation}\label{eq:Clip-one-dimensional}
C_{\mathrm{lip}}(R)
=
\left(
\sup_{s\in I_R}\E\abs{f'(sZ)}^4
\right)^{1/4}
\left(
\sup_{s\in I_R}\E\abs{f''(sZ)}^4
\right)^{1/4}.
\end{equation}
In particular, $C_{\mathrm{lip}}(R)$ is determined entirely by
one-dimensional Gaussian integrals and is independent of the ambient
dimension.

Recall that
\[
\mu
:=
\min\left\{
\E[f'(Z)^2],
\E[Z^2f'(Z)^2]
\right\}.
\]
The proof of \cref{lem:local_geometry} gives
\[
\norm{H(\beta)-H(\bstar)}_{\mathrm{op}}
\le
3\sqrt{15}\,C_{\mathrm{lip}}(R)
\norm{\beta-\bstar}_2
\]
for every $\beta\in\Ball{\bstar}{R}$. Hence it is sufficient to choose
$R$ such that
\begin{equation}\label{eq:radius-condition-computation}
6\sqrt{15}\,R C_{\mathrm{lip}}(R)\le\mu.
\end{equation}
Indeed, this ensures
$\norm{H(\beta)-H(\bstar)}_{\mathrm{op}}\le\mu/2$ throughout the ball.

For each link function, we therefore compute $C_{\mathrm{lip}}(R)$ from
\eqref{eq:Clip-one-dimensional} and select a radius satisfying
\eqref{eq:radius-condition-computation}. The largest radius certified by
this argument is
\[
R_{\max}
:=
\sup\left\{
r>0:
6\sqrt{15}\,r C_{\mathrm{lip}}(r)\le\mu
\right\}.
\]
This is a one-dimensional optimization problem and can be solved
numerically.

When $f'$ and $f''$ are globally bounded by $B_1$ and $B_2$,
respectively, the simpler estimate
$C_{\mathrm{lip}}(R)\le B_1B_2$ gives the conservative radius
\[
R\le\frac{\mu}{6\sqrt{15}\,B_1B_2}.
\]
The direct Gaussian-moment calculation above generally gives a sharper
radius and applies equally to links with unbounded derivatives, provided
the required fourth moments are finite.

\subsection{Derivation of $c$, $\phi_1$, and $\phi_2$}

The spectral parameter is
\[
c
:=
\operatorname{ESC}(\bstar;f)
=
\E\!\left[f'(Z)^2+f(Z)f''(Z)\right],
\qquad
Z\sim\mathcal N(0,1),
\]
where we used $\norm{\bstar}_2=1$.

The gradient-moment parameters in
\cref{theorem: li-time-grads-descent} are
\[
\phi_1
:=
\sup_{\beta\in\Ball{\bstar}{R}}
\left(\E\abs{f'(X^\top\beta)}^{16}\right)^{1/4},
\qquad
\phi_2
:=
\sup_{\beta\in\Ball{\bstar}{R}}
\left(\E\abs{f'(X^\top\beta)}^4\right)^{1/2}.
\]
As above, $X^\top\beta$ has the same distribution as
$\norm{\beta}_2 Z$, with $\norm{\beta}_2\in I_R$. Therefore,
\begin{equation}\label{eq:phi-one-dimensional}
\phi_1
=
\sup_{s\in I_R}
\left(\E\abs{f'(sZ)}^{16}\right)^{1/4},
\qquad
\phi_2
=
\sup_{s\in I_R}
\left(\E\abs{f'(sZ)}^4\right)^{1/2}.
\end{equation}
Once the basin radius $R$ has been fixed, both constants are obtained
from one-dimensional Gaussian integrals and an optimization over the
compact interval $I_R$. If $\abs{f'(z)}\le B$ for every $z$, then the simple bounds $\phi_1\le B^4$, $\phi_2\le B^2$ follow immediately. These bounds are valid but may be loose; the values
reported in \cref{table:explicitvalues} are obtained by evaluating
\eqref{eq:phi-one-dimensional} directly.
For phase retrieval, $f(z)=z^2$ and $f'(z)=2z$. Since Gaussian moments
increase with $s\ge0$, the suprema occur at $s=1+R$, giving $c=6,$ $
\phi_1
=
16(15!!)^{1/4}(1+R)^4,
$ $
\phi_2
=
4\sqrt{3}\,(1+R)^2.$

\end{document}